\documentclass[a4paper]{article}

\usepackage{url}
\usepackage{graphicx} 
\usepackage{amsmath} 
\usepackage{amssymb}  
\usepackage{amsthm}
\usepackage{multirow}
\usepackage{algorithm}
\usepackage{algpseudocode}
\usepackage{xcolor}
\usepackage{hyperref}
\usepackage{subcaption}
\usepackage{wrapfig}
\usepackage{cancel}
\usepackage[verbose, hyperpageref]{backref}

\setlength{\tabcolsep}{1pt}
\usepackage[includeheadfoot]{geometry}
\geometry{
	a4paper,
	total={170mm,257mm},
	left=20mm,
	top=20mm,
}

\usepackage[title]{appendix}
\usepackage[font=normalsize,labelfont=bf]{caption}
\definecolor{hibiscus}{RGB}{176,48,96}
\definecolor{emeraldgreen}{RGB}{0,155,119}
\definecolor{violet}{RGB}{238,130,238}
\definecolor{inkscapePurple}{RGB}{128,0,128}


\DeclareMathOperator*{\argmax}{arg\,max}

\newcommand{\bydef}{\triangleq}
\newcommand{\probd}{\mathbb{P}}
\newcommand{\prob}{\mathrm{P}} 

\newcommand{\lb}{\mathrm{LB}}
\newcommand{\ub}{\mathrm{UB}}

\renewcommand{\qed}{$\blacksquare$}
\theoremstyle{plain}
\newtheorem{thm}{\protect\theoremname}
\theoremstyle{remark}

\theoremstyle{plain}
\newtheorem{lem}{\protect\lemmaname}
\theoremstyle{definition}

\providecommand{\claimname}{Claim}
\providecommand{\definitionname}{Definition}
\providecommand{\lemmaname}{Lemma}
\providecommand{\theoremname}{Theorem}



\begin{document}

\title{\LARGE \bf Risk Aware Adaptive Belief-dependent Probabilistically Constrained Continuous POMDP Planning\thanks{This work was partially supported by the Israel Science Foundation (ISF).}}

\author{Andrey Zhitnikov$^{1}$, \quad Vadim Indelman$^2$ \\
	$^1$Technion Autonomous Systems Program (TASP) \\
	$^2$Department of Aerospace Engineering\\
	Technion - Israel Institute of Technology, Haifa 32000, Israel\\
	\small{\texttt{andreyz@campus.technion.ac.il, vadim.indelman@technion.ac.il}}}
	
\maketitle

\begin{abstract}
	Although risk awareness is fundamental to an online operating agent, it has received less attention in the challenging continuous domain and under partial observability. This paper presents a novel formulation and solution for risk-averse belief-dependent probabilistically constrained continuous POMDP. We tackle a demanding setting of belief-dependent reward and constraint operators.   The probabilistic confidence parameter makes our formulation genuinely risk-averse and much more flexible than the state-of-the-art chance constraint. 
	Our rigorous analysis shows that in the stiffest probabilistic confidence case, our formulation is very close to chance constraint. However, our probabilistic formulation allows much faster and more accurate adaptive acceptance or pruning of actions fulfilling or violating the constraint. In addition, with an arbitrary confidence parameter, we did not find any analogs to our approach. We present algorithms for the solution of our formulation in continuous domains. We also uplift the chance-constrained approach to continuous environments using importance sampling. Moreover, all our presented algorithms can be used with parametric and nonparametric beliefs represented by particles. Last but not least, we contribute, rigorously analyze and simulate an approximation of chance-constrained continuous POMDP. The simulations demonstrate that our algorithms exhibit unprecedented celerity compared to the baseline, with the same performance in terms of collisions.   
\end{abstract}
\newpage
\tableofcontents
\newpage
\section{Introduction}
\begin{wrapfigure}{r}{0.5\textwidth}
	\centering 
	\includegraphics[width=0.48\textwidth]{./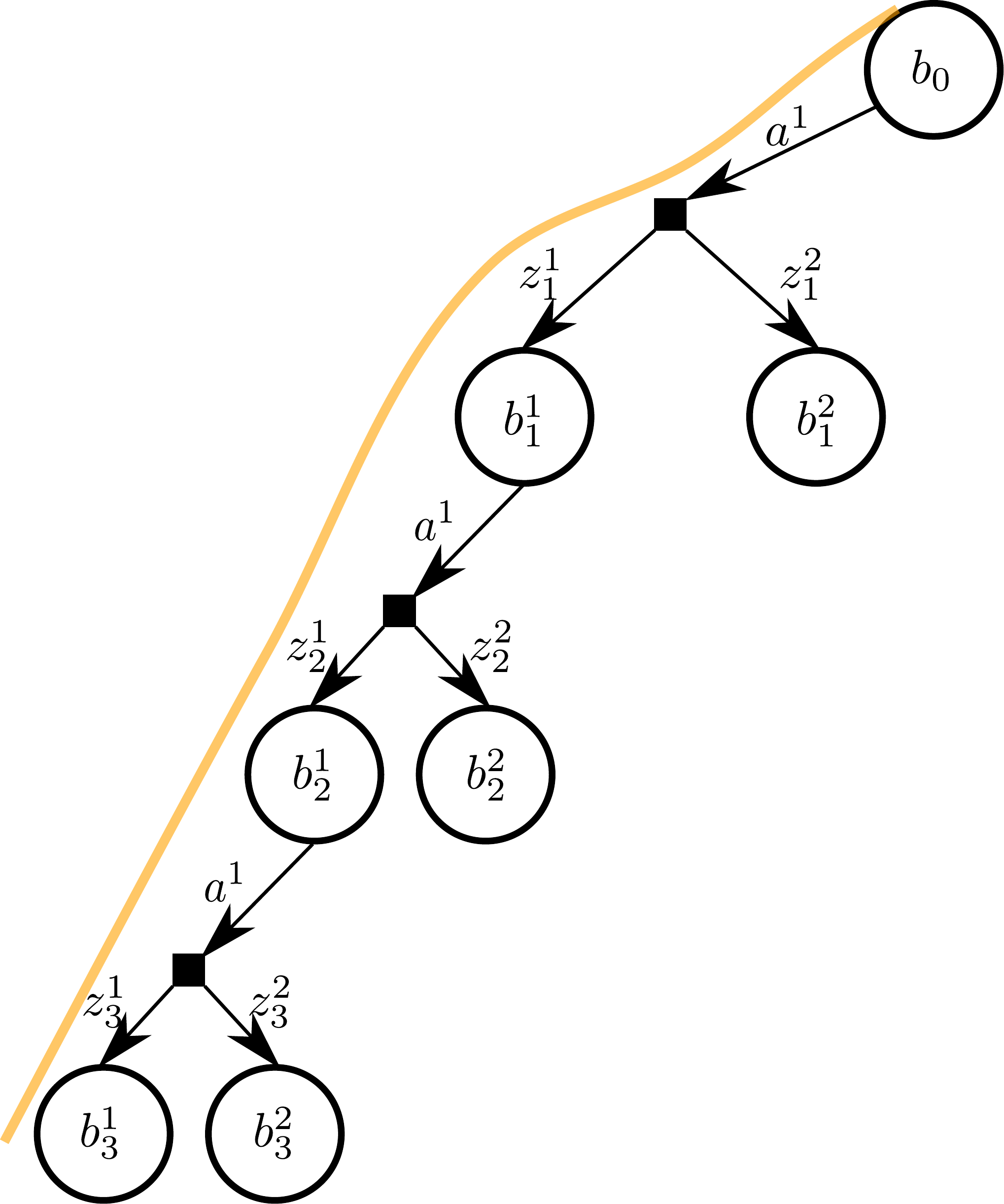}
	\caption{ Illustration  of a belief tree. The yellow lace depicts a sequence of beliefs $(b_0, b_1^1,b_2^1, b_3^1)$ generated by a corresponding sequence of  observations under some policy (Section \ref{sec:Framework}). }
	\label{fig:lace}
\end{wrapfigure} 
Decision making under uncertainty in partially observable domains is a key capability for reliable autonomous agents. Commonly, the basis of the state-of-the-art algorithms for decision making under uncertainty is the Partially Observable Markov Decision Process (POMDP). The robot does not have access to its state. Instead, it maintains a distribution, named the belief, over the state given all its current information, namely, the history of actions and the observations alongside the prior belief. The decision maker shall maintain and reason about the evolution of the belief within the planning phase. At the same time, the robot's online goal is to find an optimal action for its current belief. Unfortunately, an exact solution of POMDP is unfeasible \cite{Papadimitriou87math}.
A critical limitation of the classical POMDP formulation is the assumption that the belief-dependent reward is nothing more than the expectation of state-dependent reward with respect to belief \cite{Kochenderfer22book}. Another limiting assumption in many state-of-the-art algorithms is the discrete domain, e.g., discrete state and observation spaces \cite{Silver10nips}, \cite{Ye17jair}. In contrast, we tackle the continuous domain in terms of state and observation spaces.

Augmenting POMDP with general belief-dependent rewards is a long-standing problem. Unravelling it would allow information theoretic rewards, which are extremely important in numerous problems in Artificial Intelligence (AI) and Robotics, such as autonomous exploration, Belief Space Planning (BSP) \cite{Indelman15ijrr}, and active Simultaneous Localization and Mapping (SLAM) \cite{Placed22arxiv}. The belief-dependent reward formulation is known as $\rho$-POMDP \cite{Araya10nips}, \cite{Fehr18nips}.  Earlier techniques focused on offline solvers and extended $\alpha$-vectors approach to piecewise linear and convex \cite{Araya10nips}, \cite{Dressel17icaps} or Lipschitz-continuous rewards \cite{Fehr18nips}. These extended solvers are also limited to discrete domains in terms of states and observations.

Continuous spaces and general belief-dependent rewards render many off-the-shelf POMDP solvers not applicable. Another way to incorporate general belief-dependent rewards is to reformulate POMDP as Belief-MDP (BMDP) and use more recent online solvers designed for MDP. These algorithms are suitable for continuous domains and challenging nonparametric beliefs represented by particles.
Seminal approaches in this category are Sparse Sampling (SS) \cite{Kearns02jml}, Monte Carlo Tree Search (MCTS) \cite{Sunberg18icaps}, and its efficient, simplified variant \cite{Sztyglic21arxiv_b}. Such an MCTS running on BMDP is called a Particle Filter Tree with Double Progressive Widening (PFT-DPW) \cite{Sunberg18icaps}. 

The described limitation of discrete  state and observation spaces also applies to recently appeared chance constrained approaches, gaining attention. The motivation to add the chance constraints is to introduce the notion of \emph{risk} into the problem. Initially, the planning community focused on collision avoidance, formulating it as a chance constraint. For example, \cite{Bry11icra} is limited to Gaussian parameterized beliefs. Applying Kalman filter on the linearized system, the \cite{Bry11icra} tracks the distribution (modeled as a Gussian) over possible state estimates alongside the (Gaussian) posterior beliefs. Each posterior, conditioned on the history of actions and observations, is then regarded as conditional on the sole state estimate and multiplied by the probability density of the state estimate. This way, the joint distribution of state and state estimates is obtained. The chance constraint is calculated on the marginal of such a joint distribution which is the probability of the state propagated solely with actions and without the observations. In this way, on the selected by \cite{Bry11icra} optimal action sequence, the probability of collision is the probability of the robot's final state obtained solely with actions and without the observations.

 Such a formulation does not apply to general beliefs represented by particles, nor is it a truly risk-averse constraint. We extensively debate this claim in the paper.

More recent works examine a discrete domain and a belief-dependent constraint being the first moment of the state-dependent constraint \cite{Lee18nips}. Moreover, \cite{Lee18nips} provides only a local solution. Another line of work considers chance constraints \cite{Santana16aaai}. The paper  \cite{Santana16aaai} introduces the algorithm RAO* which uses admissible heuristics for the action value function ($Q$-function) in the belief space. This aspect is problematic with {\bf general belief-dependent rewards}. However, this is not a  main point of this work.

Moreover,  the RAO* algorithm prunes not feasible actions using a necessary condition of the feasibility of chance constraint  (See Appendix \ref{sec:ExecutionRiskBound}) .  In other words, the chance constraint may be violated, but the action has not been pruned. In fact, this is one of our main points in this paper. Since the condition is only necessary, after pruning it is still required to verify feasibility of each kept action. This significantly complicates the algorithmic and inflict unnecessary computational burden. It would be much easier if after pruning we would know that remained actions satisfy the constraint.   

Similar to the situation with belief-dependent rewards reformulation POMDP as Belief-MDP (BMDP) can possibly allow employing approaches designed for Probabilistically constrained MDP \cite{Han22arxiv}, \cite{Frey20arxiv}.  However, the theory presented in these papers does not apply to parametric nor nonparametric Belief-MDP due to various assumptions made by the authors.  This is one of the gaps we aim to fill in this work. 

In this paper, we formulate a novel two-staged approach to enhance continuous belief-dependent POMDP with belief-dependent probabilistic constraint. Typically algorithms designed for general beliefs represent the belief as a set of particles and use a particle filter \cite{Thrun05book} for nonparametric Bayesian updates. 
In this work, we assume the setting of nonparametric beliefs, although our formulations also support a parametric setting.

\subsection{Comparison to Chance Constraint}
Most works that consider constrained online planning under uncertainty utilize the chance-constrained formulation. This formulation is regarded as state-of-the-art. The chance constraints are formulated with respect to agent trajectories, such that only the safe state spaces are pushed forward into the future. By design, the chance constraint is averaging the safe trajectories mixing together the trajectories originating from different future observations (posterior beliefs). Due to this averaging, the chance constraint is oblivious to the distribution of future observations (posterior beliefs). This drawback is present in \cite{Bry11icra} but fixed to some extent in \cite{Santana16aaai} by enforcing the chance constraint starting from each nonterminal belief in the belief tree. Therefore the Chance Constraint is accessing the beliefs in the tree solely with a resolution of subtrees (Fig.~\ref{fig:ChanceConstraint}).  
We redevelop the Chance Constraint from the BMDP point of view and prove this claim.

Moreover, since the origin of chance constrained formulation are safe trajectories, it needs to be clarified how to extend chance constraint to a general belief-dependent operator. Last but not least, by examining the chance constraint on the level of posterior beliefs, we observe that only the safe portion of the belief is pushed forward to the future time with action and the observation. In other words, we have different distributions of future observations for rewards and the chance constraint.   This fact significantly complicates the algorithmics.

In contrast, we formulate our probabilistic constraint on the level of posterior beliefs. This allows us to utilize general belief-dependent operators. However, in our setting, the unsafe portion of the belief is also updated with action and observation, such that unsafe states can be pushed forward if such an action is not discarded. Instead of a safe trajectory of the future states, we have a safe trajectory of future beliefs. This way, we have an identical distribution of future observations for belief-dependent rewards and constraints. As we further see, this is highly beneficial in terms of time efficiency.

\subsection{Contributions}
The previous discussion leads us to the contributions of this work. 
\begin{itemize}
	\item  Firstly,  we formulate a risk-averse belief-dependent probabilstically constrained continuous POMDP. Averaging the state-dependent reward/constraint to obtain the belief-dependent reward/constraint is a severe hindrance that we relax.  We are unaware of prior works addressing POMDP with risk-averse or belief-dependent constraints (even with expectation). Our constraint is more general compared to previous approaches:  
	The proposed constraint is probabilistic and {\bf future distribution aware}, whereas the state-of-the-art constrained formulations devise the constraint as an expectation with respect to observations. In particular, our probabilistic belief-dependent constraint supports risk-averse operators, such as Conditional Value at Risk (CVaR),  and leads to a novel safety constraint formulation.   
	\item Secondly, we contribute algorithms for online solution of Probabilisticaly Constrained belief dependent POMDP in continuous domains. Our algorithms are adaptive given the budget of observation laces to expand in the belief tree. In other words, we provide a way to guide the belief tree construction while planning. Our framework is universal for challenging continuous domains and can be applied in nonparametric and parametric settings.
	\item   On top of our probabilistic formulation, we contribute a novel, efficient actions-pruning mechanism.  Previous pruning techniques proposed by \cite{Santana16aaai} constitute only a necessary condition such that it is possible that after  pruning, actions violating the chance constraint are kept in the belief tree.  Therefore, in these techniques, the feasibility of chance constraint has to still be inspected for each action after the pruning. On the contrary,  our pruning is necessary and sufficient. No addional checks needed after the pruning is complete. 
	\item We present a rigorous analysis of our probabilistic formulation versus chance constrained. 
	\item We uplift a chance constrained solver to continuous domains and general belief dependent rewards through Importance Sampling.
	\item We suggest an approximation to continuous chance constrained problem  accompanied by guarantees under rather mild assumptions.
	\item We present a detailed and comprehensive study of nonparametric collision avoidance. 
\end{itemize}
\subsection{Paper Layout}
The rest of this paper is organized as follows. We start from preliminaries in section \ref{sec:Preliminaries}. We then define our novel framework in section \ref{sec:Framework}, and give relevant examples of possible constraints in section \ref{sec:PossibleConstraints}. Next, in section \ref{sec:OuterConstrEvalAndAlgorithms} we adaptively evaluate the probabilistic constraint while constructing the belief tree and present an online algorithms (section \ref{sec:PCSS}) for our novel formulation.  
Further, we rigorously analyze the conventional chance constraint in section \ref{sec:RelationToChance} and compare to our probabilistic .  Finally, in section \ref{sec:ChanceConstrainedContPOMDP} we introduce the state-of-the-art online solver for chance-constrained POMDP in continuous setting augmented with belief dependent rewards. Section \ref{sec:ChanceApprox} devoted to its efficient approximation. 
Eventually, section \ref{sec:Results} show simulations and results. The conclusions and final remarks are presented in section \ref{sec:Conclusions}. To allow fluid reading, we placed the proofs for all theorems and lemmas, and additional in-depth discussions in the appendix.
\section{Preliminaries} \label{sec:Preliminaries}
The $\rho$-POMDP is a tuple $(\mathcal{X}, \mathcal{A}, \mathcal{Z}, T, O, \rho, \gamma, b_0)$ where $\mathcal{X}, \mathcal{A}, \mathcal{Z}$ denote state, action,
and observation spaces with $x \! \in \! \mathcal{X}$, $a \! \in \! \mathcal{A}$, $z \! \in  \! \mathcal{Z}$ the momentary state, action, and observation, respectively. $T(x' , a, x)  =  \probd_T(x' | x, a)$ is a stochastic transition model from the past state $x$ to the subsequent $x'$ through action
$a$, $O(z , x)  =  \probd_Z(z|x)$ is the stochastic observation model, $\gamma \in (0, 1]$ is the discount factor, $b_0$ is the belief over the
initial state (prior), and $\rho$ is the belief-dependent reward operator. Let $h_k$ be a history, of actions and observations alongside the prior belief, obtained 
by the agent up to time instance $k$. The posterior belief $b_k$ is a shorthand for  
the probability density function of the state given all information up to current time index $b_k(x_k) \bydef  p(x_k |h_k)$.
The policy is a, indexed by the time instances, mapping from belief to action to be executed $\pi_k : \mathcal{B} \mapsto \mathcal{A}$, where $\mathcal{B}$ is the space of all the beliefs taken into account in the problem. The policy for $L$ consecutive steps ahead is denoted by $\pi_{k:k+L-1}$. Sometimes we will omit the time indices for clarity and write $\pi$. We hope the time indices will be evident from the context.
When an information theoretic reward, for instance, information gain, is introduced to the problem, the reward can assume the following form $\rho(b, a, z', b')\! =\! (1 \! - \! \lambda)r^x(b, a)\! + \! \lambda r^I(b, a, z', b')$, in this case it is a function of two subsequent beliefs, an action, and an observation. Note that in the setting of nonparametric beliefs, we shall resort to sampling approximations using $m_x$ samples of the belief. Such a reward is comprised of the expectation over the state and action dependent reward
\begin{align} \label{eq:StateDepRew}
	&  r^{x}(b, a)= \mathbb{E}_{x \sim b}{[r^x(x, a)}] \approx \frac{1}{m_x}\sum_{i=1}^{m_x} r^x(x^i, a),
\end{align}
and the information-theoretic reward $r^I(\cdot)$ weighted by $\lambda$, which in general can be dependent on consecutive beliefs and the elements relating them (e.g. information gain).
The online decision making goal is to find an action to execute, maximizing the action value function 
\begin{align}
	Q^{\pi}(b_k, a_k) =  \underset{z_{k+1}}{\mathbb{E}}\bigg[\rho(b_k, a_k, z_{k+1}, b_{k+1}) +  V^{\pi}(b_{k+1})\bigg| b_k, a_k\bigg], \label{eq:Qfunc}
\end{align} 
where $\pi$ is the policy and the value function 
\begin{align}
	V^{\pi}(b_{k}) = \underset{z_{k+1:k+L}}{\mathbb{E}}\Bigg[\sum_{\ell=k}^{L-1}\rho(b_{\ell}, a_{\ell}, z_{\ell+1}, b_{\ell+1})\Bigg| b_k, \pi \Bigg],
\end{align}
is expected cumulative reward under the particular policy $\pi$.

When the agent performs an action and receives an observation, it shall update its belief from $b$ to $b'$. Let us denote the update operator by $\psi$ such that $b'=\psi(b, a, z')$. In our context, it will be a particle filter since we focus on the setting of nonparametric beliefs. Moreover, we define a propagated belief $b'^{-}$ as the belief $b$ after the robot performed an action $a$ and before it received and observation. 

In this paper, we present a new risk-averse decision making problem. We augment the $\rho$-POMDP \cite{Araya10nips}  objective with a novel, probabilistic general belief-dependent constraint. To our knowledge, all previous chance constrained formulations such as \cite{Bry11icra}, \cite{Santana16aaai} suffer from limiting assumptions. To be specific, they perform averaging over the state trajectories as we unveil in this work and therefore are not distribution-aware formulations. Moreover, a general belief-dependent constraint was not studied nor proposed. Nevertheless, such a  constraint is of the highest importance. For instance, as we discuss in \cite{Zhitnikov23arxiv}, such a formulation can be used to determine when to stop exploration, e.g.~in an active SLAM context, which is an open problem currently \cite{Cadena16tro}, \cite{Placed22arxiv}. 

Risk-averse planning has been actively investigated \cite{Chow15nips}, \cite{Zhitnikov22ai}, but risk aversion was not considered for the constraint to the best of our knowledge.	
\section{Continuous $\rho$-POMDP enhanced with Probabilistic Belief Dependent Constraints} \label{sec:Framework}
Further we formulate the problem and in due course give examples of possible belief dependent constraints. 
\subsection{Problem Formulation}
In this work we augment the classical belief dependent formulation described above with a general probabilistic belief-dependent constraint. We introduce a new problem with the following objective
\begin{align}
	&a^*_{k} \in \arg\max_{a_k \in \mathcal{A}} \{ Q^{\pi^*}(b_k, a_k)\} \text{ \ \ subject to} \label{eq:ConstrObj} \\
	&\prob\left(c(b_{k:k+L}; \phi,\delta)=1\mid b_k, \pi^*_{k+1:k+L-1}, a_k  \right) \geq  1-\epsilon, \label{eq:OuterConstr}
\end{align}
where $c$ is a Bernoulli random variable. By $\pi^*$ we denote the belief tree policy defined by the planning algorithm.

Further, we will regard the probabilistic constraint \eqref{eq:OuterConstr} as the outer constraint. It requires two parameters,  $\epsilon$, and $\delta$. The former, $\epsilon$, is the probability margin within we permit to the future, rendered by possible future observations generating the beliefs (see Fig.~\ref{fig:lace}), violate the constraint, in other words, to be unprofitable or unsafe. The parameter $\delta$ is the margin for some particular sequence of the beliefs $b_{k:k+L}$. With a high probability of at least $1-\epsilon$, we want the received sequence of future posterior beliefs to fulfill the constraint.  

The inner constraint can be of  two forms. The first form is cumulative
\begin{align} 
	c(b_{k:k+L}; \phi, \delta) \bydef \mathbf{1}_{\left\{\left(\sum_{\ell=k}^{k+L-1} \phi(b_{\ell+1}, b_{\ell})\right) >  \delta\right\}}, \label{eq:InnerConstr1} 
\end{align}
and the second is multiplicative
\begin{align} 
	c(b_{k:k+L};\phi,\delta) \bydef \prod_{\ell=k}^{k+L} \mathbf{1}_{\left\{\phi(b_{\ell}) \geq \delta \right\}}, \label{eq:InnerConstr2} 
\end{align} 
where $\phi$ denotes a general belief-dependent operator. 
Let us interpret the two forms,  \eqref{eq:InnerConstr1} and \eqref{eq:InnerConstr2}.
The first form \eqref{eq:InnerConstr1} is formulated with respect to a cumulative value of the operator $\phi$ along a sequence of  beliefs generated by a sequence of possible future observations. In this form we permit immediate value of the operator $\phi$ to deviate but the cumulative value shall fulfill the inequality \eqref{eq:InnerConstr1}.  In contrast, \eqref{eq:InnerConstr2} states that every value of $\phi$ in the sequence of the beliefs shall fulfill the inequality \eqref{eq:InnerConstr2}, meaning to be larger than or equal to $\delta$. Both formulations are novel, to the best of our knowledge. Furthermore, the form of \eqref{eq:InnerConstr1} is motivated by the long standing question of stopping exploration \cite{Cadena16tro}. The form of \eqref{eq:InnerConstr2} is motivated by  safety, e.g, collision avoidance.

From now on, for clarity, in the constraint, we will use  $\pi^*$ instead of  $\pi^*_{k+1:k+L-1}$. When the problem \eqref{eq:ConstrObj} is augmented with the probabilistic constraint \eqref{eq:OuterConstr}, ideally, every selection of the action following the policy shall take into account the constraint at the root of the belief tree.  

\subsection{Possible Constraints} \label{sec:PossibleConstraints}
Subsequent to the formulation of the problem, in due course, we focus on possible operators $\phi$ as a constraint.   
One important example is a  safety constraint, e.g., collision avoidance or energy consumption.
We propose the following formulation,   
\begin{align}
	&\prob\big(\mathbf{1}_{\left\{ b_{k:k+L} \in \mathcal{B}^{\text{safe}}_{k:k+L}   \right\}}|b_k, \pi_{k+1:k+L-1}, a_k  \big)  \geq  1-\epsilon, \label{eq:ProbSafety}
\end{align}
where $\mathcal{B}^{\text{safe}}_{k:k+L}$ is the space of safe belief sequences starting at time index $k$ and of length $L$.
To relate to \eqref{eq:OuterConstr}, in \eqref{eq:ProbSafety}:  $c(b_{k:k+L} ) \bydef \mathbf{1}_{\left\{  b_{k:k+L} \in \mathcal{B}^{\text{safe}}_{k:k+L}  \right\}}$.
Further, we show explicitly why this formulation is advantageous over previous formulations of safety constraint.
The safeness of a sequence of beliefs $b_{k+1:k+L}$ can be defined in various ways. One possibility is 
\begin{align}
	& \mathbf{1}_{\left\{  b_{k:k+L} \in \mathcal{B}^{\text{safe}}_{k:k+L} \right\}} \triangleq \textstyle\prod_{i=k}^{k+L}\mathbf{1} \left\{ \prob\big(\mathbf{1}_{\left\{x_{i}\in \mathcal{X}^{\text{safe}}_{i}\right\}}| b_{i}\big) \geq \delta\right\}, \label{eq:ProbSafeTraj}
\end{align}
where $\mathcal{X}^{\mathrm{safe}}_{i}$ is the safe space, which generally can be time-dependent, e.g., due to moving obstacles in the context of collision avoidance.

Another possibility is to use the Conditional Value at Risk (CVaR) operator for collision avoidance as $-\phi$ (minus sign is needed merely to maintain $\geq$ in \eqref{eq:InnerConstr2}). We define the deviation of the robot’s position
from the safe region $\mathcal{Y}^{\ell}$ considering the obstacle $\ell$ as follows
$	\mathrm{dist}(x,\mathcal{Y}^{\ell}) = \min_{y \in \mathcal{Y}^{\ell}} \|x - y\|_2$.
Note that $\bigcap_{\ell=1}^M \mathcal{Y}^{\ell} = \mathcal{X}^{\mathrm{safe}}$ for $M$ obstacles. 
The following belief-dependent constraint 
$\mathrm{CVaR}_{\alpha}[\mathrm{dist}(x,\mathcal{Y}^{\ell})] \leq \delta $ 
assures the safety,
where $x \sim b. $
Now the event to be safe is
\begin{align} \label{eq:cvarcoll}
	&\!\!\mathbf{1}_{\left\{  b_{k:k+L} \in \mathcal{B}^{\text{safe}}_{k:k+L} \right\}}\!\! \bydef \!\! \prod_{i=k}^{k+L} \! \prod_{\ell=1}^M \! \mathbf{1}_{\left\{-\mathrm{CVaR}_{\alpha}[\mathrm{dist}(x_i,\mathcal{Y}^{\ell}_i)] \geq \delta. \right\}},
\end{align}  
Note that the CVaR operator cannot be represented as the expectation over the observations. Therefore this is a general belief-dependent constraint and \emph{not supported} by existing constrained POMDP approaches. Let us explain the meaning of such a constraint. Let $\zeta = \mathrm{dist}(x,\mathcal{Y}^{\ell})$.
By definition
$\mathrm{CVaR}_{\alpha}[\zeta] \bydef \mathbb{E}[\zeta|\zeta\geq \mathrm{VaR}_{\alpha}(\zeta)]$,
where the value-at-risk $\mathrm{VaR}_{\alpha}(\zeta)$ at confidence level $\alpha$ is the   $1-\alpha$ quantile of  $\zeta$, namely,  
$$\mathrm{VaR}_{\alpha}[\zeta] \bydef \min \{ \xi | \prob (\zeta\leq \xi) \geq 1-\alpha\}.$$
The value $\mathrm{VaR}_{\alpha}[\mathrm{dist}(x,\mathcal{Y}^{\ell})]$ is the minimal value such that with probability at least $1- \alpha$ the deviation from the safe space considering one obstacle is smaller than or equal it. The $\mathrm{CVaR}$ is taking the average of the unsafe  tail. Meaning if the unsafe tail is extremely unsafe but with low probability, such a constraint will catch that. The constraint formulated with probabilities of \eqref{eq:ProbSafeTraj} as well as conventional \cite{Santana16aaai} is unable to distinguish such a behavior. The distribution over the unsafe part of the beliefs is unaccessible. We note that such a constraint was suggested by \cite{Hakobyan20icra}, in the setting of randomly moving obstacles. However  \cite{Hakobyan20icra} assumes a fully observable state and linear models, and not the general POMDP setting considered herein. 

Another example of a general belief-dependent constraint is  Information Gain ($\mathrm{IG}$),  defined as follows 
\begin{align}
	\mathrm{IG}(b, a, z', b') = -\mathcal{H}(b') + \mathcal{H}(b), \label{eq:IG}
\end{align}
where $\mathcal{H}(\cdot)$ denotes differential entropy. Utilizing this constraint with the form \eqref{eq:InnerConstr1} allows one to reason if the cumulative information gain along a planning horizon is significant enough (above threshold $\delta$) with the probability of at least $1-\epsilon$. 
Such a capability has a number of implications. For instance, in the context of informative planning and active SLAM, instead of prompting the agent to maximize its information gain, we can require that it does so only if it is able to decrease  uncertainty in some tangible amount. This is a new concept made possible by our general formulation, which therefore can be used to identify, e.g., when to stop exploration \cite{Zhitnikov23arxiv}.  

Let us discuss one more important constraint, the probability of reaching the goal (see, e.g., \cite{Bry11icra}).  Throughout the manuscript, for clarity, we assumed that the operator $\phi$ is identical for all time indices. We now relax that assumption and redefine the constraint of the first form as follows\footnote{We denote $f \equiv g$ for two operators, if we have $f(x)=g(x) \quad \forall x$.} 
\begin{align} 
	c(b_{k:k+L};\phi_{k:k+L},\delta) \bydef \mathbf{1}\left\{\left(\sum_{\ell=k}^{k+L-1} \phi_{\ell+1}(b_{\ell+1}, b_{\ell})\right) \geq  \delta\right\}. \label{eq:Constr1TimeInd} 
\end{align}
Further, let $\phi_{\ell+1}(\cdot) \equiv 0 \quad \forall \ell \in k: k+L-2$ and 
\begin{align}
	\phi_{k+L}(b_{k+L}) = \prob (x_{k+L} \in \mathcal{X}^{\text{goal}}| b_{k+L}), \label{eq:ReachingTheGoalProb}
\end{align}
where \eqref{eq:ReachingTheGoalProb} defines the task of reaching the goal.

\section{Approach to Continuous $\rho$-POMDP with Belief Dependent Probabilistic Constraints}\label{sec:OuterConstrEvalAndAlgorithms}
Having presented our problem formulation and the examples of possible belief-dependent operators to serve as an inner constraint, we are keen to proceed into the adaptive approach to precisely evaluate the sample approximation of our probabilistic constraint.
\subsection{Coupled Outer Constraint Evaluation and Belief Tree Construction} 
In this section, we delve into the evaluation of our novel formulation of the probabilistic  constraint \eqref{eq:OuterConstr}.  We start by presenting a helpful lemma.
\begin{lem}[Representation of our outer constraint] \label{lem:Represent}
	\begin{align}
		&\prob\left(c(b_{k:k+L};\phi, \delta)|b_k, \pi_{k+1:k+L-1}, a_k  \right)  = \underset{z_{k+1:k+L}}{\mathbb{E}}\left[c(b_{k:k+L}; \phi, \delta) \bigg|b_k, \pi_{k+1:k+L-1}, a_k\right]. \label{eq:ConstraintExp}
	\end{align}
\end{lem}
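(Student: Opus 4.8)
The statement to prove is that the probability of a Bernoulli event equals its expectation. The plan is to observe that $c(b_{k:k+L};\phi,\delta)$ is, by construction in \eqref{eq:InnerConstr1} and \eqref{eq:InnerConstr2}, an indicator-valued (equivalently, Bernoulli) random variable taking values in $\{0,1\}$, and that the sole source of randomness in $c$ conditioned on $b_k$, $a_k$, and the policy $\pi_{k+1:k+L-1}$ is the sequence of future observations $z_{k+1:k+L}$, since each belief $b_{\ell+1}=\psi(b_\ell,a_\ell,z_{\ell+1})$ is a deterministic function of the preceding belief, the action dictated by the policy, and the realized observation.

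First I would make precise the conditioning: given $b_k$, the action $a_k$, and the policy $\pi_{k+1:k+L-1}$, the entire belief lace $b_{k:k+L}$ is determined by $z_{k+1:k+L}$ through repeated application of $\psi$. Hence $c(b_{k:k+L};\phi,\delta)$ may be written as a deterministic function of $z_{k+1:k+L}$, say $\tilde c(z_{k+1:k+L})$, valued in $\{0,1\}$. Next I would invoke the elementary identity that for any $\{0,1\}$-valued random variable the expectation coincides with the probability of the event $\{\tilde c = 1\}$. Concretely,
\begin{align}
	\underset{z_{k+1:k+L}}{\mathbb{E}}\left[c(b_{k:k+L};\phi,\delta)\,\big|\,b_k,\pi_{k+1:k+L-1},a_k\right]
	&= 1\cdot\prob\!\left(\tilde c = 1\right) + 0\cdot\prob\!\left(\tilde c = 0\right) \notag \\
	&= \prob\!\left(c(b_{k:k+L};\phi,\delta)=1\,\big|\,b_k,\pi_{k+1:k+L-1},a_k\right),\notag
\end{align}
which is exactly the left-hand side of \eqref{eq:ConstraintExp} once we adopt the convention that writing the Bernoulli variable inside $\prob(\cdot)$ is shorthand for the event $\{c=1\}$, consistent with the notation introduced in \eqref{eq:OuterConstr}.

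I expect the only genuine subtlety to be bookkeeping rather than analysis: one must verify that the conditional expectation is taken against the correct measure on $z_{k+1:k+L}$, namely the distribution of future observation sequences induced by the transition and observation models $\probd_T$, $\probd_Z$ under the fixed action $a_k$ and policy $\pi_{k+1:k+L-1}$ starting from $b_k$. This is the same measure that governs the expectations in \eqref{eq:Qfunc} and the value function, so no new probabilistic object is introduced. The result then reduces to the standard fact that the mean of an indicator is the probability of the corresponding event; there is no real obstacle beyond confirming that $c$ is indeed measurable and $\{0,1\}$-valued, which is immediate from its definition as a product or a single indicator function.
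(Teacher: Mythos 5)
Your proposal is correct and follows essentially the same route as the paper's proof: both arguments rest on the observation that, conditioned on $b_k$, $a_k$ and the policy, the belief lace $b_{k+1:k+L}$ is a deterministic function of the observations $z_{k+1:k+L}$ (the paper encodes this as the Dirac delta $\probd(b_{k+1:k+L}\mid b_k,\pi,a_k,z_{k+1:k+L})$ inside a law-of-total-probability integral), combined with the identity that the expectation of a $\{0,1\}$-valued variable equals the probability of the event $\{c=1\}$. The only difference is presentational: you absorb the marginalization over belief laces into the determinism observation, whereas the paper writes that integral out explicitly before collapsing it.
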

\noindent The reader can find the proof in Appendix \ref{proof:Represent}. From  Lemma~\ref{lem:Represent} we behold how to obtain the best sample approximation of the outer constraint, since the theoretical expectation \eqref{eq:ConstraintExp} is out of the reach. In practice, we approximate expectation in \eqref{eq:ConstraintExp} with a finite number of samples, such that the outer constraint becomes
\begin{align}
	\frac{1}{m} \sum_{i=1}^{m} c(b^i_{k:k+L};\phi, \delta) \geq 1-\epsilon, \label{eq:SampleOuterConstraint}
\end{align}
where $m$ is the number of the observation sequences $z_{k+1:k+L}$ expanded from action $a_k$ \eqref{eq:ConstrObj} at the root of the belief tree. If the belief tree is given, we can traverse it from the bottom up and calculate value of $c(b^i_{k:k+L};\phi, \delta)$ for $i \in 1 \ldots m$ along the way such that when we reach the root, we have everything to evaluate \eqref{eq:SampleOuterConstraint}. In general, since the parameter $m$ has to be known,  this applies to approaches that decouple belief tree construction from the solution, e.g.,~SS  algorithm \cite{Kearns02jml}. 

However, we would like to guide the belief tree construction such that if the action does not fulfill the outer constraint we will spend on it as less effort as possible.  We shall regard another interesting aspect of \eqref{eq:ConstraintExp}. Because $c$ is a Bernoulli random variable, by definition  $1 \geq \frac{1}{m} \sum_{i=1}^{m} c(b^i_{k:k+L};\phi, \delta)$. This imply that, under the condition $\epsilon=0$, to satisfy \eqref{eq:SampleOuterConstraint}, we shall require     $\big(\sum_{i=1}^{m} c(b^i_{k:k+L};\phi, \delta)\big) = m $. In other words, in this setting we will not be able to early accept action (before expanding $m$ future belief laces). However, as we further see we will be able to do a highly efficient pruning. Further, we describe an adaptive constraint evaluation mechanism for general $\epsilon$ and after that focus on the case of $\epsilon = 0$. 

\subsubsection{Accurate Adaptive Constraint Inquiry with $0<\epsilon \leq 1$} \label{sec:Adaptive}
Having presented \eqref{eq:SampleOuterConstraint}, we are now ready to address a complete belief tree construction. We bound the expression of the sample approximation of outer constraint \eqref{eq:SampleOuterConstraint} from each end using the already expanded part of the belief tree. 
Suppose the online algorithm at the root for each action expands upon termination $m$ laces appropriate to the drawn observations $\{z^i_{k+1:k+L}\}_{i=1}^{m}$. Each lace $i$ corresponds to a particular realization of the return. 

Suppose the algorithm already expanded $n \leq m$ laces with some order. We denote expanded laces by a sub-sequence $j=1 \ldots n$, such that $i^j$ is the index of the observation sequence, i.e, $i^j \in 1\dots m$. The lower bound $\lb(b_k,\pi)$ on the \eqref{eq:SampleOuterConstraint} is 
\begin{align}
	&1-\epsilon  \overbrace{\leq}^{?} \underbrace{\frac{1}{m} \sum_{j=1}^{n} c(b^{i^{j}}_{k:k+L};\phi,\delta)}_{\lb(b_{k},\pi)} \leq  \frac{1}{m} \sum_{i=1}^{m} c(b^{i}_{k:k+L};\phi,\delta).  \label{eq:Lower}
\end{align}
Whereas the upper bound $\ub(b_k,\pi)$ reads
\begin{align}
	&  \frac{1}{m}\! \sum_{i=1}^{m}\! c(b^{i}_{k:k+L};\phi,\delta) \!\leq\! \underbrace{\frac{m\!-\!n}{m}\!+\!\frac{1}{m} \!\sum_{j=1}^{n} \! c(b^{i^{j}}_{k:k+L};\phi, \delta)}_{\ub(b_k,\pi)}  \overbrace{<}^{?} 1-\epsilon.  \label{eq:Upper}
\end{align}
By the question mark  we denote the inequalities that shall be fulfilled online to check either the sample approximation of the outer constraint \eqref{eq:SampleOuterConstraint} is met \eqref{eq:Lower} or violated \eqref{eq:Upper}. These bounds allow to evaluate the constraint {\bf adaptively} before expanding the $m$ laces of the belief sequences $b_{k:k+L}$. Such a technique is a applicable for both settings: open and closed-loop. Note that both bounds advance with the step size $1/m$. Moreover, each added observation lace only one of the bounds is contracting the lower of the upper. If the expanded lace results in $c(b^{i_{j}}_{k:k+L}; \phi, \delta) = 1$ the lower bound makes a step. This event is happening with probability $\prob\left(c(b_{k:k+L}; \phi,\delta)=1\mid b_k, \pi  \right)$. Conversely, if the expanded lace results in $c(b^{i_{j}}_{k:k+L}; \phi, \delta) = 0$ the upper bound makes a step. This  event is happening with probability $\prob\left(c(b_{k:k+L}; \phi,\delta)=0\mid b_k, \pi  \right)$.

One example of an adaptive usage of \eqref{eq:Lower} and \eqref{eq:Upper} is to save time in open loop planning or alternatively spend more time on the action sequences which fulfill the constraint. Envisage a static action sequence to be checked. After each expanded lace $c(b^{i_{j}}_{k:k+L}; \phi, \delta)$ of \eqref{eq:SampleOuterConstraint} we are probing \eqref{eq:Lower}, if fulfilled, we know that the sample approximation of outer constraint is satisfied, and we can stop dealing with the constraints for this candidate action sequence. Else we are trying  \eqref{eq:Upper}; if fulfilled, we know that the current action sequence violates the sample approximation of outer constraint \eqref{eq:SampleOuterConstraint}. The third possibility is to add one more lace and check again. In such a way, we adaptively expand the lower possible number of inner constraint laces to be evaluated and {\color{emeraldgreen} {\bf validate}} or {\color{hibiscus} {\bf invalidate}} the action sequence depending on whether the probabilistic constraint is fulfilled on not. The presented adaptivity mechanism is exact and guaranteed to satisfy or discard our probabilistic constraints. To our knowledge no analogs to this  exists in the literature, e.g. \cite{Santana16aaai}.

Another example is the closed loop setting, where we deal with policies. Further, we focus attentively on the closed loop setting in this paper.

\subsubsection{Efficient Exact Adaptive Constraint Pruning under Condition of $ \epsilon =0 $  }
The constraint confidence $\epsilon$ controls the stiffness of the condition that the distribution of belief-dependent constraint shall fulfill.  The maximal stiffness is reached when $\epsilon =0 $. Focus on multiplicative flavor of inner constraint \eqref{eq:InnerConstr2}, in this case we have an interesting behavior summarized in following theorem.  
\begin{thm}[Necessary and sufficient condition for feasibility of probabilistic constraint]
	\label{thm:pruning}
	Fix $\epsilon =0$ and $\delta$. Let the inner constraint comply to \eqref{eq:InnerConstr2}. The fact that  
	\begin{align}
		\frac{1}{m} \sum_{i=1}^{m} c(b^i_{k:k+L};\phi, \delta) \geq 1	- \cancelto{0}{\epsilon}
	\end{align}
	implies that $\forall i,\ell \quad \phi(b^i_{\ell})\geq \delta$.  Moreover, if 
		\begin{align}
		\frac{1}{m} \sum_{i=1}^{m} c(b^i_{k:k+L};\phi, \delta) < 1,	
	\end{align} 
	so  $\exists i,\ell \quad \phi(b^i_{\ell}) < \delta$
\end{thm}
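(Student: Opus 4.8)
The plan is to exploit the fact that the multiplicative inner constraint \eqref{eq:InnerConstr2} renders each $c(b^i_{k:k+L};\phi,\delta)$ a product of indicators, hence a $\{0,1\}$-valued quantity, and to combine this with the elementary observation that the empirical average of $m$ such binary terms lies in $[0,1]$ and attains the value $1$ exactly when every term equals $1$. Concretely, I would first record the structural equivalences that follow directly from \eqref{eq:InnerConstr2}: for a fixed lace $i$, a product of indicators equals $1$ if and only if every factor equals $1$, and equals $0$ if and only if at least one factor vanishes, so that $c(b^i_{k:k+L};\phi,\delta)=1 \iff \phi(b^i_\ell)\geq\delta$ for all $\ell$, while $c(b^i_{k:k+L};\phi,\delta)=0 \iff \phi(b^i_\ell)<\delta$ for some $\ell$.

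For the first implication, substituting $\epsilon=0$ turns the hypothesis into $\frac{1}{m}\sum_{i=1}^m c(b^i_{k:k+L};\phi,\delta)\geq 1$. Since each summand lies in $\{0,1\}$, the average is also bounded above by $1$, so it is sandwiched to exactly $1$ and therefore $\sum_{i=1}^m c(b^i_{k:k+L};\phi,\delta)=m$. As $m$ binary terms can sum to $m$ only when each equals $1$, I conclude $c(b^i_{k:k+L};\phi,\delta)=1$ for every $i$, and the structural equivalence above then delivers $\phi(b^i_\ell)\geq\delta$ for all $i$ and all $\ell$.

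For the second statement the argument is dual: the strict inequality $\frac{1}{m}\sum_{i=1}^m c(b^i_{k:k+L};\phi,\delta)<1$ gives $\sum_{i=1}^m c(b^i_{k:k+L};\phi,\delta)<m$, and since the terms are binary, at least one lace $i$ must satisfy $c(b^i_{k:k+L};\phi,\delta)=0$; the structural equivalence then supplies an index $\ell$ with $\phi(b^i_\ell)<\delta$, which is exactly the asserted $\exists i,\ell$.

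This is in essence a matter of unpacking the multiplicative definition \eqref{eq:InnerConstr2}, so I do not anticipate a genuine obstacle. The only point requiring a little care is the quantifier bookkeeping: the first claim is a universal statement over all laces simultaneously, which is precisely why the saturation $\sum_{i} c = m$ (and not a mere lower bound) is indispensable, whereas the second claim needs only a single violating pair, so that the strict inequality already suffices.
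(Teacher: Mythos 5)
Your proof is correct and follows essentially the same route as the paper's: both hinge on the observation that each $c(b^i_{k:k+L};\phi,\delta)$ is binary, so the hypothesis with $\epsilon=0$ sandwiches the empirical average to exactly $1$, forcing $\sum_{i=1}^m c(b^i_{k:k+L};\phi,\delta)=m$ and hence every lace to satisfy all indicators. The only difference is stylistic: you argue both directions directly via the product-of-indicators equivalence, whereas the paper proves the first claim by contradiction and the second by contraposition, which is logically the same content.
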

We placed proof in the Appendix~\ref{proof:pruning}. Immediate result of Theorem~\ref{thm:pruning} is soundness of our pruning technique.  By the arriving to a belief $b_{\ell}$, we prune all the actions in the belief tree resulting in  
$\phi(b_{\ell+1}) < \delta $ for some future observation a single step ahead. In such a way eventually in the belief tree will be solely the actions satisfying the probabilistic constraint \eqref{eq:SampleOuterConstraint} with $\epsilon =0$.

In the next section, we present the algorithms. 
\subsection{The Algorithms} \label{sec:PCSS}
\begin{algorithm}[h!]
\caption{Prob. Constrained BMDP  Sparse Sampling (PCSS)}
\begin{algorithmic}[1]
\Procedure{PCSS}{belief: $b$, depth: $d$}
\If { $d=0$ }
	\State \Return	(Null, $\rho(b)$)
\EndIf
\State $(a^{*}, v^*$) $\leftarrow$ (Null, $-\infty$)
\For  {$a \in  \mathcal{A}$}
\State $ v\leftarrow$ $0.0$ \Comment{Value function}
\State Calculate propagated belief $b'^{-}$ 
\State pruned  $\leftarrow$ false 
\For {\_ $\in 1:m_d$}
\State Sample $x^o \sim b'^{-}$ 
\State Sample $z' \sim \probd(z'|x^o)$
\State  $b^{\prime} \leftarrow \psi(b, a, z')$ 
\If {$\phi(b^{\prime} ) < \delta$ } \Comment{Prune}
\State pruned  $\leftarrow$ true
\State break from the observation loop \Comment{Go to line $21$}
\EndIf  
\State $\_$, $v^{\prime}$, $\leftarrow$ \Call{PCSS}{$b^{\prime}$,$d-1$}
\State $v+=(\rho(b) + \gamma \cdot v^{\prime})/m_d$ \Comment{calculate value fun.}
\EndFor
\If {$  \mathrm{not}(\mathrm{pruned}) \wedge v > v^{*} $}
\State $(a^{*}, v^*$) $\leftarrow$ ($a, v$)
\EndIf 
\EndFor
\State \Return $(a^{*}, v^*$)
\EndProcedure
\end{algorithmic}
\label{alg:constrainedSS}
\end{algorithm}			
\subsubsection{Probabilistically Constrained  Sparse Sampling ($\epsilon =0$)}
In particular, inspired by SS \cite{Kearns02jml} and adaptivity aspects in \cite{Barenboim22ijcai}, we present an algorithm (Alg.~\ref{alg:constrainedSS}) for the general form of inner constraint of \eqref{eq:InnerConstr2}. Notably, to our knowledge, it is the first algorithm in the continuous domain dealing with probabilistic constraints in a POMDP setting. 
In this paper we are focusing on safety aspects. We allow ourselves to assume that the inner constraint is fulfilled on actual belief $b_k$ we acquired from the inference, namely $ \mathbf{1}_{\left\{\phi(b_{k}) \geq \delta \right\}} =1 $.  To keep presentation clear we also simplify the reward the be dependent solely on the current belief and denote $\rho(b)$. 

Alg.~\ref{alg:constrainedSS} is presented for $\epsilon = 0$.  It uses the Bellman optimality criterion while traversing the tree from the bottom up. However, we employ pruning actions violating the constraint on the way forward (down the tree). This happens in line $14$ of the Alg.~\ref{alg:constrainedSS}. Because we actually check the inner constraint on the way forward when the algorithm hits the bottom of the tree, we are left solely with actions fulfilling the outer constraint with $\epsilon=0$, so we do not need any additional checking on the way up at all. 

It shall be noted that the presented algorithm is heavy from the computational point of view due to the inability to trade the number of observation laces to the larger horizon. We introduce the Probabilistically Constrained Forward Search Sparse Sampling (PCFSSS)  approach to fix this issue and relax the assumption that $\epsilon = 0$. One also can easily relax the assumption that $\epsilon =0$ in Alg.~\ref{alg:constrainedSS}. 
\subsubsection{Probabilistically Constrained Forward Search Sparse Sampling (arbitrary $\epsilon$, anytime algorithm)} \label{sec:fsss}
To increase the horizon of the planner on the expense of number of observation laces we can use a variation of Forward Search Sparse Sampling algorithm \cite{Barenboim22ijcai}. In this algorithm we do trials of observation laces. In each trial, we descend with the lace of observations and actions  intermittently, calculate the beliefs along the way and ascend back to the root of the belief tree.  For observation we can use a circular buffer of size $m_d$.  Let us do the following calculations. We know that the lower bound in \eqref{eq:Lower} advance in steps of $\frac{1}{m}$. We can define $n_{\text{accept}}$, the smaller number of laces required to fulfill the inner constraint, namely $c(b^{\text{lace}}_{k:k+L}; \phi,\delta)=1$ to accept an candidate policy. This would be 
\begin{align}
	n_{\text{accept}} = \lceil m(1 - \epsilon) \rceil.
\end{align}
Similarly from \eqref{eq:Upper} $n_{\text{reject}}$, the  minimal number of laces required to violate the inner constraint,  namely $c(b^{\text{lace}}_{k:k+L}; \phi,\delta)=0$ to discard a candidate policy.
This would be 
\begin{align}
	n_{\text{reject}} = \lfloor m\epsilon \rfloor.
\end{align}
Suppose we expand from each belief action node at most $m_d$ observations. Overall we will have  $(m_d)^L$  observation laces. Let us use Upper Confidence Bound (UCB) strategy to decide which action to go.  Let us pinpoint  the following facts.  
\begin{itemize}
	\item If the inequality $\phi(b_{\ell+1}) < \delta $ is fulfilled at some belief in the tree. It means we have a whole subtree violating the inner constraint. This is because the belief $b_{\ell}$ is a part of all laces stemming from it. Meaning we have  $(m_d)^{L-\ell}$ laces violating the constraint. If  $ (m_d)^{L-\ell} \geq n_{\text{reject}}$ we can discard an action $a_{\ell-1}$ which is ancestor to the belief $b_{\ell}$. 
	\item If $(m_d)^{L-\ell} < n_{\text{reject}}$, we permit this action to expand this belief node. However, we shall update $n_{\text{reject}}$, such that $n_{\text{reject}} \leftarrow n_{\text{reject}} - (m_d)^{L-\ell} $, descend until the bottom of the tree and ascend back.
\end{itemize}  
In such a manner when we stop and return an optimal action in the root. It is guaranteed that this action will be safe with respect to our formulation. 
\section{Relation to chance constraints for $\epsilon=0$} \label{sec:RelationToChance}
Although, our formulation is universal, in this paper we are focusing on safety of the agent. Therefore we shall thoroughly regard state-of-the art safety constraint under partial observability, which is chance constraint.  Our immediate extension is augmenting chance constrained formulation with general belief dependent rewards and  extending it to the continuous domain in terms of states and observations. As a consequence, we obtain the following objective  
	\begin{align}
	&a^*_{k} \in \arg\max_{a_k \in \mathcal{A}} \{ Q^{\pi^*}(b_k, a_k)\} \text{ \ \ subject to}  \label{eq:SotaObj}\\
	&\prob(\mathbf{1}_{\left\{\tau \in  \times_{i=0}^{L-\ell+k} \mathcal{X}_{i}^{\text{safe}}\right\}}| b_{\ell}, a^*_\ell, \pi^*_{\ell+1:k+L-1})  \geq   \delta_{\ell}, \quad \forall b_{\ell} \text{ such that } \ell \in k: k+L-1  \label{eq:SotaSafety}
\end{align}
where $\tau = x_{k:k+L} $ is the trajectory of the states. Observe that the chance-constraint in \eqref{eq:SotaSafety} is enforced from each non terminal belief as in\cite{Santana16aaai}.  Let us emphasize that the question of selecting the $\delta_{\ell}$ per depth $\ell$ of the belief tree requires clarification and has not been answered to the best of our knowledge.  For now let us set $\delta_{\ell}=\delta \ \forall \ell$. Further, we will carefully regard this question.
\subsection{Rigorous Analysis of Safety Constraint and Interrelation of Our Probabilistic Constraint to Chance Constraint} \label{sec:Safety}
Following the previous discussion, we now examine in detail  the chance-constrained formulation. As we will shortly see, our analysis unveils that our novel  safety constraint  \eqref{eq:ProbSafety}, \eqref{eq:ProbSafeTraj} with $\epsilon=0$ is similar to chance constraint. However, as we further show, the existing chance-constrained state-of-the-art formulation, e.g., \cite{Bry11icra}, \cite{Santana16aaai} have some drawbacks. Since the paper \cite{Bry11icra} presents a  parametric method for Gaussian beliefs, it is relevant for us solely from the constraint formulation perspective.
Let us recite that by setting $\epsilon > 0$, our probabilistic constraint is unmatched to chance constraint. Moreover, safety constraint as dependent belief operator is genuinely nothing more than expectation over the indicator of safe states \eqref{eq:ProbSafeTraj}, as such 
\begin{align}
	\phi(b_{\ell}) = \prob\Big(\mathbf{1}_{\left\{x_{\ell}\in \mathcal{X}^{\text{safe}}_{\ell}\right\}}| b_{\ell}\Big) = 	\mathbb{E}\bigg[\mathbf{1}_{\left\{x_{\ell}\in \mathcal{X}^{\text{safe}}_{\ell}\right\}}| b_{\ell}\bigg ]. \label{eq:SafetyOperator}
\end{align} 
Meaning we narrowed here the discussion to the operator $\phi$ being the first moment of state dependent function, while our formulation supports general belief dependent operators without any limitations. 

Let us now delve into the question why our approach \eqref{eq:ProbSafety} is more general , and its advantages over an existing state-of-the-art formulation \eqref{eq:SotaSafety}, e.g., \cite{Bry11icra}, \cite{Santana16aaai}. 

We, however, shall compare two formulations of the constraint, ours \eqref{eq:ProbSafety} and the conventional. 
Let us rigorously show why the conventional chance constraint is insufficient to account for risk. Further, we will present our adaption of RAO* algorithm presented in \cite{Santana16aaai} to continuous spaces, This is a contribution on its own to the best of our knowledge. Further, to examine performance of both formulations, we compare our  probabilistic constraint (Alg.~\ref{alg:constrainedSS}) against it. 

Our key insight is that a conventional chance constraint only partially depends on the observations since it averages state trajectories and is not explicitly formulated with respect to posterior beliefs. In the next section, we shed light on this aspect.

\subsubsection{Enforcing chance constraint at the root of the belief tree}
We face that in \cite{Bry11icra} the chance constraint imposed only in the root of the belief tree. With this motivation in mind we focus for the moment of the whole belief tree, namely, $\ell=k$ and inspect probability that the trajectory $\tau$ will be safe.  Note that from the properties if the indicator variable
\begin{align}
	&\mathbf{1}_{\left\{\tau(\omega) \in  \times_{i=0}^L \mathcal{X}_{k+i}^{\text{safe}}\right\}}  =  \mathbf{1}_{\left\{ \bigcap_{i=k}^{k+L} \left\{x_i(\omega) \in \mathcal{X}_{i}^{\text{safe}}\right\}\right\}} = \bigwedge_{i=k}^{k+L}\mathbf{1}_{\left\{ x_i(\omega) \in   \mathcal{X}^{\text{safe}}_i\right\}}=\prod_{i=k}^{k+L}\mathbf{1}_{\left\{ x_i(\omega) \in   \mathcal{X}^{\text{safe}}_i\right\}} \quad \forall \omega \in \Omega, \nonumber
\end{align}
where $\Omega$ is the space of the outcomes.
 
Meaning, the safe trajectory is the trajectory comprised of safe states. Another property of the indicator variable is
\begin{align}
	&\prob(\mathbf{1}_{\left\{\tau \in  \times_{i=0}^L \mathcal{X}^{\mathrm{safe}}_i\right\}}| b_k, \pi) = \int_{\tau} \textstyle\prod_{i=k}^{k+L} \mathbf{1}\left\{ x_i\!\in\!  \mathcal{X}^{\mathrm{safe}}_i\right\} \probd(\tau |b_k, \pi_{k+1:k+L-1}, a_k)\mathrm{d} \tau. \label{eq:ExpectationTraj} 
\end{align}
Crucially, as seen in \eqref{eq:ExpectationTraj}, and in contrast to \eqref{eq:ProbSafety} and \eqref{eq:ProbSafeTraj}, such an approach is not distribution-aware since it sees the posterior solely through the lens of expectation.

In such a formulation the observations are required merely to decide which action to take alongside the trajectory. As  visualized in  Fig.~\ref{fig:ChanceConstraintRoot}, we regard in that formulation trajectories coming from different posterior beliefs without a distinction from which belief they arrived. In contrast, we will see further our approach is truly distribution-aware since it accounts for the number of safe posteriors. Let us show the following  lemma. 
\begin{lem}[Probability of trajectory] \label{lem:TrajProbab}
	\begin{align}
		&\probd(\tau |b_k, \pi_{k+1:k+L-1}, a_k) = \probd_T(x_{k+1}|x_{k}, a_k )b_k(x_k)\cdot   \nonumber\\
		& \!\!\!\!\!\!\!\!\!\!\!\underset{z_{k+1:k+L-1}}{\int} \!\!\prod_{i=k+1}^{k+L-1} \! \probd_T(x_{i+1} | x_{i}, \pi(b_i(b_{i-1}, a_{i -1}, z_{i}))) \probd_Z(z_{i}|x_i) \mathrm{d}z_{k+1:k+L-1}. \label{eq:DistributionConstr}
	\end{align}
\end{lem}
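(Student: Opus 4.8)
I need to prove Lemma on trajectory probability.

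Let me understand the claim carefully.The plan is to compute the joint density of the state trajectory $\tau = x_{k:k+L}$ conditioned on $b_k$ and the policy, by writing it as a marginalization over the intermediate observations $z_{k+1:k+L-1}$ that drive the policy. The key structural observation is that the policy $\pi$ is a deterministic map from beliefs to actions, and each belief $b_i$ is itself a deterministic function of the preceding belief, action, and observation via the update operator $\psi$, i.e.\ $b_i = \psi(b_{i-1}, a_{i-1}, z_i)$. Consequently, once the observations $z_{k+1:k+L-1}$ are fixed, the entire action sequence $a_{k+1}, \dots, a_{k+L-1}$ is determined, and the trajectory density factorizes along the chain.

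First I would start from the joint density of states and intermediate observations, and apply the chain rule together with the Markov structure of the $\rho$-POMDP (transition model $\probd_T$ and observation model $\probd_Z$). Concretely, I would write the joint density $\probd(x_{k:k+L}, z_{k+1:k+L-1} \mid b_k, \pi, a_k)$ and then integrate out the observations to obtain the marginal $\probd(\tau \mid b_k, \pi, a_k)$. The first factor $\probd_T(x_{k+1}\mid x_k, a_k) b_k(x_k)$ pulls out cleanly because $a_k$ is the fixed root action (not policy-dependent) and $x_k$ is distributed according to $b_k$; note that the transition into $x_{k+1}$ depends only on $x_k$ and $a_k$, not on any observation, which is why it sits outside the integral. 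I would emphasize here that $z_{k+1}$ is the observation generated \emph{after} reaching $x_{k+1}$, so the transition $x_k \to x_{k+1}$ precedes any observation noise.

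For the remaining factors, I would argue inductively that for $i$ ranging from $k+1$ to $k+L-1$, the transition density contributes $\probd_T(x_{i+1}\mid x_i, a_i)$ where the action is $a_i = \pi(b_i)$ and $b_i = \psi(b_{i-1}, a_{i-1}, z_i)$, and the observation likelihood contributes $\probd_Z(z_i \mid x_i)$. Substituting the recursive expression for the belief yields exactly the integrand $\prod_{i=k+1}^{k+L-1}\probd_T\big(x_{i+1}\mid x_i, \pi(b_i(b_{i-1}, a_{i-1}, z_i))\big)\probd_Z(z_i\mid x_i)$ appearing in \eqref{eq:DistributionConstr}. Integrating over $z_{k+1:k+L-1}$ then gives the stated marginal.

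The main obstacle I anticipate is bookkeeping the conditioning structure correctly: the action at step $i$ depends on $b_i$, which depends on $z_i$, so the observation variables and the policy-induced actions are entangled, and one must be careful that the transition densities and observation likelihoods are attached to the correct time indices (in particular that $\probd_Z(z_i\mid x_i)$ conditions on $x_i$ and not $x_{i+1}$, and that the last transition into $x_{k+L}$ uses action $\pi(b_{k+L-1})$ which itself depends on $z_{k+L-1}$, the last observation being integrated out). Making the deterministic dependence of $b_i$ on the observation history explicit — and justifying that no further observation appears beyond $z_{k+L-1}$ because the trajectory terminates at $x_{k+L}$ with no observation consumed there — is the delicate part; the rest is a routine application of the Markov factorization and Fubini to exchange the product and the integral.
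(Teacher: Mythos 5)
Your proposal is correct and follows essentially the same route as the paper's proof: marginalize the joint density of states and intermediate observations over $z_{k+1:k+L-1}$, factorize via the chain rule and the Markov structure of $\probd_T$ and $\probd_Z$, and exploit the deterministic dependence of actions on beliefs ($a_i = \pi(b_i)$, $b_i = \psi(b_{i-1},a_{i-1},z_i)$) to pull out $\probd_T(x_{k+1}\mid x_k,a_k)\,b_k(x_k)$ and obtain the stated integrand. The only cosmetic difference is that the paper unrolls the factorization backward as a recurrence (peeling off $x_{k+L}$ and $z_{k+L-1}$ first), whereas you describe the same factorization as a forward induction.
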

\noindent We provide the proof in Appendix \ref{proof:TrajProbab}. As we see, the observations are required solely for decision which action to take. In particular, when we deal with static action sequences $a_{k:k+L-1}$ the observations cancel out.  According to the former explanation, we conclude the following.
 \begin{figure}
 	\centering
 	 \begin{minipage}[t]{0.45\textwidth}
 		\centering
 		\includegraphics[width=\textwidth]{./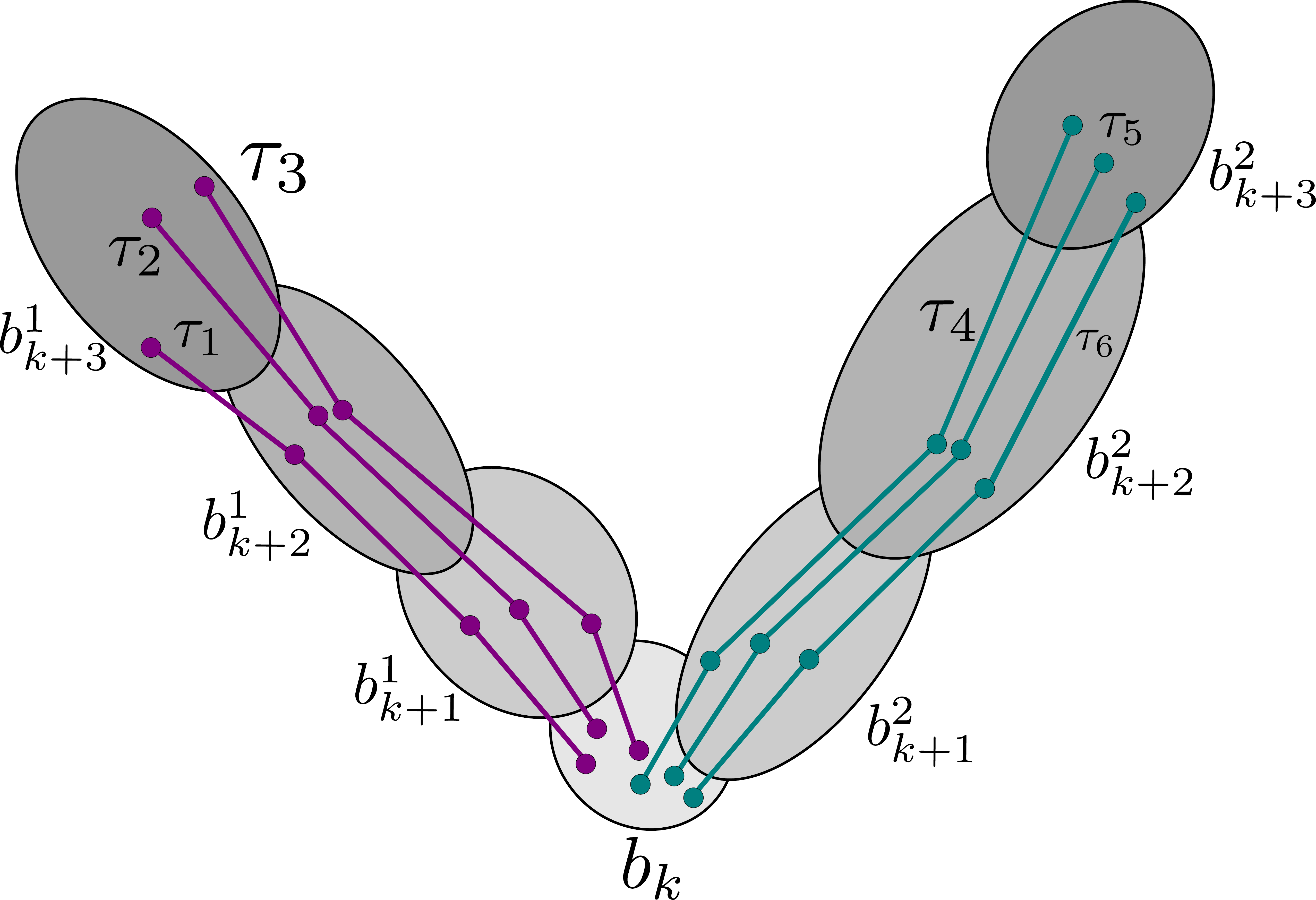}
 		\subcaption{}
 		\label{fig:ChanceConstraintRoot}
 	\end{minipage}%
 	\hfill
 	\begin{minipage}[t]{0.45\textwidth}
 		\centering 
 		\includegraphics[width=0.6\textwidth]{./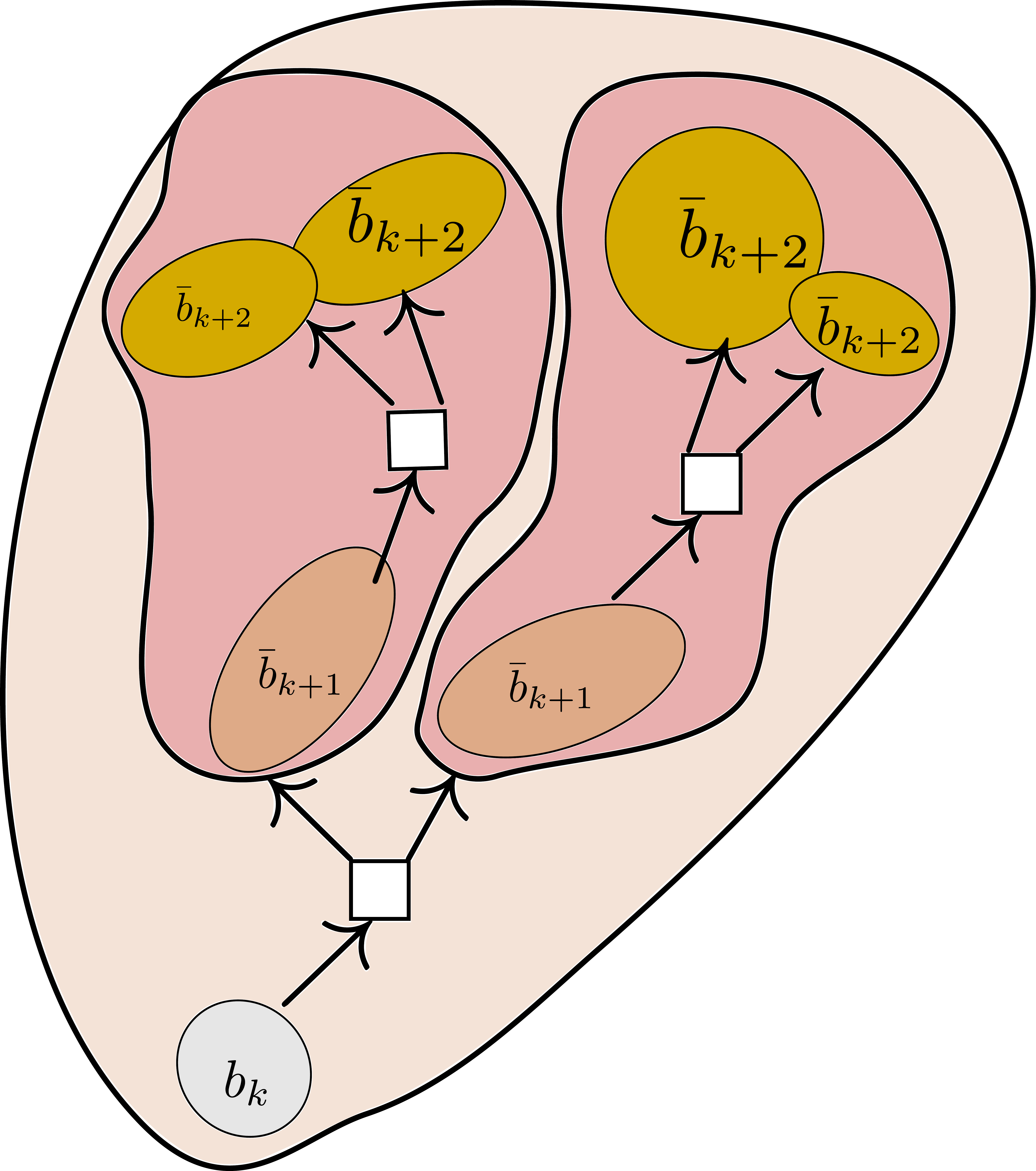}
 		\subcaption{}
 		\label{fig:ChanceConstraint}
 	\end{minipage}%
	\caption{\textbf{(a)}Visualization of the conventional chance constraint enforced from the root of the belief tree. By gray ovals illustrate the posterior beliefs. The indicators over the {\color{teal}{\bf teal}} and {\color{inkscapePurple}{\bf purple}} trajectories are averaged without any distinction (Section \ref{sec:Safety}). However, they a parts of different sequences of the posterior beliefs. \textbf{(b)}Visualization of sub-tree resolution of chance constraints. When enforced from each non terminal belief, its resolution is solely sub-trees and not the beliefs themselves (Section \ref{sec:SubtreeRes}). Our method does not suffer from the limitations presented in {\bf both} figures.} 
\end{figure}
First, the chance constraint by formulation is oblivious to the distribution of future beliefs (Fig.~\ref{fig:ChanceConstraintRoot}). Second, enforcing the chance constraints from each not terminal belief (Fig.~\ref{fig:ChanceConstraint}) ameliorate the formulation.  This brings us to the next section.     

\subsubsection{Investigating Chance Constrained Belief MDP} \label{sec:SubtreeRes}
A particularly interesting question, is how enforcing the chance constraint from each non terminal belief patches the formulation to be partially risk aware (Fig.~\ref{fig:ChanceConstraint}).  
As mentioned, this paper focuses on belief-dependent constraints and rewards. Therefore, our novel Probabilistic Constraint (PC) given by \eqref{eq:ProbSafeTraj} and Alg.~\ref{alg:constrainedSS}, which is based on  Belief-MDP (BMDP) and formulated in terms of beliefs and not the state trajectories. In particular,  BMDP formulation will permit us to deal with belief-dependent rewards such as information gain and differential entropy. All in all, we conclude that to compare with our formulation, we shall analyze the chance constraint on Belief-MDP (BMDP) level. 

In order to benchmark our approach \eqref{eq:OuterConstr} to the conventional Chance Constraint (CC) formulation \eqref{eq:SotaSafety}, we now scrutinize the latter from another angle and reformulate   it in the context of  posterior beliefs as it cannot be used in belief-dependent solvers in its current form. Such an extension has not been done previously, to the best of our knowledge. 
Let us present a lemma which will shed light on the relation between the conventional formulation of safety constraint and the posterior beliefs.  To improve readability let us introduce yet another Bernoulli variable $\iota_i(\omega) \bydef \mathbf{1}_{\left\{ x_i(\omega) \in   \mathcal{X}^{\mathrm{safe}}_i\right\}}$. We also indicated the constraint to be met in \eqref{eq:ChanceSafety} with underbrace. 
\begin{lem}[Average over the posteriors obtained from the safe priors] \label{lem:AvSafePosteriors}
	\begin{align}
		&\prob\Bigg(\bigwedge_{i=k}^{k+L} \iota_i| b_k, \pi\Bigg) = \underbrace{\prob( \iota_k | b_k) \underset{{\bar{z}_{k+1}}}{\mathbb{E}}\Big[ \underbrace{\prob(\iota_{k+1} |\bar{b}_{k+1}) \underset{{\bar{z}_{k+2}}}{\mathbb{E}}\Big[ \underbrace{\prob(\iota_{k+2} |\bar{b}_{k+2}) \dots}_{\geq \delta} \Big|a_{k+1},\bar{b}_{k+1}^{\mathrm{safe}} \Big]}_{\geq \delta }  \Big|a_{k},\bar{b}_{k}^{\mathrm{safe}} \Big]}_{\geq \delta}, \label{eq:ChanceSafety}
	\end{align}
\end{lem}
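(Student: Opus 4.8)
The plan is to prove the claimed nested factorization by induction on the horizon length $L$, reducing the whole statement to a single one-step ``peeling'' identity that is then applied repeatedly. First I would dispatch the base case $L=0$, where both sides collapse to $\prob(\iota_k\mid b_k)$, so there is nothing to show.

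For the inductive step, the core is a one-step decomposition of the conjunction of safety indicators. Using the chain rule of conditional probability I would write
$$\prob\Big(\bigwedge_{i=k}^{k+L}\iota_i \,\Big|\, b_k,\pi\Big) = \prob(\iota_k\mid b_k)\cdot \prob\Big(\bigwedge_{i=k+1}^{k+L}\iota_i \,\Big|\, \iota_k, b_k,\pi\Big).$$
The first factor is exactly $\prob(\iota_k\mid b_k)=\mathbb{E}_{x_k\sim b_k}[\mathbf{1}_{\{x_k\in\mathcal{X}^{\mathrm{safe}}_k\}}]$, matching the leading factor on the right-hand side. The key observation for the second factor is that conditioning the belief $b_k$ on the event $\iota_k=1$ (current state safe) produces precisely the safe-restricted belief $\bar b_k^{\mathrm{safe}}$, i.e.\ $\bar b_k^{\mathrm{safe}}(x)\propto b_k(x)\,\mathbf{1}_{\{x\in\mathcal{X}^{\mathrm{safe}}_k\}}$; hence all future dynamics relevant to $\bigwedge_{i=k+1}^{k+L}\iota_i$ are those propagated from $\bar b_k^{\mathrm{safe}}$ rather than from the full $b_k$. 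This is exactly the place where only the safe portion of the belief is pushed forward.

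Next I would apply the belief-MDP tower property (the same total-expectation-over-observations decomposition underlying Lemma~\ref{lem:Represent} and the trajectory factorization of Lemma~\ref{lem:TrajProbab}) to introduce the observation $\bar z_{k+1}$ and the resulting posterior $\bar b_{k+1}$: propagating $\bar b_k^{\mathrm{safe}}$ with $a_k=\pi(b_k)$, drawing $\bar z_{k+1}$, and forming $\bar b_{k+1}=\psi(\bar b_k^{\mathrm{safe}},a_k,\bar z_{k+1})$ gives
$$\prob\Big(\bigwedge_{i=k+1}^{k+L}\iota_i \,\Big|\, \bar b_k^{\mathrm{safe}},\pi\Big)=\underset{\bar z_{k+1}}{\mathbb{E}}\Big[\prob\Big(\bigwedge_{i=k+1}^{k+L}\iota_i \,\Big|\, \bar b_{k+1},\pi\Big)\,\Big|\, a_k,\bar b_k^{\mathrm{safe}}\Big].$$
Substituting this into the one-step decomposition yields exactly the outermost layer of the claimed nested expression, in which the inner probability $\prob(\bigwedge_{i=k+1}^{k+L}\iota_i\mid \bar b_{k+1},\pi)$ is a horizon-$(L-1)$ instance of the same statement started from $\bar b_{k+1}$. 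Invoking the induction hypothesis on this inner term reproduces all remaining nested layers, closing the induction.

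I expect the main obstacle to be the rigorous justification of the middle step: that conditioning on $\iota_k=1$ is equivalent to replacing $b_k$ by $\bar b_k^{\mathrm{safe}}$, and that the subsequent observation likelihood and belief update are consistently generated from this safe-restricted belief. Care is needed because the policy action $a_k=\pi(b_k)$ is still computed from the full belief, whereas the propagation and the observation distribution appearing inside the constraint are those of the safe portion only --- precisely the mismatch between the observation distributions for the rewards and for the chance constraint that the paper emphasizes. Making this bookkeeping explicit (via the indicator-restricted form of the trajectory density in Lemma~\ref{lem:TrajProbab}) is the technical heart of the argument; the induction and the tower property are then routine.
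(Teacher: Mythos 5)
Your proposal is correct and follows essentially the same route as the paper's proof: a chain-rule decomposition isolating $\prob(\iota_k \mid b_k)$, the identification of conditioning on $\iota_k$ with the safe-restricted belief $\bar{b}_k^{\mathrm{safe}}$ (which the paper justifies separately in its appendix on posteriors from safe priors), a tower-property step over the next observation $\bar{z}_{k+1}$, and then recursion --- the paper merely phrases your induction as ``observe the recurrence relation'' and routes the observation-expectation step through an intermediate integral over beliefs collapsed by Dirac deltas. The step you flag as the technical heart (that propagation and observation likelihood must be generated from $\bar{b}_k^{\mathrm{safe}}$, not $b_k$) is exactly the point the paper's belief-integral manipulation makes explicit, so no gap remains.
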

\noindent where $\bar{b}_i = \psi(b_{i-1}^{\mathrm{safe}},a_{i-1},\bar{z}_{i})$ which is different than $b_i=\psi(b_{i-1},a_{i-1},z_{i}))$ (even if the realizations of the observations are equal $\bar{z}_i = z_i$) used in \eqref{eq:ProbSafeTraj}, similarly $\bar{z}_i$ is the observation given safe belief in previous time instance.
We provide  the proof in Appendix \ref{proof:AvSafePosteriors}.  Here,  $\psi$ is a method for Bayesian belief update.  Further details can be found in Appendix \ref{sec:PostFromSafePrior}.  Note that we added the bar to highlight that this is the posterior belief obtained by pushing forward in time (with an action and the observation) the safe portion of prior belief. 
 
Also, note that since $b_k$ is actual belief and we know that the agent is safe $\prob( \iota_k | b_k)=1$; and therefore can be omitted. The statement \eqref{eq:ChanceSafety} is very close to equation $(12)$ of \cite{Santana16aaai}. 
Let us elaborate on the notation of the safe belief. We define the safe belief as 
\begin{align}
	b^{\text{safe}}(x) \triangleq \frac{\mathbf{1}\left\{ x\!\in\!  \mathcal{X}^{\text{safe}}\right\}b(x)}{\int_{\xi \in \mathcal{X}} \mathbf{1}\left\{ \xi\!\in\!  \mathcal{X}^{\text{safe}}\right\}b(\xi)\mathrm{d}\xi}, \label{eq:SafeBelief}
\end{align}
i.e., we nullify the unsafe portion of the belief and re-normalize.   

We emphasize  
that  Lemma \ref{lem:AvSafePosteriors} is just a reformulation of the conventional constraint \eqref{eq:SotaSafety}, and now we proceed to analyzing the difference between that formulation and our constraint \eqref{eq:ProbSafety}. 
\subsection{Similarities and differences between our formulation and chance constraint} \label{sec:SimAndDiff}
We  rearrange our formulation  \eqref{eq:ProbSafety} to arrive to an expression similar in its form to \eqref{eq:ChanceSafety}. To that end, we use \eqref{eq:ConstraintExp} alongside  with \eqref{eq:ProbSafety} and \eqref{eq:ProbSafeTraj}, and get 
\begin{equation}
	\label{eq:OurSafety} 
	\begin{gathered}
		\underset{z_{k+1:k+L}}{\mathbb{E}}\left[\prod_{i=k}^{k+L}\mathbf{1}_{\left\{ \prob(\iota_i | b_{i}) \geq \delta\right\}}\bigg|b_k, \pi_{k+1:k+L-1}, a_k\!\right] =  \\
		\mathbf{1}_{\left\{ \prob(\iota_k | b_{k}) \geq \delta \right\}} \cdot \underset{z_{k+1:k+L}}{\mathbb{E}}\left[\prod_{i=k+1}^{k+L}\mathbf{1}_{\left\{ \prob(\iota_i | b_{i}) \geq \delta\right\}}\bigg|b_k, \pi_{k+1:k+L-1}, a_k\!\right] = \\
		\mathbf{1}_{\left\{ \prob(\iota_k | b_{k}) \geq \delta \right\}} \cdot\underset{{z_{k+1}}}{\mathbb{E}} \!\Big[\! \mathbf{1}_{\left\{\prob(\iota_{k+1}  |b_{k+1}) \geq \delta\right\}} \cdot 
		\underset{{z_{k+2}}}{\mathbb{E}} \!\Big[ \! \mathbf{1}_{\left \{\!\prob(\iota_{k+2} |b_{k+2}) \geq \delta \right\} } \cdots \\ \cdots \mathbf{1}_{\left \{\prob(\iota_{k+L-1} |b_{k+L-1})\geq \delta \right\}}\underset{{z_{k+L}}}{\mathbb{E}} \!\Big[ \! \mathbf{1}_{\left \{\!\prob(\iota_{k+L} |b_{k+L}) \geq \delta \right\} }   \Big|a_{k+L-1}\!,b_{k+L-1}\! \Big] \cdots \Big|a_{k+1}\!,b_{k+1}\! \Big] \!\Big|a_{k},b_{k}\!\Big]. 
	\end{gathered}
\end{equation}
While appearing similar, the two formulations are genuinely different.  Let us for clarity rewrite again  \eqref{eq:ChanceSafety} including the horizon this time 
\begin{equation}
	\label{eq:ChanceContraintSafety} 
	\begin{gathered}
		\prob( \iota_k | b_k) \underset{{\bar{z}_{k+1}}}{\mathbb{E}}\Big[ \prob(\iota_{k+1} |\bar{b}_{k+1}) \underset{{\bar{z}_{k+2}}}{\mathbb{E}}\Big[ \prob(\iota_{k+2} |\bar{b}_{k+2}) \cdots \\
		\prob(\iota_{k+L-1} |\bar{b}_{k+L-1})\underset{{\bar{z}_{k+L}}}{\mathbb{E}} \!\Big[ \prob(\iota_{k+L} |\bar{b}_{k+L})   \Big|a_{k+L-1}\!,\bar{b}^{\mathrm{safe}}_{k+L-1}\! \Big]  \cdots\Big|a_{k+1},\bar{b}_{k+1}^{\mathrm{safe}} \Big]  \Big|a_{k},\bar{b}_{k}^{\mathrm{safe}} \Big].
	\end{gathered}	 
\end{equation}
 
\subsubsection{Undesirable Behavior due to Subtree resolution and inability to control on the level of future beliefs} \label{sec:SubtreeExplanation}
Comparing \eqref{eq:OurSafety} and \eqref{eq:ChanceContraintSafety}, we elicit that our formulation \eqref{eq:OurSafety} is actually thresholding each posterior belief in the tree while the \eqref{eq:ChanceContraintSafety} solely the subtrees. 
For clearly seeing the differences let us focus on the myopic setting ($L=1$).	Let us present our formulation (Probabilistic Constraint) versus the conventional formulation (Chance Constraint):
\begin{align}
	&\text{\texttt{PC}:  } \mathbf{1}_{\left\{ \prob(\iota_k | b_{k}) \geq \delta \right\}} \cdot\underset{z_{k+1}}{\mathbb{E}}\Big[\mathbf{1}_{\left\{ \prob(\iota_{k+1}| \psi(b_k, a_k,z_{k+1})) \geq \delta\right\}}\Big|b_k, a_k\! \Big] \!\! \geq 1- \epsilon , \label{eq:OurSafetyMyopic} \\
	& \text{\texttt{CC}:    }  \prob(\iota_k | b_{k}) \cdot \underset{\bar{z}_{k+1}}{\mathbb{E}}\Big[ \prob(\iota_{k+1} | \psi(b_{k}^{\mathrm{safe}},a_{k},\bar{z}_{k+1}))\Big|b_{k}^{\mathrm{safe}}, a_{k} \Big] \geq \delta. \label{eq:ChanceSafetyMyopic}
\end{align}
From the above we immediately  see that our approach (\texttt{PC}) is truly distribution aware as it counts the number of safe posteriors because of the indicator outside the inequality involving the probability value. In contrast, the conventional formulation (\texttt{CC}) merely averages the posterior probabilities and asks if on average they are larger than or equal to $\delta$. 

Let us give a specific example. Suppose the belief $b_k$ is safe. 
Assume that $\delta = 0.7$ and we have three equiprobable observations  in a myopic setting such that $\prob(\mathbf{1}\left\{ x_{k+1}\!\in\!  \mathcal{X}^{\mathrm{safe}}_{k+1}\right\} | \psi(b_{k}^{\mathrm{safe}},a_{k},z^j_{k+1}))$ equals $0.1$, $1.0$, $1.0$ for $j=1, 2, 3$ respectively. On average we have  exactly $0.7$ such that \eqref{eq:ChanceSafetyMyopic}  is fulfilled.  However one belief is extremely unsafe. In contrast, as our formulation \eqref{eq:OurSafetyMyopic} is distribution-aware, it is aware that only two out of the three observation sequences satisfy the constraint. For example, it will declare (the sampling-based approximation of) \eqref{eq:OurSafetyMyopic} is not satisfied  
if (e.g.) we select $\epsilon = 0$ and $\delta=0.7$. 
\subsubsection{Undesirable Behavior due to growing horizon}	\label{sec:ScalingExplanation}
The reformulation \eqref{eq:ChanceSafety} of the conventional constraint allows to reveal its another undesired characteristic.   
With a growing horizon $L$, the probability $\prob(\bigwedge_{i=k}^{k+L}\mathbf{1}_{\left\{ x_i \in   \mathcal{X}^{\mathrm{safe}}_i\right\}}| b_k, \pi)$ will decay. 
To see that explicitly let us start from the terminal time index $k+L$. We observe that
\begin{align}
	\prob(\iota_{k+L-1} |\bar{b}_{k+L-1})\max_{z_{k+L}}\!\prob(\iota_{k+L} |\bar{b}_{k+L}) \geq \prob(\iota_{k+L-1} |\bar{b}_{k+L-1})\!\underset{{\bar{z}_{k+L}}}{\mathbb{E}}\!\Big[\! \prob(\iota_{k+L} |\bar{b}_{k+L})\Big|a_{k+L-1},\bar{b}_{k+L-1}^{\mathrm{safe}}\! \Big] \geq \delta  . \nonumber
\end{align} 
Continuing until the present time we obtain the product of probabilities of multiplicands, making it harder and harder to fulfill the constraint \eqref{eq:SotaSafety} with growing horizon.   This requirement making each belief in the belief tree from which emanates an action to fillfill much harder constraint (the threshold is much larger than $\delta$).  In fact, we never now which actual threshold each posterior belief shall be larger from. 
In contrast, in our formulation \eqref{eq:ProbSafeTraj} we require being larger or equal to $\delta$ solely from the multiplicands of \eqref{eq:ProbSafeTraj}, so  it does not suffer from such a problem and scales to the growing horizon.
\subsubsection{Summary}
To summarize, the conventional collision avoidance constraint \eqref{eq:ChanceSafety} has several key differences versus ours \eqref{eq:OurSafety}.    
\begin{enumerate}
	\item Instead of looking into safe state trajectories we are dealing with safe posterior beliefs trajectories. Our approach uses same  distribution of the observations each step ahead as for the reward $p(z'|b, a)$ while the chance constraint builds upon  $\probd(z'|a,b,\mathbf{1}\left\{ x\!\in\!  \mathcal{X}^{\text{safe}}\right\})$. The advantage of chance constraint is that only the safe portion of the belief is propagated into future (with an action and the observation). Meaning the faulty trajectory is not continued after the unsafe event.
	\item Our constraint formulation is distribution-aware, while the conventional is addressing this aspect in a limiting way. To repharse that, chance constraint operates with the resolution of the subtrees and it is oblivious to the posterior beliefs themselves (Fig.~\ref{fig:ChanceConstraint}). In addition the chance constraint is lacking the control capabilities. Increasing $\epsilon>0$ is not possible in chance constrained setting.   
	\item When the horizon grows the conventional constraint becomes more and more conservative up to violation. In other words the formulation does not scale with horizon, unless $\delta$ is appropriately adjusted. Our formulation does not suffer from this limitation.  For proper comparison with our formulation, we propose to do scaling to $\delta$. Our baseline is Alg.~\ref{alg:constrainedSSbaseline} which will become apparent shortly. It can be applied with scaling or without. To the best of our knowledge it is a novel formulation on its own. 
	\item In our formulation the distribution of future beliefs/observations, and belief update are matched in objective \eqref{eq:ConstrObj} and in the outer constraint \eqref{eq:OuterConstr}, as opposed to chance constrained formulation \eqref{eq:SotaObj} and \eqref{eq:SotaSafety}.  
\end{enumerate}

Before we move to the next section let us mention that a specific variation of \eqref{eq:SampleOuterConstraint} in the context of safety with $\epsilon=0$ is 
\begin{align}
	\forall j = 1 \dots m  \quad  \bigg(\prod_{\ell=k}^{k+L} \mathbf{1}\left\{ \prob(x_{\ell} \in \mathcal{X}^{\mathrm{safe}}_{\ell}| b^j_{\ell}) \geq \delta\right\} \bigg)= 1,
\end{align} 
where $m=m_d^L$, namely, $m_d$ to the power of $L$, and where $m_d$ is the number of observations expanded from each node of the belief tree as in Alg. \ref{alg:constrainedSS}.

\section{Approach to Chance-constrained Continuous $\rho$-POMDP}\label{sec:ChanceConstrainedContPOMDP}
The investigation of chance constraints merged with a general belief-dependent reward has led us to need an algorithmic extension. 
As mentioned, there are two prominent online approaches for solving a continuous POMDP with belief-dependent rewards in a nonparametric domain: SS \cite{Kearns02jml}, and PFT-DPW \cite{Sunberg17arxiv}. In continuous domains, it is unclear how to apply heuristics guided forward search described by \cite{Santana16aaai}. Instead of using the heuristics, we utilize the Bellman principle to resolve that issue. Moreover, as we observe from equation \eqref{eq:ChanceSafety} the observations for the objective \eqref{eq:SotaObj} and the chance constraint \eqref{eq:ChanceSafety} have different distributions as well as the belief updates.

In \cite{Santana16aaai}, this is addressed by considering a discrete and finite observation space and exhaustively expanding all the observations. Such an approach is not possible in a continuous setting. To tackle this issue, we resort to importance sampling such that only a single set of observations is maintained. Let us emphasize that such a problem was not addressed so far. Note that this issue does not exist in our probabilistic approach Alg.~\ref{alg:constrainedSS}. Moreover, the disparity of the belief updates was not addressed at all.

Next, we contribute an {\bf importance sampling} based approach for chance-constrained {\bf continuous} POMDP and its efficient approximation.

\subsection{Importance Sampling Approach for Chance-constrained Continuous POMDP}\label{sec:Importance}
As we have seen in  Lemma \ref{lem:AvSafePosteriors} and equation \eqref{eq:Qfunc}  the distributions of the observations of the chance constraint and the action value function are different, as well as the belief update.  

Let us observe a myopic setting. Since  we draw observations sequentially the extension to an arbitrary horizon is straightforward. In case of the objective function, the desired probability density is $\probd(z_{i+1} | b_i, a_i)$, whereas for the chance constraint we are dealing with $\probd(\bar{z}_{i+1} | b_i, \mathbf{1}\{x_i \in \mathcal{X}_i^{\mathrm{safe}}\}, a_i)$.  As a result of this discrepancy there are two different distributions of future observations. We note that our probabilistic constraint formulation (see Section \ref{sec:SimAndDiff}) does not exhibit this discrepancy.
With chance constraints, we should have thus sampled from each and effectively constructed two belief trees. Putting aside that it would be an enormous computational burden, the question how to apply a consistent policy in both trees requires clarification.  

To avoid this, we suggest to construct a single belief tree where observations are sampled from  $\probd(z_{i+1} | b_i, a_i)$  and properly re-weighted via Importance Sampling (IS) for the evaluation of the chance constraint.  

Importantly, $\probd(\bar{z}_{i+1} | b_i, \mathbf{1}\{x_i \in \mathcal{X}_i^{\mathrm{safe}}\}, a_i)$ is {\bf absolutely continuous} with respect to $\probd(z_{i+1} | b_i, a_i)$. 
\begin{lem}[Absolute Continuity] \label{lem:absCont}
	$\probd(\bar{z}_{i+1} | b_i, \mathbf{1}\{x_i \in \mathcal{X}_i^{\mathrm{safe}}\}, a_i) \ll \probd(z_{i+1} | b_i, a_i)$.	
\end{lem}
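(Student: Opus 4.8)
The plan is to establish the absolute continuity by exhibiting a \emph{pointwise domination} of one observation density by a constant multiple of the other; this is the strongest possible form of the claim and yields the importance weight explicitly, which is exactly what the subsequent importance-sampling scheme needs. First I would write out both marginal observation densities by propagating the respective beliefs one step through the motion and observation models. For the nominal (reward) distribution,
\begin{equation*}
	\probd(z_{i+1}\mid b_i,a_i)=\int_{\mathcal{X}}\!\int_{\mathcal{X}}\probd_Z(z_{i+1}\mid x_{i+1})\,\probd_T(x_{i+1}\mid x_i,a_i)\,b_i(x_i)\,\mathrm{d}x_i\,\mathrm{d}x_{i+1},
\end{equation*}
while the chance-constraint distribution is obtained by replacing $b_i$ with the safe belief $b_i^{\mathrm{safe}}$ of \eqref{eq:SafeBelief}, giving the same double integral with $b_i^{\mathrm{safe}}(x_i)$ in place of $b_i(x_i)$.

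The key step is to compare the two integrands. Writing $Z\bydef\prob(\iota_i\mid b_i)=\int_{\mathcal{X}}\mathbf{1}\{x\in\mathcal{X}_i^{\mathrm{safe}}\}\,b_i(x)\,\mathrm{d}x$ for the normalizer in \eqref{eq:SafeBelief}, the definition of the safe belief immediately gives the pointwise bound $b_i^{\mathrm{safe}}(x_i)=\tfrac{1}{Z}\mathbf{1}\{x_i\in\mathcal{X}_i^{\mathrm{safe}}\}\,b_i(x_i)\le\tfrac{1}{Z}\,b_i(x_i)$, since the indicator never exceeds one. Because the kernel $\probd_Z(z\mid x_{i+1})\probd_T(x_{i+1}\mid x_i,a_i)$ is nonnegative, integrating this inequality preserves it, so at every $z\in\mathcal{Z}$,
\begin{equation*}
	\probd(\bar z_{i+1}\!=\!z\mid b_i,\mathbf{1}\{x_i\in\mathcal{X}_i^{\mathrm{safe}}\},a_i)\;\le\;\tfrac{1}{Z}\,\probd(z_{i+1}\!=\!z\mid b_i,a_i).
\end{equation*}

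From this pointwise density domination the lemma follows at once: for any measurable $A\subseteq\mathcal{Z}$ we integrate to get $\probd(\,\bar z_{i+1}\!\in\!A\mid\cdots)\le\tfrac{1}{Z}\,\probd(z_{i+1}\!\in\!A\mid b_i,a_i)$, so a $\probd(\cdot\mid b_i,a_i)$-null set is also null under the safe distribution, which is precisely $\ll$. The same bound identifies the Radon--Nikodym derivative (the importance weight) as bounded by $1/Z$, a fact worth recording for the variance control of the sampler. The only real obstacle is a well-posedness condition rather than a hard inequality: the argument requires $Z=\prob(\iota_i\mid b_i)>0$, i.e.\ the belief must place positive mass on the safe region, for $b_i^{\mathrm{safe}}$ to be defined at all. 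This is harmless here, since we already assume the acquired belief satisfies the inner constraint (so $\prob(\iota_k\mid b_k)=1$ at the root, and more generally the propagated beliefs we weight are those retained as safe). I would state this positivity as a standing assumption and otherwise the proof is the short chain above.
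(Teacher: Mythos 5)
Your proof is correct, but it follows a different route than the paper's. The paper argues by contradiction: it supposes a point $\zeta$ with $\probd(z_{i+1}=\zeta\mid b_i,a_i)=0$ yet $\probd(\bar z_{i+1}=\zeta\mid b_i,\mathbf{1}\{x_i\in\mathcal{X}_i^{\mathrm{safe}}\},a_i)>0$, notes that the vanishing of $\int \probd_Z(\zeta\mid x')\,p(x'\mid b_i,a_i)\,\mathrm{d}x'$ forces the integrand $\probd_Z(\zeta\mid x')\,p(x'\mid b_i,a_i)$ to vanish, and then reasons case-by-case on which factor is zero to contradict positivity of the safe-conditioned density. Your argument instead establishes the pointwise domination $b_i^{\mathrm{safe}}(x)\le \tfrac{1}{Z}\,b_i(x)$ directly from \eqref{eq:SafeBelief} and pushes it through the nonnegative kernel, obtaining
\begin{equation*}
	\probd\big(\bar z_{i+1}=z\mid b_i,\mathbf{1}\{x_i\in\mathcal{X}_i^{\mathrm{safe}}\},a_i\big)\;\le\;\tfrac{1}{Z}\,\probd\big(z_{i+1}=z\mid b_i,a_i\big)\qquad\forall z,
\end{equation*}
from which absolute continuity is immediate. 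This buys you two things the paper's proof does not deliver: (i) a quantitative conclusion, namely that the Radon--Nikodym derivative (the importance weight) is bounded by $1/Z = 1/\prob(\iota_i\mid b_i)$, which is directly useful for variance control of the IS estimator in Section \ref{sec:Importance}; and (ii) a cleaner treatment of the measure-theoretic step --- the paper's claim that the integral being zero forces the integrand to be zero ``for all $x'$'' really only holds almost everywhere, and its case analysis implicitly needs exactly the support-containment fact (safe propagated density vanishes wherever the nominal one does) that your domination inequality makes explicit. Your flagging of the standing assumption $Z>0$ is also apt; the paper handles this implicitly (beliefs with $r_b=1$ are pruned, cf.\ Appendix \ref{sec:ExecutionRiskBound}), so the assumption is consistent with the algorithmic setting.
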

\noindent We provide the proof in Appendix~\ref{proof:absCont}. 
  Since the absolute continuity holds, we can safely use IS.

Specifically, suppose we sampled $m_d$ samples $\{z^j_{i+1}\}_{j=1}^{m_d} \sim \probd(z_{i+1} | b_i, a_i)$. From now on, we can think about
\begin{align}
	\hat{\probd}_{(m_d)}(z_{i+1} | b_i, a_i) = \frac{1}{m^d}\sum_{j=1}^{m_d} \delta(z_{i+1}- z^j_{i+1}),
\end{align}
as the density of the discrete probability (Fig.~\ref{fig:CCImportance}).   Leveraging IS, we obtain the desired probability density utilizing the same samples through the following  manipulation 
\begin{align}
	&\hat{\probd}_{(m_d)}(\bar{z}_{i+1} | b_i, \mathbf{1}_{\{x_i \in \mathcal{X}_i^{\mathrm{safe}}\}}, a_i) =\frac{1}{\sum_{j=1}^{m_d} w^{\bar{z},j}_{i+1}} \sum_{j=1}^{m_d} w^{\bar{z},j}_{i+1} \delta(\bar{z}_{i+1}- z^j_{i+1}), \label{eq:ImportanceApprox}
\end{align}
where the $j$-th weight  is given by
\begin{align}	
	w^{\bar{z},j}_{i+1} =\frac{1}{m_d} \frac{\probd(\bar{z}_{i+1} =  z^j_{i+1}| b_i, \mathbf{1}\{x_i \in \mathcal{X}_i^{\mathrm{safe}}\}, a_i)}{\probd(z_{i+1} =  z^j_{i+1}| b_i, a_i)}.   
\end{align}
In Appendix~\ref{sec:ImportanceWeights} we specify expressions for the nominator and denominator. 

Let us clarify again that with the proposed IS-based approach, the sampled observations are used for both the objective and the chance constraint (Fig.~\ref{fig:CCImportance}). However, for the chance constraint we re-weight the samples using Importance weight to obtain the correct expected value according to \eqref{eq:ChanceSafety}.

\begin{figure}[t] 
	\centering
	\begin{minipage}[t]{0.4\textwidth}
		\centering 
		\includegraphics[width=\textwidth]{./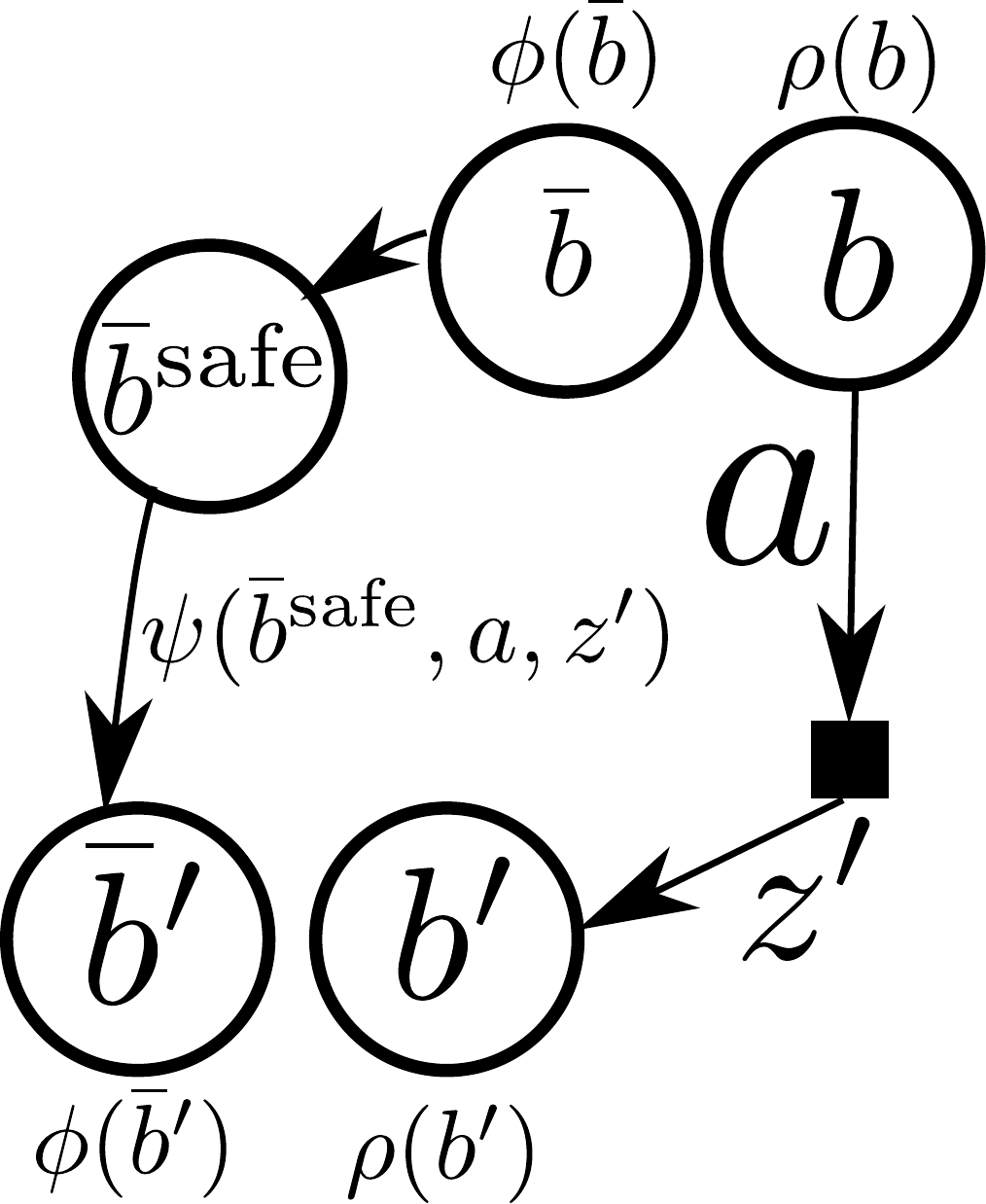}
		\subcaption{}
		\label{fig:CCImportance}
	\end{minipage}%
	\hfill
	\begin{minipage}[t]{0.4\textwidth}
		\centering 
		\includegraphics[width=\textwidth]{./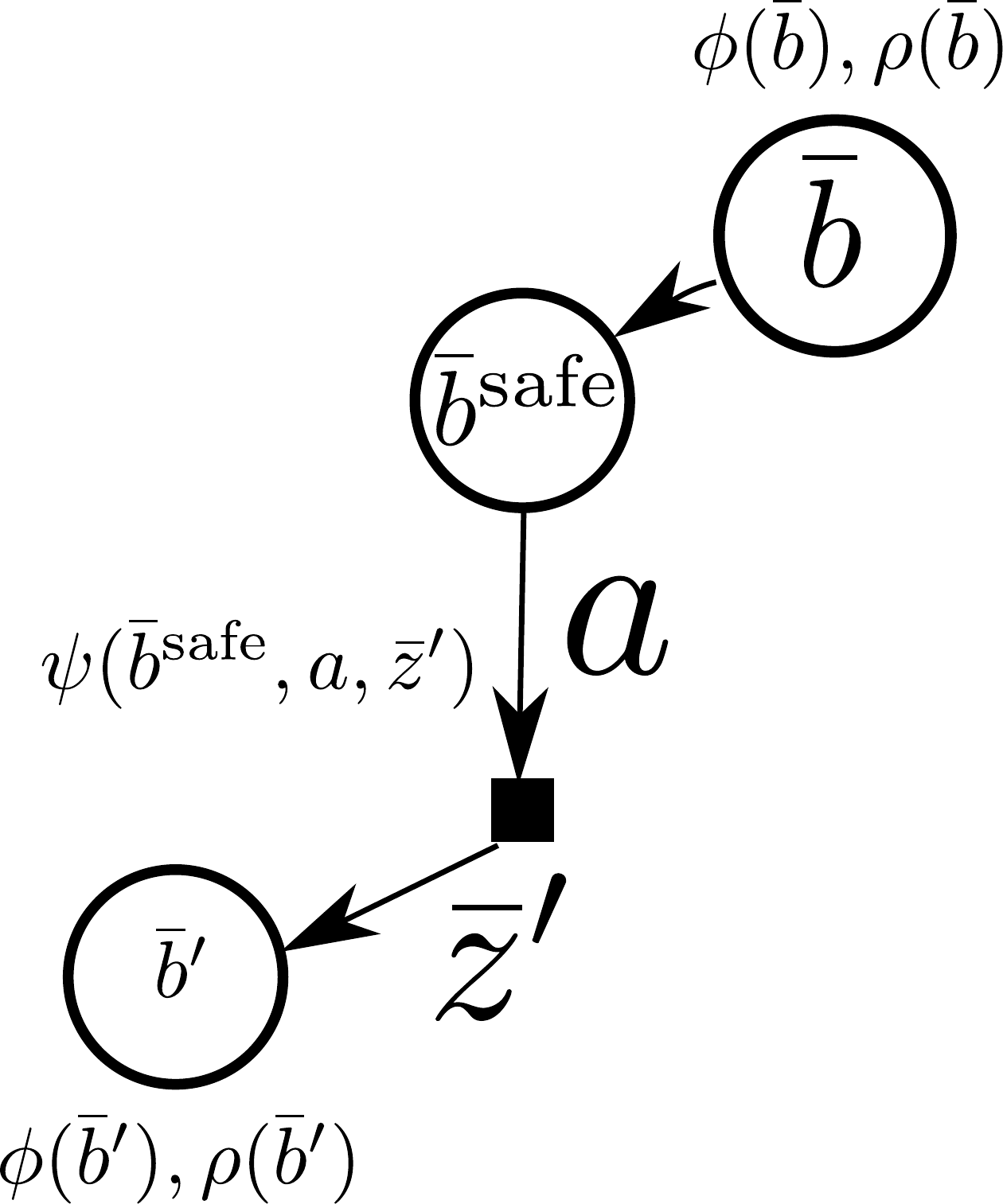}
		\subcaption{}
		\label{fig:CCNoImportance}
	\end{minipage}  
	\caption{On both illustrations the operator $\phi$ is expectation as showed in eq \eqref{eq:SafetyOperator}. \textbf{(a)} Visualization of the belief tree obtained by the importance sampling (Section \ref{sec:Importance}). For actual belief from real world inference in time $k$, we set $b_{k} = \bar{b}_k$.\textbf{(b)} No importance sampling. Same distribution of the observations and belief update is  used for belief dependent rewards and chance constraint.}
\end{figure} 

\subsection{Necessary Condition proposed by \cite{Santana16aaai} for Feasibility of Chance Constraint} \label{sec:NecessaryPruningCond}
Moreover, we also extend the necessary condition for feasibility of chance constraint proposed by \cite{Santana16aaai} to continuous spaces through importance sampling. This is one of the building blocks of our approach to solve continuous belief dependent chance constrained POMDP, see Alg.~\ref{alg:constrainedSSbaselineImportance} and Alg.~\ref{alg:constrainedSSbaseline}. We endow our IS approach with a pruning mechanism.

The paper \cite{Santana16aaai} utilizes the necessary condition for feasibility of an action. Let us restate their future execution risk definition $\mathrm{er}(b_{k+1},\pi)$ and the constraint
\begin{align}
	&\mathrm{er}(b_{k+1},\pi) \bydef 1 -\prob(\mathbf{1}_{\left\{\tau \in  \times_{i=1}^L \mathcal{X}_{k+i}^{\text{safe}}\right\}}| b_{k+1}, \pi) \leq \Delta \bydef 1-\delta. \label{eq:FutureRisk}
\end{align}
The risk at the $k$-th time step $r_b(b_k)$ is defined by 
\begin{align}
	\prob\Bigg(\mathbf{1}\left\{ x_k\!\in\!  \mathcal{X}^{\text{safe}}_k\right\} | b_k \Bigg) = \int_{x_k} b_k(x_k) \mathbf{1}\left\{ x_k\!\in\!  \mathcal{X}^{\text{safe}}_k\right\}\mathrm{d} x_k  = 1-r_b(b_k)
\end{align}
For completeness let us present the following Lemma, which is merely rewriting with our notations the pruning condition from \cite{Santana16aaai}, extended to the continuous spaces with importance sampling.
\begin{lem}[Necessary Condition for Feasibility of Chance Constrained action] \label{lem:NecessaryChance}
	Fix $0 \leq \Delta \leq 1$ and suppose that $\mathrm{er}(b_{k},\pi) \leq \Delta$.  The following holds $\forall i \in 1:m$
	\begin{align}
		&    r_b(\bar{b}^i_{k+1}) \leq \frac{1}{w^{\bar{z},i}_{k+1}}\Bigg(\frac{\Delta - r_b(\bar{b}_k)}{(1 -r_{b}(\bar{b}_k) )} - \sum_{\substack{{j=1} \\ {j\neq i}}}^{m} w^{\bar{z},j}_{k+1} r_b(\bar{b}^j_{k+1})\Bigg),  \label{eq:ChanceConstrPruning}
	\end{align}
\end{lem}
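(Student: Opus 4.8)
The plan is to derive \eqref{eq:ChanceConstrPruning} by peeling off a single level of the recursion established in Lemma~\ref{lem:AvSafePosteriors} and then discretising the resulting one-step expectation with the importance-sampling estimator of Section~\ref{sec:Importance}. Writing the future execution risk as $1-\mathrm{er}(\bar b_k,\pi)=\prob(\bigwedge_{i=k}^{k+L}\iota_i\mid \bar b_k,\pi)$ and applying Lemma~\ref{lem:AvSafePosteriors} to factor out the leftmost level, the safety probability splits as
\[
1-\mathrm{er}(\bar b_k,\pi) = \big(1-r_b(\bar b_k)\big)\,\underset{\bar z_{k+1}}{\mathbb{E}}\big[\,1-\mathrm{er}(\bar b_{k+1},\pi)\,\big|\,a_k,\bar b_k^{\mathrm{safe}}\big],
\]
because $\prob(\iota_k\mid \bar b_k)=1-r_b(\bar b_k)$ by the definition of $r_b$, and the inner conditional expectation is exactly the remaining-horizon safety probability $\prob(\bigwedge_{i=k+1}^{k+L}\iota_i\mid \bar b_{k+1},\pi)=1-\mathrm{er}(\bar b_{k+1},\pi)$.

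Next I would invoke the hypothesis $\mathrm{er}(\bar b_k,\pi)\le\Delta$, i.e.\ $1-\mathrm{er}(\bar b_k,\pi)\ge 1-\Delta$, and rearrange the factorisation to isolate the conditional expectation, obtaining $\underset{\bar z_{k+1}}{\mathbb{E}}[\,1-\mathrm{er}(\bar b_{k+1},\pi)\,\mid a_k,\bar b_k^{\mathrm{safe}}]\ge (1-\Delta)/(1-r_b(\bar b_k))$. I would then use the elementary monotonicity $\mathrm{er}(\bar b_{k+1},\pi)\ge r_b(\bar b_{k+1})$ — being safe over the whole remaining horizon implies being safe at step $k+1$, so $\prob(\bigwedge_{i=k+1}^{k+L}\iota_i\mid\bar b_{k+1})\le\prob(\iota_{k+1}\mid\bar b_{k+1})$ — to replace $1-\mathrm{er}$ by the larger quantity $1-r_b$ inside the expectation, and rearrange into a bound on the expected one-step-ahead risk:
\[
\underset{\bar z_{k+1}}{\mathbb{E}}\big[\,r_b(\bar b_{k+1})\,\big|\,a_k,\bar b_k^{\mathrm{safe}}\big] \le \frac{\Delta-r_b(\bar b_k)}{1-r_b(\bar b_k)}.
\]

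Finally I would discretise the left-hand side with the importance-sampling estimator \eqref{eq:ImportanceApprox}. Since Lemma~\ref{lem:absCont} guarantees absolute continuity, observations drawn from $\probd(z_{k+1}\mid \bar b_k,a_k)$ may be reweighted by $w^{\bar z,j}_{k+1}$, so $\underset{\bar z_{k+1}}{\mathbb{E}}[r_b(\bar b_{k+1})]\approx\sum_{j=1}^{m} w^{\bar z,j}_{k+1}\, r_b(\bar b^{\,j}_{k+1})$. Substituting into the bound, isolating the $i$-th summand, and dividing by $w^{\bar z,i}_{k+1}>0$ yields exactly \eqref{eq:ChanceConstrPruning}. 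Read contrapositively, violation of this inequality for some $i$ certifies $\mathrm{er}(\bar b_k,\pi)>\Delta$, which is precisely the pruning rule.

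I expect the main obstacle to lie in the last step rather than in the algebra of the first two: one must be careful about the normalisation of the importance weights. The self-normalised estimator divides by $\sum_j w^{\bar z,j}_{k+1}$, whereas \eqref{eq:ChanceConstrPruning} implicitly treats the weights as summing to one; I would justify this either by normalising the weights at each node, or by appealing to the fact that $\sum_j w^{\bar z,j}_{k+1}$ is an unbiased estimator of the total mass of the target density and hence concentrates at one as $m$ grows. The monotonicity step also deserves to be spelled out explicitly, since it is exactly what collapses the horizon-$L$ execution risk into the one-step-ahead quantity on which the pruning test can act.
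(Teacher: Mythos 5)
Your proposal is correct and follows essentially the same route as the paper's proof: the same one-step recursion $\mathrm{er}(\bar{b}_k,\pi)=r_b(\bar{b}_k)+(1-r_b(\bar{b}_k))\,\mathbb{E}_{\bar{z}_{k+1}}\!\left[\mathrm{er}(\bar{b}_{k+1},\pi)\right]$ (which the paper derives inline via the marginalization underlying Lemma~\ref{lem:AvSafePosteriors}), the same hypothesis $\mathrm{er}(\bar{b}_k,\pi)\le\Delta$, the same monotonicity $r_b(\bar{b}_{k+1})\le\mathrm{er}(\bar{b}_{k+1},\pi)$, and the same importance-sampling discretization with normalized weights, differing only in that you apply the monotonicity inside the expectation before discretizing while the paper discretizes first and substitutes $r_b$ for $\mathrm{er}$ as its final step. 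Your flagged concern about weight normalization is resolved exactly as you suggest: the paper simply assumes the weights are already normalized at that point in the derivation.
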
	
We provide the proof in  Appendix~\ref{sec:ExecutionRiskBound}. Let us emphasize that our contribution here is solely the extension of this condition to continuous spaces via importance sampling.

Eq.\eqref{eq:ChanceConstrPruning} is used in our Alg.~\ref{alg:constrainedSSbaselineImportance} and further in Alg.~\ref{alg:constrainedSSbaseline}, specific lines $24$ and $21$ respectively.
 
We conclude this section by observing that it is possible that the action violates the chance constraint but  \eqref{eq:ChanceConstrPruning} still holds. This is in strike contrast to our probabilistic constraint pruning, as we proved in Theorem~\ref{thm:pruning}.

\subsection{Approximation} \label{sec:ChanceApprox}
The IS approach introduced in section \ref{sec:Importance} converges to the theoretical solution when $m_d \to \infty$. However, the mechanics of IS introduces a computational burden. To ameliorate the situation from the computational point of view let  us ask another question. Can we relinquish the requirement of IS? 

Specifically, say, we are using $\probd(\bar{z}_{i+1} | b_i, \mathbf{1}\{x_i \in \mathcal{X}_i^{\mathrm{safe}}\}, a_i) \quad i=k:k+L-1$ and corresponding beliefs $\bar{b}_{i+1}$ for the calculation of belief dependent reward. In other words we change the conventional objective as such.  Instead of using 
\begin{align}
	V^L(b_k, \pi)=\rho(b_k) + \underset{{\color{inkscapePurple} z_{k+1}}}{\mathbb{E}}\Big[ \rho(b_{k+1}) + \underset{{\color{inkscapePurple}z_{k+2}}}{\mathbb{E}}\Big[ \rho(b_{k+2}) \dots \Big|a_{k+1},b_{k+1} \Big]  \Big|a_{k},b_{k} \Big],		\label{eq:ValueBellman}
\end{align}
we use the distribution of the observations and the belief update from the chance constraints 
\begin{align}
	U^L(b_k, \pi)=\rho(b_k) + \underset{{\color{inkscapePurple}{\bar{z}_{k+1}}}}{\mathbb{E}}\Big[ \rho({\color{emeraldgreen}\bar{b}_{k+1}}) + \underset{{\color{inkscapePurple}{\bar{z}_{k+2}}}}{\mathbb{E}}\Big[ \rho({\color{emeraldgreen} \bar{b}_{k+2}}) \dots \Big|a_{k+1},{\color{emeraldgreen} \bar{b}_{k+1}^{\mathrm{safe}}} \Big]  \Big|a_{k},{\color{emeraldgreen}\bar{b}_{k}^{\mathrm{safe}}} \Big]. \label{eq:MatchedChanceObj}
\end{align}
The above approximation can be interpreted as follows. Although we calculate the belief dependent rewards on the entire belief, following the belief dependent  reward calculation only the safe state particles of the posterior belief are pushed forward in time with action and the observation. This behavior is identical to that we obtained in the chance constraint \eqref{eq:ChanceSafety}. We face matched distribution of future beliefs in chance constraints and in the objective \eqref{eq:MatchedChanceObj}.   

The benefit of such an approximation with significantly faster decision making. Further we  demonstrate empirically the substantial acceleration  with good performance quality. 

What will be the impact of this approximation on decision making?  This question has not been addressed to the best of our knowledge. Next, we analyze the above approximation.

\subsubsection*{Analysis}

To properly analyze the situation let us focus on the single future step ahead.  Namely, in the proposed objective \eqref{eq:MatchedChanceObj} we have 
\begin{align}
\underset{\bar{z}_{\ell+1}}{\int} \rho(\psi(\bar{b}^{\mathrm{safe}}_{\ell}, a_{\ell},  \bar{z}_{\ell+1} ))
{\color{inkscapePurple}{\probd(\bar{z}_{\ell+1}| \bar{b}_{\ell}, \mathbf{1}\{x_{\ell} \in \mathcal{X}^{\mathrm{safe}}\}, a_{\ell})}} \mathrm{d} \bar{z}_{\ell+1}, \label{eq:ChanceObjAnalysis}
\end{align} 
versus the conventional \eqref{eq:ValueBellman} in accordance to    
\begin{align}
\underset{z_{\ell+1}}{\int}\rho(\psi(b_{\ell}, a_{\ell},  z_{\ell+1} )) {\color{inkscapePurple}{\probd(z_{\ell+1}| b_{\ell}, a_{\ell})}} \mathrm{d} z_{\ell+1}.	\label{eq:ObjAnalysis}
\end{align}
For a rigorous derivation please refer to Appendix~\ref{sec:ObjectivesBMDP}.  This discrepancy is a result of the separation, in the planning phase, of the chance constraint satisfaction (falling trajectories are not pushed forward in time) and the future return maximization (regular belief/observations in the belief tree).   

Our vision is that on the small extent of the quality of the optimal solution, utilization of the distribution \eqref{eq:ChanceObjAnalysis} for the reward will accelerate the performance by avoiding the need for IS. In other words, we suggest, as an approximation, to utilize \eqref{eq:ChanceObjAnalysis} \emph{also} for the reward.

Let us now turn to the analysis of such an approximation. In fact, the dependence on different beliefs $\bar{b}_{\ell}$ in \eqref{eq:ChanceObjAnalysis} and $b_{\ell}$ in \eqref{eq:ObjAnalysis} complicates the situation. We conduct an analysis for horizon two ($L=2$). 

We pinpoint that we start from the actual safe belief $b_{k}= \bar{b}^{\mathrm{safe}}_k$. Therefore, in the first step ahead, the way of pushing forward the belief with an action and observation is identical in two cases. This motivates us to assume $b_{\ell} = \bar{b_{\ell}}$, and instead of 	$
\underset{\bar{z}_{\ell+1}}{\mathbb{E}}\Big[ \rho(\bar{b}_{\ell+1})  \Big|a_{\ell},{\color{red}{\bar{b}_{\ell}}}, \mathbf{1}\{x_{\ell} \in \mathcal{X}^{\mathrm{safe}}\} \Big] $ we shall examine $	\underset{\bar{z}_{\ell+1}}{\mathbb{E}}\Big[ \rho(\bar{b}_{\ell+1})  \Big|a_{\ell},b_{\ell}, \mathbf{1}\{x_{\ell} \in \mathcal{X}^{\mathrm{safe}}\} \Big]$ versus $\underset{z_{\ell+1}}{\mathbb{E}}\Big[ \rho(b_{\ell+1})  \Big|a_{\ell},b_{\ell} \Big]$. 

We need to analyze the influence of different observation likelihood (marked by {\color{inkscapePurple}{purple}} color) as well as the fact    $\psi(b_{\ell}, a_{\ell},  z_{\ell+1})  \neq \psi(b^{\mathrm{safe}}_{\ell}, a_{\ell},  \bar{z}_{\ell+1}) $ even if $\bar{z}_{\ell+1} = z_{\ell+1}$ (integration variable).

We now analyze the above under several assumptions; analysis of the full case, as well as of an arbitrary horizon, is left for future work. We are going to show that under maximum likelihood observation assumption the impact of our approximation is negligible up until horizon $L=2$.

\begin{lem}[Maximum likelihood observation] \label{lem:ml}
	Assume that $\argmax_{x_{\ell}} b_{\ell}(x_{\ell}) \in \mathcal{X}^{\mathrm{safe}}$. We obtain the same maximum likelihood observation $z^{\mathrm{ML}}_{\ell+1}$ in both cases.
\end{lem}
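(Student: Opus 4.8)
The plan is to reduce the claim to a statement purely about the mode of the belief, since the maximum likelihood observation $z^{\mathrm{ML}}_{\ell+1}$ is by definition the (noise-free, or most likely) measurement generated by propagating the maximum likelihood state $x^{\mathrm{ML}}_{\ell} \bydef \argmax_{x_{\ell}} b_{\ell}(x_{\ell})$ through the transition and observation models. Concretely, one sets $x^{\mathrm{ML}}_{\ell+1} = \argmax_{x_{\ell+1}} \probd_T(x_{\ell+1}\mid x^{\mathrm{ML}}_{\ell}, a_{\ell})$ and $z^{\mathrm{ML}}_{\ell+1} = \argmax_{z_{\ell+1}} \probd_Z(z_{\ell+1}\mid x^{\mathrm{ML}}_{\ell+1})$. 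In the chance-constrained branch the same chain is applied, but starting from the safe belief $b^{\mathrm{safe}}_{\ell}$ of \eqref{eq:SafeBelief} rather than from $b_{\ell}$. Hence it suffices to prove $\argmax_{x_{\ell}} b^{\mathrm{safe}}_{\ell}(x_{\ell}) = \argmax_{x_{\ell}} b_{\ell}(x_{\ell})$; the remainder of the derivation is then identical in the two cases, because it depends on the belief only through this common mode.

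The central step is therefore to show that truncation to the safe set followed by renormalization leaves the mode unchanged under the hypothesis $x^{\mathrm{ML}}_{\ell} \in \mathcal{X}^{\mathrm{safe}}$. From the definition \eqref{eq:SafeBelief}, writing $c \bydef \int_{\xi} \mathbf{1}\{\xi \in \mathcal{X}^{\mathrm{safe}}\} b_{\ell}(\xi)\,\mathrm{d}\xi \in (0,1]$, we have $b^{\mathrm{safe}}_{\ell}(x) = b_{\ell}(x)/c$ for $x \in \mathcal{X}^{\mathrm{safe}}$ and $b^{\mathrm{safe}}_{\ell}(x) = 0$ otherwise. For the hypothesized safe mode $x^{\mathrm{ML}}_{\ell}$ we then get $b^{\mathrm{safe}}_{\ell}(x^{\mathrm{ML}}_{\ell}) = b_{\ell}(x^{\mathrm{ML}}_{\ell})/c$, and since $b_{\ell}(x^{\mathrm{ML}}_{\ell}) \geq b_{\ell}(x)$ for every $x$, this value dominates both $b_{\ell}(x)/c$ (for safe $x$) and $0$ (for unsafe $x$). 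Thus $x^{\mathrm{ML}}_{\ell}$ is also a maximizer of $b^{\mathrm{safe}}_{\ell}$, yielding the desired equality of modes; note that the positivity $c>0$ is guaranteed precisely because the safe region carries the mode and hence has nonzero mass.

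Finally I would feed this common mode through the shared transition and observation models: because $x^{\mathrm{ML}}_{\ell}$ is identical in the two branches, so are $x^{\mathrm{ML}}_{\ell+1}$ and $z^{\mathrm{ML}}_{\ell+1}$, which is exactly the claim. I expect the only delicate point to be pinning down the operative definition of \emph{maximum likelihood observation}: if one instead reads it as the mode of the observation marginal $\argmax_{z_{\ell+1}} \probd(z_{\ell+1}\mid b_{\ell}, a_{\ell})$, the argument no longer collapses to a single state, and the truncated and untruncated marginals need not share a mode. I would therefore state explicitly at the outset that $z^{\mathrm{ML}}_{\ell+1}$ denotes the measurement induced by the maximum likelihood state trajectory (the standard belief-space-planning convention), under which the reduction above is exact; flagging uniqueness of the modes, or fixing a consistent tie-breaking rule, disposes of the measure-zero degenerate cases.
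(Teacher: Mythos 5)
Your proof is correct and follows essentially the same route as the paper's: reduce the claim to showing that truncation to $\mathcal{X}^{\mathrm{safe}}$ and renormalization leave the belief mode unchanged (which holds because the assumed-safe mode dominates $b_{\ell}(x)/c$ on the safe set and $0$ off it), then push that common mode through the shared motion and observation models to get the same $z^{\mathrm{ML}}_{\ell+1}$. Your mode-preservation step is in fact a cleaner rendering of the paper's display (which informally ``divides an argmax by the normalization constant''), and your closing remark correctly pins down the chained-mode definition of the maximum likelihood observation that the paper uses implicitly via its Gaussian motion-noise assumption.
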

The reader can find the proof in Appendix~\ref{proof:ml}.	
We shall proceed now to comparing  $\rho(\psi(b_{\ell}, a_{\ell},z^{\mathrm{ML}}_{\ell+1}))$ versus $\rho(\psi(b^{\mathrm{safe}}_{\ell},  a_{\ell},z^{\mathrm{ML}}_{\ell+1}))$. Suppose further that $\rho(b) = \mathbb{E}[r^x(x)|b]$. 

\begin{lem}[Objectives comparison] \label{lem:objComp}
	\begin{align}
		&\mathbb{E}[r^x(x)|\psi(b_{\ell},  a_{\ell},z^{\mathrm{ML}}_{\ell+1})\big] = \nonumber\\
		&\frac{\probd\Bigg(z^{\mathrm{ML}}_{\ell+1}\Bigg|b_{\ell}, a_{\ell}, \mathbf{1}\left\{ x_{\ell} \in  \mathcal{X}^{\mathrm{safe}}\right\} \Bigg){ \color{purple} \prob\big(\mathbf{1}\left\{ x_{\ell} \in  \mathcal{X}^{\mathrm{safe}}\right\} \big)|b_{\ell}, a_{\ell}\big)}}{\probd(z^{\mathrm{ML}}_{\ell+1}|b_{\ell}, a_{\ell} )}\mathbb{E}[r^x(x)|\psi(b^{\mathrm{safe}}_{\ell},  a_{\ell},z^{\mathrm{ML}}_{\ell+1})\big]+\\
		&\frac{\probd\Bigg(z^{\mathrm{ML}}_{\ell+1}\Bigg|b_{\ell}, a_{\ell}, \mathbf{1}\left\{ x_{\ell} \notin  \mathcal{X}^{\mathrm{safe}}\right\} \Bigg)\prob\big(\mathbf{1}\left\{ x_{\ell} \notin  \mathcal{X}^{\mathrm{safe}}\right\} \big)|b_{\ell}, a_{\ell}\big)}{\probd(z^{\mathrm{ML}}_{\ell+1}|b_{\ell}, a_{\ell} )}\mathbb{E}[r^x(x)|\psi(b_{\ell}, \mathbf{1}\left\{ x_{\ell} \notin  \mathcal{X}^{\mathrm{safe}}\right\}, a_{\ell},z^{\mathrm{ML}}_{\ell+1})\big]
	\end{align}	
\end{lem}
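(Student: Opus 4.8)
The plan is to exploit that the reward is a \emph{linear} functional of the belief, $\rho(b) = \mathbb{E}[r^x(x)|b]$, so the claimed identity follows once I exhibit the posterior $\psi(b_\ell, a_\ell, z^{\mathrm{ML}}_{\ell+1})$ as a convex combination of the two posteriors appearing on the right-hand side, with mixing weights equal to the stated coefficients. Everything reduces to a law-of-total-probability decomposition of the one-step Bayesian update with respect to the safe event $S \bydef \{x_\ell \in \mathcal{X}^{\mathrm{safe}}\}$ and its complement $\bar{S}$. Note that the maximum-likelihood observation plays no special role in the algebra (the identity in fact holds for every fixed observation); it is merely the particular point at which Lemma~\ref{lem:ml} licenses its use.

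First I would write the prior as the two-component mixture induced by \eqref{eq:SafeBelief}, namely $b_\ell(x_\ell) = \prob(S|b_\ell)\, b^{\mathrm{safe}}_\ell(x_\ell) + \prob(\bar S|b_\ell)\, b^{\mathrm{unsafe}}_\ell(x_\ell)$, where $b^{\mathrm{unsafe}}_\ell$ is defined analogously to \eqref{eq:SafeBelief} with $\mathbf{1}\{x_\ell \notin \mathcal{X}^{\mathrm{safe}}\}$. Because propagation through the motion model $\probd_T(x_{\ell+1}|x_\ell,a_\ell)$ is linear in the prior, the predictive density $\probd(x_{\ell+1}|b_\ell,a_\ell)$ inherits the same convex combination of $\probd(x_{\ell+1}|b^{\mathrm{safe}}_\ell,a_\ell)$ and $\probd(x_{\ell+1}|b^{\mathrm{unsafe}}_\ell,a_\ell)$, with the identical weights $\prob(S|b_\ell)$ and $\prob(\bar S|b_\ell)$.

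Next I would apply the observation update: multiplying by $\probd_Z(z^{\mathrm{ML}}_{\ell+1}|x_{\ell+1})$ and dividing by the marginal $\probd(z^{\mathrm{ML}}_{\ell+1}|b_\ell,a_\ell)$ leaves the mixture structure intact. The key bookkeeping step is to renormalize each summand internally: for the safe component I multiply and divide by $\probd(z^{\mathrm{ML}}_{\ell+1}|b^{\mathrm{safe}}_\ell,a_\ell)$, which turns that component into the genuine posterior $\psi(b^{\mathrm{safe}}_\ell,a_\ell,z^{\mathrm{ML}}_{\ell+1})$ and leaves behind the factor $\prob(S|b_\ell)\,\probd(z^{\mathrm{ML}}_{\ell+1}|b^{\mathrm{safe}}_\ell,a_\ell)/\probd(z^{\mathrm{ML}}_{\ell+1}|b_\ell,a_\ell)$; the unsafe component is handled symmetrically. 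Here I would identify $\probd(z^{\mathrm{ML}}_{\ell+1}|b^{\mathrm{safe}}_\ell,a_\ell) = \probd(z^{\mathrm{ML}}_{\ell+1}|b_\ell,a_\ell,\mathbf{1}\{x_\ell\in\mathcal{X}^{\mathrm{safe}}\})$, since conditioning $b_\ell$ on the safe event yields precisely $b^{\mathrm{safe}}_\ell$, and likewise $\psi(b^{\mathrm{unsafe}}_\ell,a_\ell,\cdot) = \psi(b_\ell,\mathbf{1}\{x_\ell\notin\mathcal{X}^{\mathrm{safe}}\},a_\ell,\cdot)$.

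Finally, I would integrate $r^x(x_{\ell+1})$ against this mixture of posteriors; by linearity the integral distributes over the two components, the mixing factors pass outside the expectations, and I recover exactly the two coefficients displayed in the statement. I expect the main obstacle to be purely the bookkeeping of the three normalization constants (the full, safe, and unsafe observation marginals) together with verifying that the safe/unsafe observation likelihoods coincide with the conditional likelihoods $\probd(z|b_\ell,a_\ell,\mathbf{1}\{x_\ell\in\mathcal{X}^{\mathrm{safe}}\})$ written in the statement; the structural argument itself is a direct consequence of the linearity of both the propagation and the reward operator.
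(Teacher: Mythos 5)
Your proof is correct, and it rests on the same two ingredients as the paper's: a law-of-total-probability split with respect to the event $S \bydef \{x_\ell \in \mathcal{X}^{\mathrm{safe}}\}$ and Bayes' rule; the coefficients you obtain are exactly those in the statement. The organization, however, is genuinely different. The paper works backward from the left-hand side: it writes $\mathbb{E}[r^x(x)|\psi(b_\ell,a_\ell,z^{\mathrm{ML}}_{\ell+1})]$ as an integral against the posterior, inserts $S$ and its complement inside that posterior, factors each joint into $\probd(S\mid b_\ell,a_\ell,z^{\mathrm{ML}}_{\ell+1})$ times the event-conditioned posterior, and only then applies Bayes' rule to the event probability to produce the displayed likelihood ratios. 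You instead work forward from the prior: you decompose $b_\ell$ as a convex combination of $b^{\mathrm{safe}}_\ell$ and $b^{\mathrm{unsafe}}_\ell$, push the mixture through the linear prediction step and the observation update, and recover the coefficients via the internal renormalization of each component, so Bayes' rule enters only implicitly as bookkeeping. Your route buys a slightly stronger intermediate fact --- the posterior $\psi(b_\ell,a_\ell,z)$ is itself a mixture of the two component posteriors, for \emph{any} fixed $z$, from which the lemma follows by linearity of $\rho(b)=\mathbb{E}[r^x(x)|b]$ --- and it makes that linearity assumption visibly load-bearing; the paper's route is shorter since it never needs to name the predictive densities or mixture weights. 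Your remark that the maximum-likelihood property of $z^{\mathrm{ML}}_{\ell+1}$ is irrelevant to the algebra is also accurate: the paper's derivation uses no property of $z^{\mathrm{ML}}_{\ell+1}$ either, and the ML choice matters only for the surrounding discussion via Lemma~\ref{lem:ml}. One trivial point worth a clause in a final write-up: your mixture weight is $\prob(S\mid b_\ell)$ while the statement carries $\prob(S\mid b_\ell,a_\ell)$; these coincide because $S$ concerns $x_\ell$, which is unaffected by $a_\ell$.
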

The reader can find the proof in Appendix~\ref{proof:ObjComp}.
As $\delta$ (from section \ref{sec:RelationToChance})  approaches one from the left  the first summand of {\bf feasible} action converges to $\mathbb{E}[r^x(x)|\psi(b_{\ell},  a_{\ell},z^{\mathrm{ML}}_{\ell+1})\big]$ and second to zero. This is happening due to the requirement $\prob\big(\mathbf{1}\left\{ x_{\ell} \in  \mathcal{X}^{\mathrm{safe}}\right\} \big)|b_{\ell}, a_{\ell}\big) \geq \delta$ for nonterminal beliefs. Actually as we explained in section \ref{sec:ScalingExplanation} this probability have to be much larger than $\delta$ 
	
Since in practice $\delta$ is close to one from the left due to requirement to be safe with high probability, our approximation is not too harmful as we empirically observe further. 

\subsection{The Algorithms}
\begin{algorithm}[h!]
\caption{Chance Constrained BMDP  Sparse Importance Sampling (CCSS-IS)}
\begin{algorithmic}[1]
\Procedure{CCSS}{belief: $b$, belief: $\bar{b}$  depth: $d$, scale: sc}
\State $\varphi \leftarrow \phi(\bar{b})$ 
\If { $d=0$ }
	\State \Return	(Null, $\rho(b)$, $\varphi$)
\EndIf
\State $(a^{*}, v^*$) $\leftarrow$ (Null, $-\infty$)
\If {$\varphi == 0$}  \Comment{No point to continue, the action resulting in this belief shall be pruned}
\State {\bf return}  $(a^{*}, v^*$, $\varphi$) 
\EndIf
\State $\bar{b}^{\text{safe}} \leftarrow$ Make $\bar{b}$ safe \Comment{as in equation \eqref{eq:SafeBelief}}
\For  {$a \in  \mathcal{A}$}
\State $ v \leftarrow$ $0.0$ \Comment{Value function}
\State Calculate propagated belief $b'^{-}$ form $b$
\State Calculate propagated belief $\bar{b}'^{-}$ form $\bar{b}^{\text{safe}}$
\State $\Phi(ba) \leftarrow \{\}$  
\State $\mathrm{status} \leftarrow \mathrm{true}$
\For { $ j \in 1:m_d$}
\State Sample $x^o \sim b'^{-}$ 
\State Sample $z' \sim \probd(z|x^o)$ 
\State Calculate  $w^{\bar{z},\prime, j}$
\State $b^{\prime,j} \leftarrow \psi(b, a, z')$
\State $\bar{b}^{\prime,j} \leftarrow \psi(\bar{b}^{\text{safe}}, a, z')$
\EndFor 
\If {\Call{PRUNEACTIONCHANCE}{$\{ \bar{b}^{\prime,j} \}^{m_d}_{j=1}$, $\{  w^{\bar{z},\prime, j} \}^{m_d}_{j=1}$, $\varphi$, $\delta^{(d+1)} \cdot \mathrm{sc}+ \delta  (1-\mathrm{sc})$} } \Comment{See Alg.~\ref{alg:PruningChance}}
\State next action 
\EndIf
\For {\_ $\in 1:m_d$}
\State $\_$, $v^{\prime}$, $\varphi'_{\mathrm{exp}}$ $\leftarrow$ \Call{CCSS}{$b^{\prime}$, $\bar{b}^{\prime}$, $d-1$, sc}
\State $v+=(\rho(b) + \gamma \cdot v^{\prime})/m_d$
\State $\Phi(ba) \leftarrow \Phi(ba) \cup \varphi'_{\mathrm{exp}}$
\EndFor
\State $\varphi_{\mathrm{exp}} \leftarrow \frac{1}{\sum_{j=1}^{m_d} w^{\bar{z},\prime, j}}\underset{\varphi\prime_{\mathrm{exp}} \in \Phi(ba)}{\sum} w^{\bar{z},\prime}\varphi\prime_{\mathrm{exp}}$
\If {$\varphi \cdot \varphi_{\mathrm{exp}} < \delta^{(d+1)} \cdot \mathrm{sc}+ \delta  (1-\mathrm{sc})  $}
	\State status $\leftarrow$ false	
\EndIf	
\If {$  \mathrm{status}  \wedge u > v^{*} $}
\State $(a^{*}, v^*$) $\leftarrow$ ($a, v$)
\EndIf 
\EndFor
\State \Return	($a^{*}$, $v^*$, $\varphi \cdot \varphi_{\mathrm{exp}}$)
\EndProcedure
\end{algorithmic}
\label{alg:constrainedSSbaselineImportance}
\end{algorithm}			
\begin{algorithm}[h!]
\caption{Fast Chance Constrained BMDP  Sparse Sampling (FastCCSS)}
\begin{algorithmic}[1]
\Procedure{CCSS}{belief: $\bar{b}$  depth: $d$, scale: sc}
\State $\varphi \leftarrow \phi(\bar{b})$ 
\If { $d=0$ }
	\State \Return	(Null, $\rho(\bar{b})$, $\varphi$)
\EndIf
\State $(a^{*}, v^*$) $\leftarrow$ (Null, $-\infty$)
\If {$\varphi == 0$}
\State {\bf return}  $(a^{*}, v^*$, $\varphi$) 
\EndIf
\State $\bar{b}^{\text{safe}} \leftarrow$ Make $\bar{b}$ safe \Comment{as in equation \eqref{eq:SafeBelief}}
\For  {$a \in  \mathcal{A}$}
\State $ v \leftarrow$ $0.0$ \Comment{Value function}
\State Calculate propagated belief $\bar{b}'^{-}$ form $\bar{b}^{\text{safe}}$
\State $\Phi(ba) \leftarrow \{\}$  
\State $\mathrm{status} \leftarrow \mathrm{true}$
\For { $ j \in 1:m_d$}
\State Sample $x^o \sim \bar{b}'^{-}$ 
\State Sample $\bar{z}' \sim \probd(z|x^o)$ 
\State $\bar{b}^{\prime,j} \leftarrow \psi(\bar{b}^{\text{safe}}, a, \bar{z}')$
\EndFor 
\If {\Call{PRUNEACTIONCHANCE}{$\{ \bar{b}^{\prime,j} \}^{m_d}_{j=1}$, $\{ \frac{1}{m_d} \}^{m_d}_{j=1}$, $\varphi$, $\delta^{(d+1)} \cdot \mathrm{sc}+ \delta  (1-\mathrm{sc})$} } \Comment{See Alg.~\ref{alg:PruningChance}}
\State next action 
\EndIf
\For {\_ $\in 1:m_d$}
\State $\_$, $v^{\prime}$, $\varphi'_{\mathrm{exp}}$ $\leftarrow$ \Call{CCSS}{$b^{\prime}$, $\bar{b}^{\prime}$, $d-1$, sc}
\State $v+=(\rho(\bar{b}) + \gamma \cdot v^{\prime})/m_d$
\State $\Phi(ba) \leftarrow \Phi(ba) \cup \varphi'_{\mathrm{exp}}$
\EndFor
\State $\varphi_{\mathrm{exp}} \leftarrow \frac{1}{m_d}\underset{\varphi\prime_{\mathrm{exp}} \in \Phi(ba)}{\sum} \varphi\prime_{\mathrm{exp}}$
\If {$\varphi \cdot \varphi_{\mathrm{exp}} < \delta^{(d+1)} \cdot \mathrm{sc}+ \delta  (1-\mathrm{sc})  $}
	\State status $\leftarrow$ false	
\EndIf	
\If {$  \mathrm{status}  \wedge u > v^{*} $}
\State $(a^{*}, v^*$) $\leftarrow$ ($a, v$)
\EndIf 
\EndFor
\State \Return	($a^{*}$, $v^*$, $\varphi \cdot \varphi_{\mathrm{exp}}$)
\EndProcedure
\end{algorithmic}
\label{alg:constrainedSSbaseline}
\end{algorithm}			
\begin{algorithm}[h!]
\caption{Necessary condition for Feasibility of Chance Constraint}
\begin{algorithmic}[1]
\Procedure{PRUNEACTIONCHANCE}{$\{ \bar{b}^{\prime,j} \}^{m_d}_{j=1}$, $\{ w^{\bar{z},\prime, j} \}^{m_d}_{j=1}$, $\varphi$, $\delta$}
\State $\Delta \leftarrow 1- \delta $, $r_b \leftarrow 1 - \varphi$
\For{each $\bar{b}^{\prime,i}$}
\If {\eqref{eq:ChanceConstrPruning} is not met}
\State return true \Comment{prune}
\EndIf  
\EndFor
\State return false
\EndProcedure
\end{algorithmic}
\label{alg:PruningChance}
\end{algorithm}			
Our solver for the conventional Chance-constrained POMDP (Section \ref{sec:Importance}) is formulated as Alg.~\ref{alg:constrainedSSbaselineImportance}. Our efficient variant from Section \ref{sec:ChanceApprox} is summarized in Alg.~\ref{alg:constrainedSSbaseline}. Similar to \cite{Santana16aaai}, the chance constraint in Alg.~\ref{alg:constrainedSSbaselineImportance} and Alg.~\ref{alg:constrainedSSbaseline} is assured  to be fulfilled from any nonterminal belief node in the belief tree until the bottom. Note that since we are using only necessary condition for pruning on the way down the tree (line $24$ in Alg.~\ref{alg:constrainedSSbaselineImportance} and line $21$ Alg.~\ref{alg:constrainedSSbaseline}), we still need to verify the constraint on the way up (line $33$ in Alg.~\ref{alg:constrainedSSbaselineImportance} and line $30$ Alg.~\ref{alg:constrainedSSbaseline})      
Both Algorithms \ref{alg:constrainedSSbaselineImportance} and  \ref{alg:constrainedSSbaseline} utilize the pruning technique described in section \ref{sec:NecessaryPruningCond} and have boolean switch ``sc" which  selects between the scaled versions and the unscaled as explained in section \ref{sec:ScalingExplanation}.

\section{Simulations and Results} \label{sec:Results}
We demonstrate theoretical findings on two problems, navigation to the static goal and target tracking. Both problems are under the umbrella of Belief Space Planning with a given map. We simulate our probabilistically constrained Alg.~\ref{alg:constrainedSS} (PCSS) versus two chance constrained algorithms (CCSS-IS) with importance sampling (Alg.~\ref{alg:constrainedSSbaselineImportance}) and without  (FastCCSS Alg.~\ref{alg:constrainedSSbaseline})  in continuous domains. Our simulation is in an MPC framework, that is ~re-planning after each step. We will compare the running times of the planning sessions, the number of actions that have not been pruned, and cumulative rewards along the simulated execution of the selected online policy which is in fact the algorithm itself (All of our algorithms in this paper are online policies). We also shall report the influence of the scaling parameter on chance-constrained algorithms.  
Our action space is the space of motion primitives of unit vectors $\mathcal{A} = \{  \rightarrow, \nearrow, \uparrow, \nwarrow,\leftarrow,  \swarrow,   \downarrow, \searrow,  Null\}$. For simplicity in both problems our belief dependent reward  is 
\begin{align}
	\rho(b, a, z', b') =\frac{1}{m_x}\sum_{i=1}^{m_x} r^x(x^i, a) \quad x^i \sim b,  
\end{align}
where $m_x$ is the number of the belief particles. However, as we further prove we still can account for uncertainty even with reward being the first moment of the state dependent reward. 

\subsection{Navigation to static Goal}
\begin{figure}[t] 
	\centering
	\begin{minipage}[t]{0.3\textwidth}
		\centering 
		\includegraphics[width=\textwidth]{./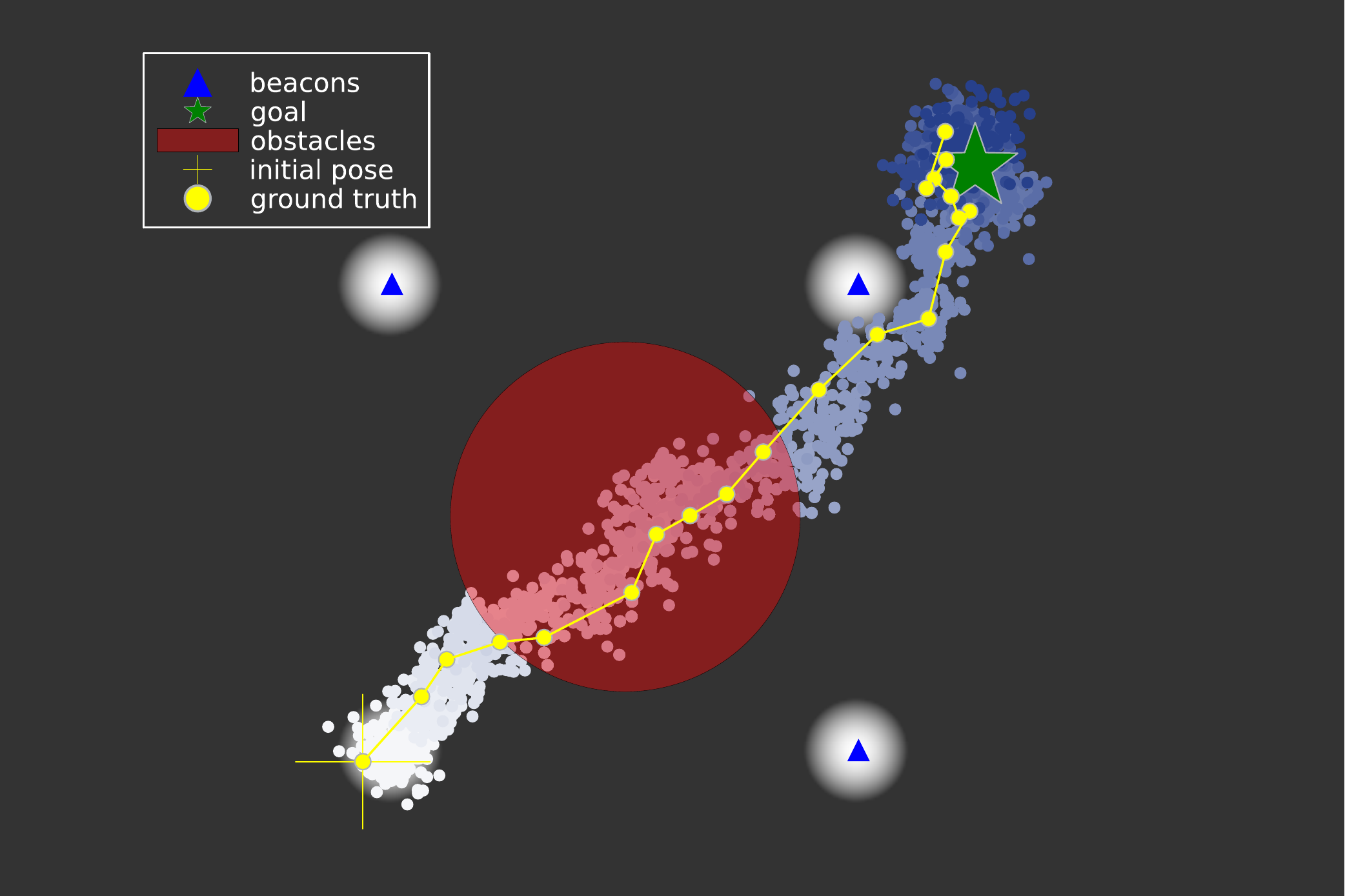}
		\subcaption{}
	\end{minipage}%
	\hfill
	\begin{minipage}[t]{0.3\textwidth}
		\centering 
		\includegraphics[width=\textwidth]{./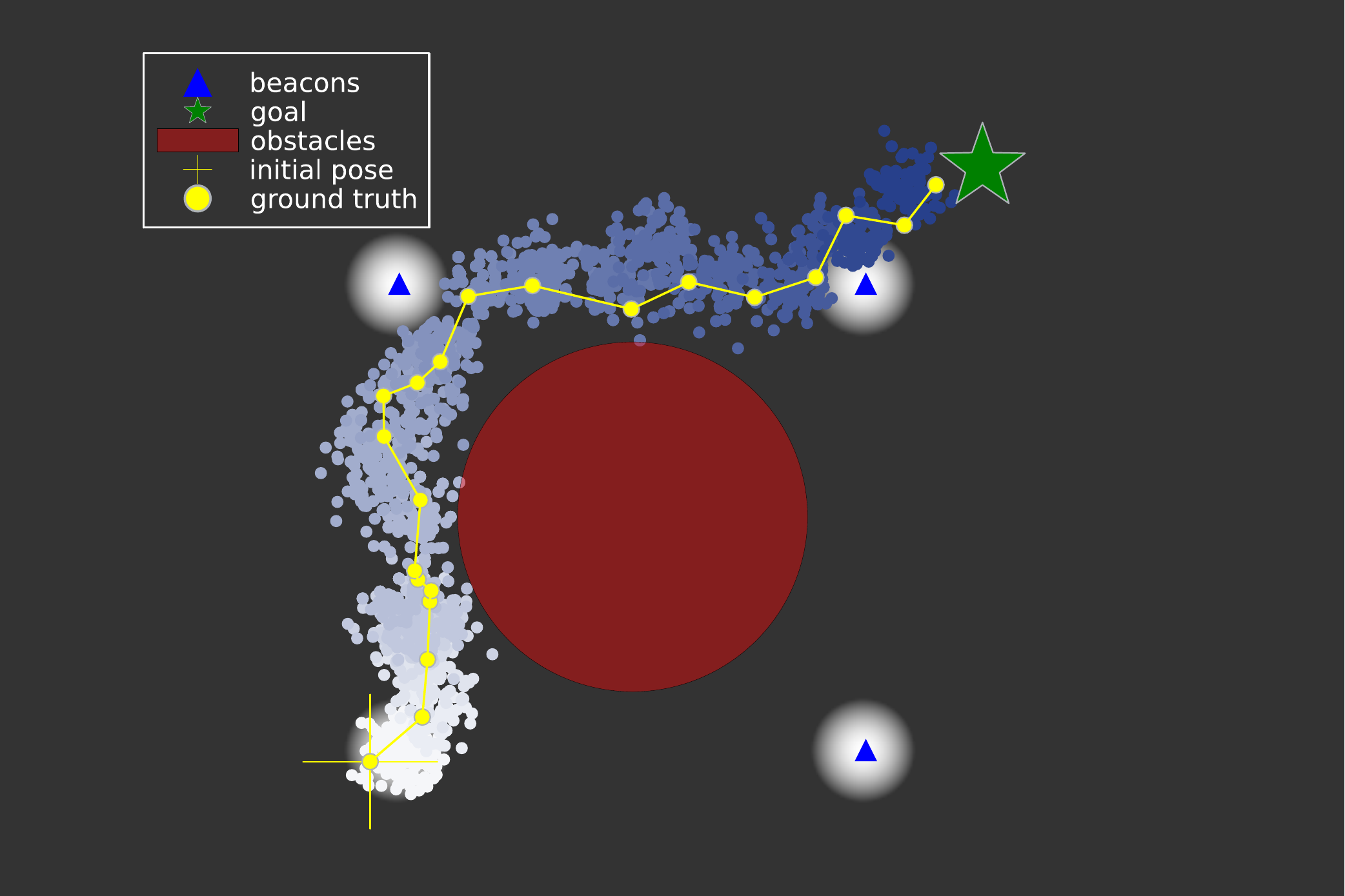}
		\subcaption{}
	\end{minipage} 
	\hfill
	\begin{minipage}[t]{0.3\textwidth}
		\centering 
		\includegraphics[width=\textwidth]{./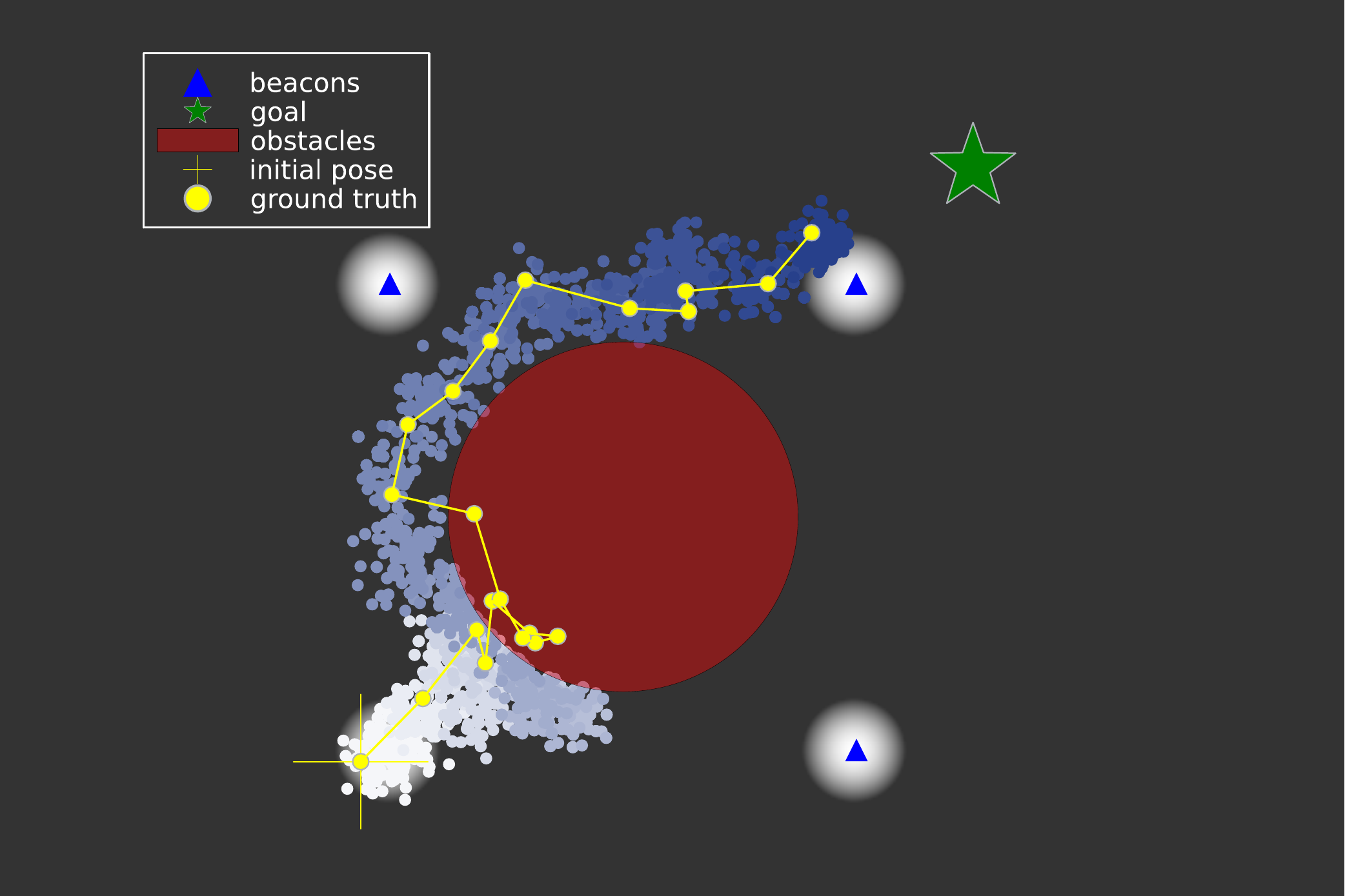}
		\subcaption{}
	\end{minipage}  
	\caption{Visualization of a single trial from $50$ in myopic setting of {\bf navigation to the static goal problem} at our {\bf first map}.The hyperparameters are $m_1=100, L=1,m_x=150$, $\delta=0.8$ : \textbf{(a)} Constraints are deactivated. The robot solves solely the objective \eqref{eq:ConstrObj}, actual belief $b_k$ is not made safe. \textbf{(b)} The PCSS algorithm succesfully avoided the obstacle.  \textbf{(c)} CCSS  with the same seed collided with the obstacle. } \label{fig:MyopicStaticGoal} 
\end{figure} 
Following the previous discussion we are ready to delve into the details of our first problem. We adopt the well known problem of navigation to the goal with collision avoidance. The belief in this problem is maintained over the robot's pose which is $2$-dimensional vector. Our state dependent reward is $r^x(x, a) = -\|x - x^g\|_2^2$.
By $x^g$ we denote the location of the goal. Note that such $\rho(b, a, z', b')$  accounts for belief uncertainty as we show in Appendix \ref{sec:DistanceToGoal}. 

Our obstacle has a circular shape with a center at $x^o$ and radius $r^o$. We approximate the probability of not having the collision by $$\probd\big(\mathbf{1}\left\{x_{k}\!\in\! \mathcal{X}^{\text{safe}}\right\} | b_{k}\big) = 1 - \frac{1}{m_x}\sum_{i=1}^{m_x} \mathbf{1}\left\{ \|x_k^i - x^o\|_2 \leq r^o\right\}.$$ 
 
Motion and observation models, and the initial belief are $\probd_T(\cdot|x, a) = \mathcal{N}(x + a, \Sigma_T)$, $\probd_Z (\cdot | x ; \{x^{b,i}\}_{i=1} )=\mathcal{N}(x, \Sigma_O)$, $b_0 = \mathcal{N}(x_0, \Sigma_0)$ respectively. The robot obtains an observation from the closest beacon.  
The covariance matrices are  diagonal $\Sigma_T = I\cdot \sigma^2_w$ and 
\begin{align}
	\!\Sigma_{O}(x; \{x^{b,i}\}_{i=1})\!\! = \!\! \begin{cases}
		\!\sigma^2_{w}I   \underset{i}{\min }\ d_i, \text{ if } \underset{i}{\min} \ d_i \geq \! r_{\text{min}} \\
		\!\sigma^2_{v}I, \text{else}  \end{cases}  \label{eq:BeaconsObsCov}
\end{align}
where $d_i=\|x\!-\!x^{b,i}\|_2$,  $x^{b,i}$ is the 2D location of the beacon $i$. We set the parameters to be $r_{\text{min}}=0.01$, $\sigma_w^2=0.1$ and $\sigma_v^2=0.01$, $\gamma=0.99$. The initial belief admits Gaussian distribution $\mathcal{N}(x_k; \mu, \Sigma)$ with covariance  $\Sigma = \sigma I = 0.1 \cdot I$ and mean $\mu = (0.0, 0.0)^T$. Initial ground truth state of the robot was set to $x^{\mathrm{gt}}_k = (-0.5, -0.2)^T$.
\subsection{Target Tracking} \label{sec:TargetTrack}
Now we describe the second problem. In this problem we have a moving target in addition to the agent. In this problem the belief is maintained over both positions, the agent and the target. The state dependent reward in this problem  is $r^x(x, a) = -\|x^{\text{agent}} - x^{\text{target}}\|_2^2$. It accounts for the uncertainty of both the target and the agent in similar manner as in previous problem.
Moreover, now we have a squared obstacle. We check collision now according to
$$\probd(\mathbf{1}\left\{x^{\mathrm{agent}}_{k}\!\in\! \mathcal{X}^{\text{safe}}\right\} | b_{k}) = 1 - \frac{1}{m_x}\sum_{i=1}^{m_x} \mathbf{1}\left\{ \|x_k^{\mathrm{agent},i} - x^o\|_\infty \leq r^o\right\},$$
where the $\|\xi\|_{\infty} =\max_{i}|\xi_i| $, where $\xi_i$ is the coordinate $i$ of vector $\xi_i$.
The motion model of the target is identical to the motion model of the agent and follows 
\begin{align}
	\probd_T(\cdot|x, a) = \mathcal{N}(x^{\text{agent}} + a^{\text{agent}}, \Sigma_T) \cdot \mathcal{N}(x^{\text{target}} + a^{\text{target}}, \Sigma_T),
\end{align}	  
where by $x$ we denote the concatenated $\{x^{\text{agent}}, x^{\text{target}}\}$.   
For target action we use a circular buffer with  $\{  \leftarrow, \uparrow\}$ action sequence.  We maintain belief over the agent and the target. For simplicity we assume that in inference as well as in planning session we know the target action sequence.   The observation model is also the multiplication of the observation model from the previous section with the additional observation model due to moving target.  Such as the overall observation model is 
\begin{align}
	\probd_Z (\cdot | x ; \{x^{b,i}\}_{i=1} )=\mathcal{N}(x^{\text{agent}}, \Sigma_O(x^{\text{agent}}; \{x^{b,i}\}_{i=1}))\cdot \mathcal{N}(x^{\text{agent}} -  x^{\text{target}},\Sigma_O(x^{\text{agent}},x^{\text{target}})),
\end{align}
where $\Sigma_O(x^{\text{agent}}; \{x^{b,i}\}_{i=1})$ conforms to \eqref{eq:BeaconsObsCov} and 
\begin{align}
	\Sigma_O(x^{\text{agent}},x^{\text{target}}) = \!\! \begin{cases}
		\!\sigma^2_{w}I   \| x^{\text{agent}} - x^{\text{target}}  \|_2, \text{ if } \| x^{\text{agent}} - x^{\text{target}}  \|_2 \geq \! r_{\text{min}} \\
		\!\sigma^2_{v}I, \text{else}  \end{cases} 
\end{align}

Importantly,  the target does not collides with obstacles, it can fly above.  In this problem we selected the parameters to be $r_{\text{min}}=0.01$, $\sigma_w^2=0.1$ and $\sigma_v^2=0.01$, $\gamma=0.99$. The initial belief admits Gaussian distribution $\mathcal{N}(x_k; \mu, \Sigma)$ with covariance  $\Sigma = \sigma I = 0.01 \cdot I$ and mean $\mu = (\underbrace{0, 0}_{\text{agent}}, \underbrace{10, 0}_{\text{target}} )^T$. Initial ground truth state of the robot and the target was set to $x^{\mathrm{gt}}_k = (\underbrace{-0.5, -0.2}_{\text{agent}}, \underbrace{10, 0}_{\text{target}})^T$.
\subsection{Discussion} 
\begin{figure}[t] 
	\centering
	\begin{minipage}[t]{0.45\textwidth}
		\centering 
		\includegraphics[width=\textwidth]{./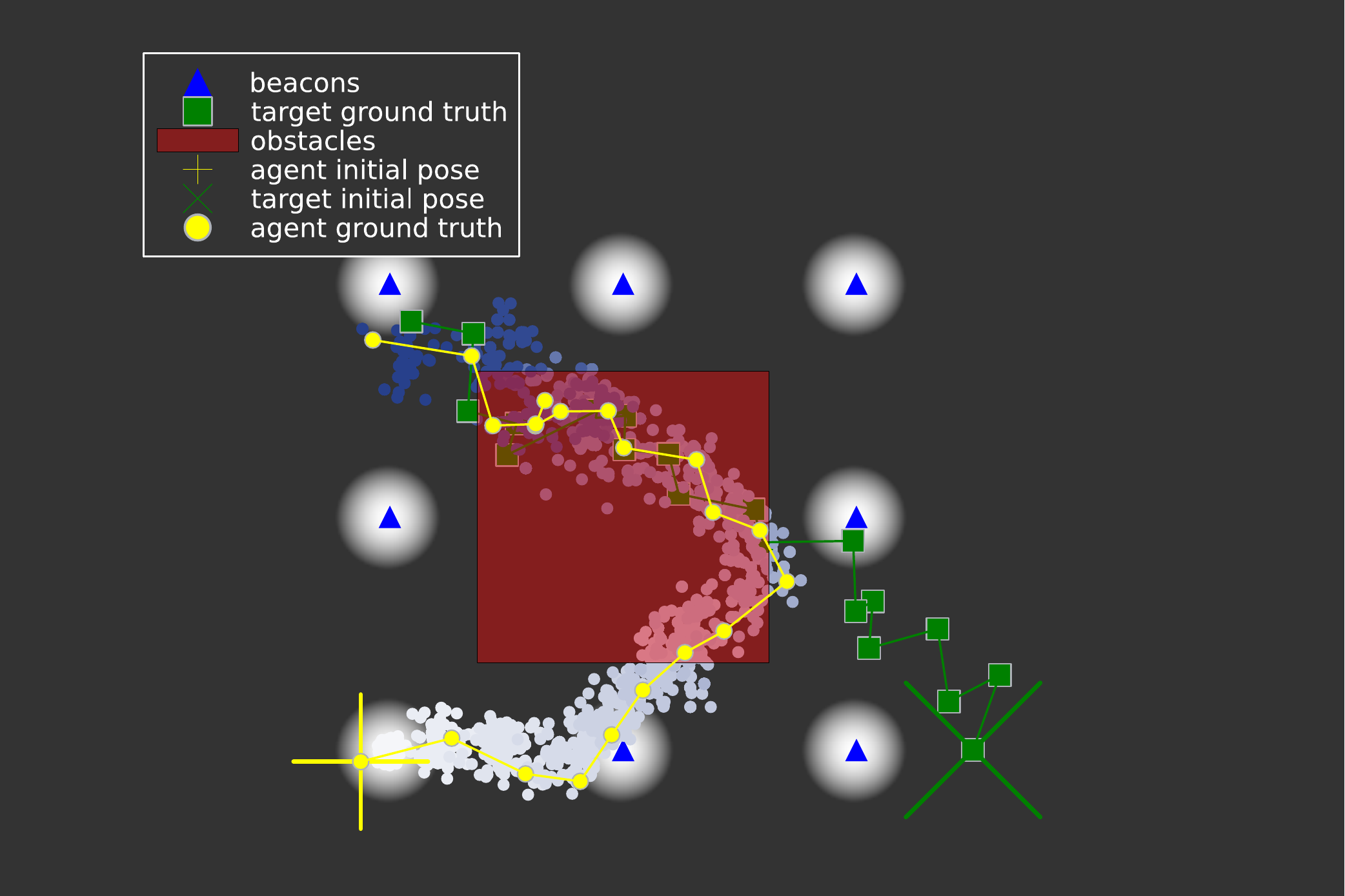}
		\subcaption{}
	\end{minipage}%
	\hfill
	\begin{minipage}[t]{0.45\textwidth}
		\centering 
		\includegraphics[width=\textwidth]{./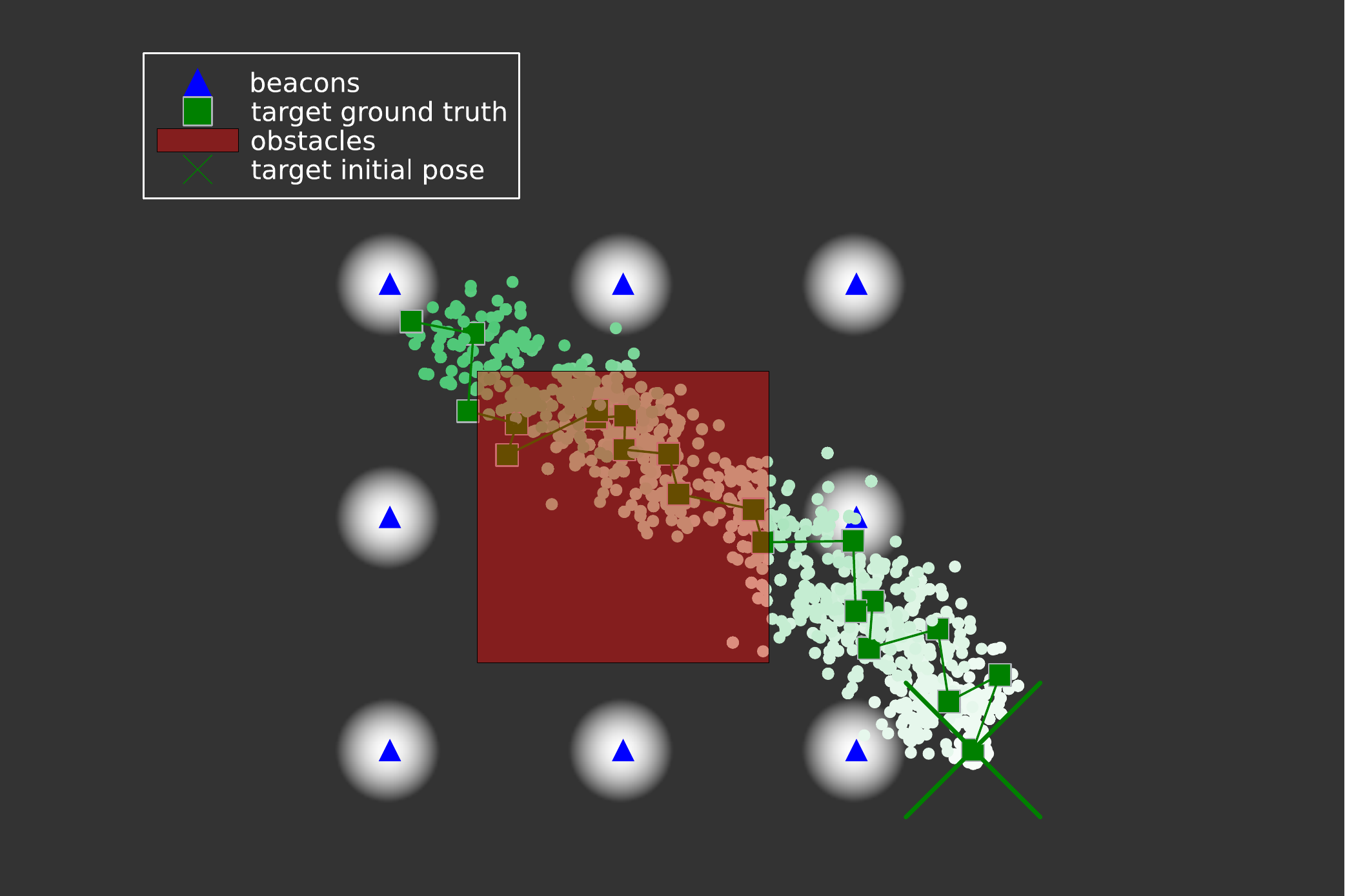}
		\subcaption{}
	\end{minipage} 
	\vfill
	\begin{minipage}[t]{0.45\textwidth}
		\centering 
		\includegraphics[width=\textwidth]{./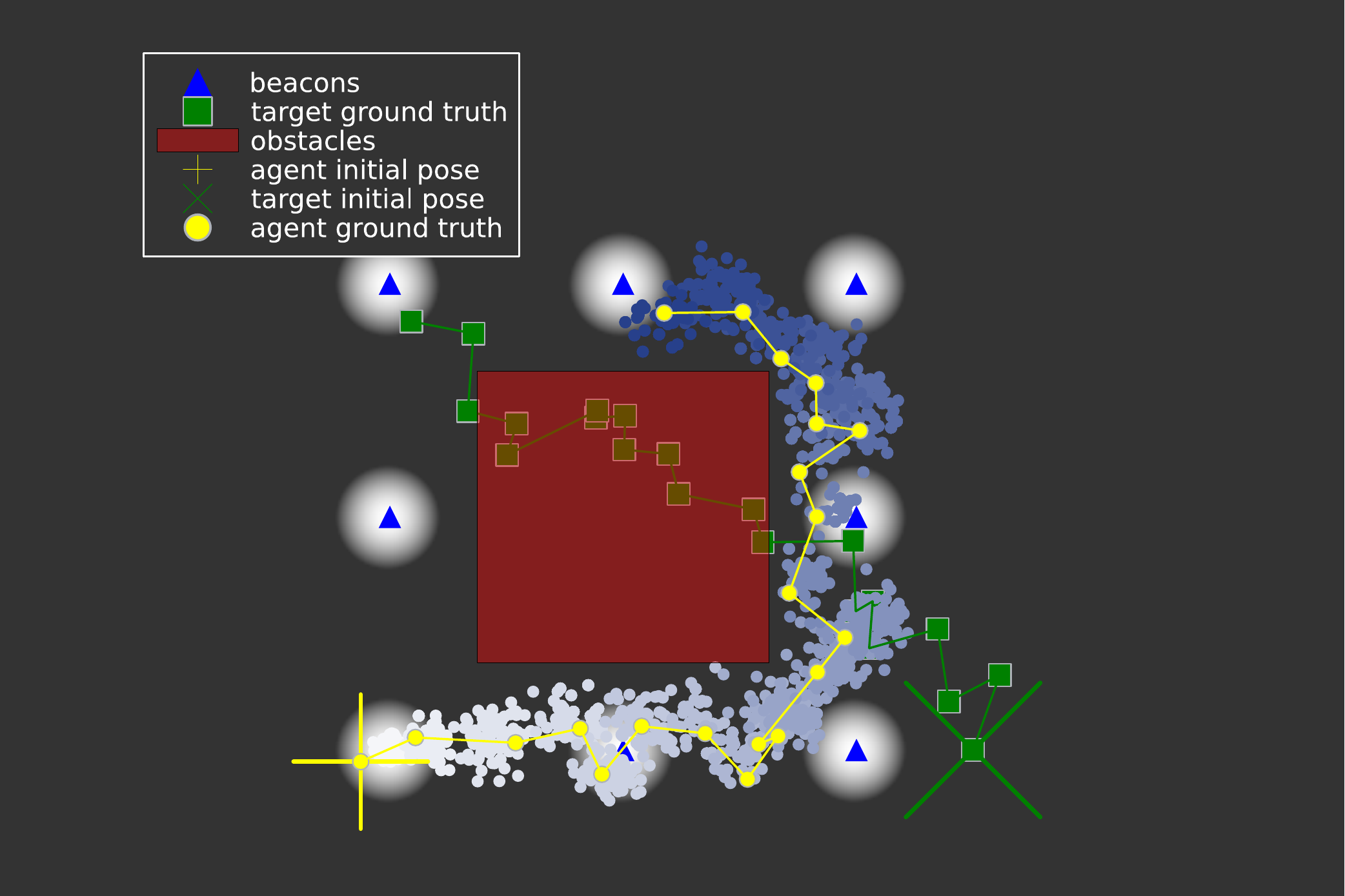}
		\subcaption{}
	\end{minipage}  
		\hfill
	\begin{minipage}[t]{0.45\textwidth}
		\centering 
		\includegraphics[width=\textwidth]{./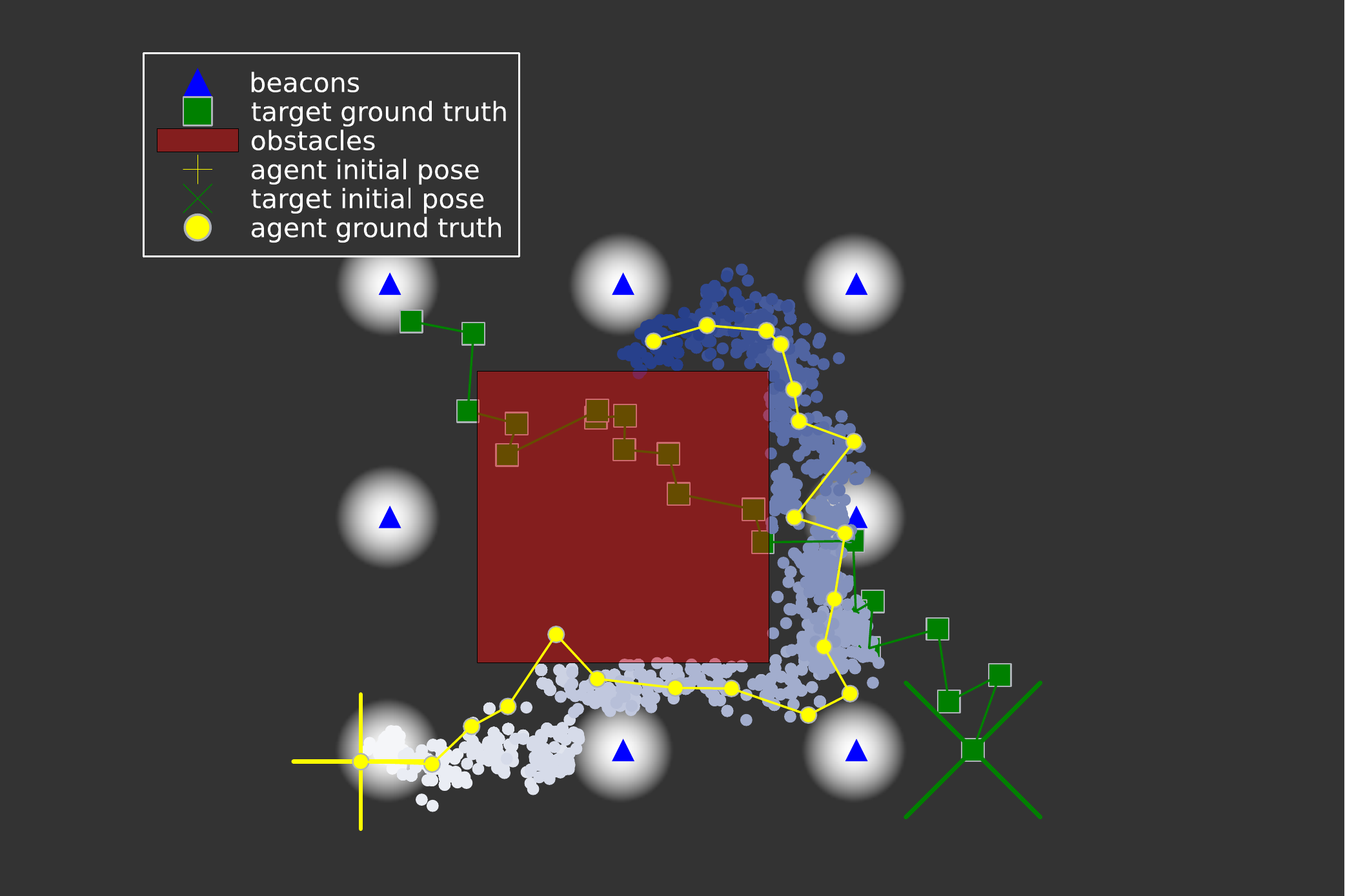}
		\subcaption{}
	\end{minipage}
	\caption{Visualization of a single trial from $50$ in myopic setting of {\bf target tracking problem} at our {\bf second map}. The hyperparameters are $m_1=100, L=1,m_x=150$, $\delta=0.8$ with scaling set off: \textbf{(a)} Constraints are deactivated. The robot solves solely the objective \eqref{eq:ConstrObj}, actual belief $b_k$ is not made safe. \textbf{(b)} Particles of the target corresponding to particles of the (a) illustration. \textbf{(c)} The PCSS algorithm successfully avoided the obstacle.  \textbf{(d)} FastCCSS  with the same seed collided with the obstacle. Note that final belief shown before it has been made safe. Note, the target flies above the the obstacle, as we described in section \ref{sec:TargetTrack}. } \label{fig:MyopicMovingTarget} 
\end{figure} 
\begin{table}[t]
	\caption{ $50$ Trials of $21$ planning sessions and executions of optimal action of PCSS versus FastCCSS in myopic setting. Same seed in both algorithms. Scaling is false in the FastCCSS. This problem is the {\bf navigation to static goal} problem in  our {\bf first map}.}
	\centering
	\resizebox{\textwidth}{!}{
	\begin{tabular}{|c|c|c|c|c|c|c|c|c|c|c|c|c|c|c|}
		\hline
		\multicolumn{3}{|c|}{ Parameters } & \multicolumn{2}{|c|}{ num collisions } &  \multicolumn{2}{|c|}{ mean cum. reward (V) $\pm$ std} & \multicolumn{2}{|c|}{ mean cum. reward (V) no coll $\pm$ std} &  \multicolumn{2}{|c|}{cum. plan. time [sec]} & speedup &\multicolumn{2}{|c|}{Total expanded actions} & actions frac.\\
		\hline 
		 $m_x$ & $m_1$ & $\delta$ & PCSS  & FastCCSS & PCSS &  FastCCSS  & PCSS & FastCCSS   & PCSS & FastCCSS &  &PCSS & FastCCSS  &\\
		\hline 
		$150$ & $100$ & $0.9$ &  ${\bf 2}/50$  & ${\bf 19}/50$   & $  -192\pm 30.07$ & $ -176.26\pm 29.51$    &$ -192.01 \pm 28.85$ & $ -177.20\pm 31.13$ &  $80.24$& $105.78$ &${\bf 0.24}$& $6559$ & $6986$& $0.06$\\
		\hline 
		 $150$ & $100$ & $0.85$ &  ${\bf 3}/50$  & ${\bf 28}/50$   & $ -196.05 \pm 30.44$ & $-170.51 \pm 27.58 $    &$ -195.08\pm 30.47$ & $ -166.16\pm 25.25$ & $80.29$ & $102.43$ &${ \bf 0.21}$&$6431$& $7045$&$0.087$\\
		\hline 
		 $150$ & $100$ & $0.8$ &  ${\bf 5}/50$  & ${\bf 25}/50$   & $ -186.62 \pm 31.27$ & $  -166.55\pm  30.58$    &$ -86.47\pm 32.09$ & $ -159.98\pm 24.62$ & $80.09$& $103.91$ &${ \bf 0.23}$&$6444$& $7245$&$0.11$\\
		\hline 
		$150$ & $100$ & $0.75$ &  ${\bf 6}/50$  & ${\bf 39}/50$   & $ -188.22 \pm 29.95$ & $ -176.25\pm 34.97$    &$ -189.42\pm 29.76$ & $-169.74\pm 33.93$ & $89.34$ & $115.85$ &${\bf 0.23}$&$6569$& $7077$&$0.07$\\
		\hline
		 $150$ & $100$ & $0.7$ &  ${\bf 7}/50$  & ${\bf 39}/50$   & $ -176.87 \pm 27.81$ & $-172.03 \pm 36.82$    &$ -175.85\pm 27.33$ & $ -161.62\pm 34.66$ & $78.99$ & $103.75$  &${\bf 0.23}$&$6626$& $7182$&$0.08$\\
		\hline 
	\end{tabular}
}
	\label{tbl:AblationStudyMyopicStaticGoal} 
\end{table}

For each pair of algorithms we calculate the speedup according to 
\begin{align}
	\frac{t^{\mathrm{baseline}}- t^{\mathrm{algorithm}}}{t^{\mathrm{baseline}}}. \label{eq:speedup}
\end{align} 
The \eqref{eq:speedup} measure saved time relative to the baseline running time.  In the similar manner the number of expanded and not pruned actions $N$ fraction is 
\begin{align}
	\frac{N^{\mathrm{baseline}}- N^{\mathrm{algorithm}}}{N^{\mathrm{baseline}}}.  \label{eq:actions}
\end{align}
Note also that it is possible that algorithms declare that no feasible solution exists or the actual belief can not be made safe, if all samples fall inside the obstacle.

Before we proceed, note that the collision avoidance mechanism is identical in FastCCSS Alg.~\ref{alg:constrainedSSbaseline} and Alg.~\ref{alg:constrainedSSbaselineImportance}.  
Further, we present in depth study of the presented above algorithms with two forms of obstacles, the circle form and the square.

\subsubsection{Myopic Planning}
Let us start from the myopic setting. When the agent plans myopically the FastCCSS and CCSS-IS are identical algorithms, since we make actual belief $b_k$ always safe (See Fig.~\ref{fig:CCImportance}). So there is no point in running the CCSS-IS in myopic setting.  Due to fact that the belief tree is  myopic we observe much more collisions with chance constrained formulation due to averaging of the leaves in shallow myopic belief tree as explained in section \ref{sec:SubtreeExplanation}. For actual results see Table~\ref{tbl:AblationStudyMyopicStaticGoal} and \ref{tbl:AblationStudyMyopicTargetTracking}) . Here we observe substantial speedup of our Alg.~\ref{alg:constrainedSS} versus Alg.~\ref{alg:constrainedSSbaseline}. We demonstrate a single trial of navigation to static goal and target tracking respectively at Fig.~\ref{fig:MyopicStaticGoal}  and Fig.~\ref{fig:MyopicMovingTarget}. In both problems when $\delta =0$, the agent crashed in all $50$ trials. 
\subsubsection{Larger Horizons}
\begin{table}[t]
	\caption{ $50$ Trials of $21$ planning sessions and executions of optimal action of PCSS versus CCSS-IS Alg.~\ref{alg:constrainedSSbaselineImportance} and FastCCSS Alg.~\ref{alg:constrainedSSbaseline}. Same seed in three  algorithms. This problem is the {\bf navigation to static goal} problem in  { \bf our first map} $L=2$. Here we study the number of collisions and the reward value.}
	\centering
	\resizebox{\textwidth}{!}{
	\begin{tabular}{|c|c|c|c|c|c|c|c|c|c|c|c|c|c|}
		\hline
		\multicolumn{5}{|c|}{ Parameters }  &\multicolumn{3}{|c|}{ num collisions } &  \multicolumn{3}{|c|}{ mean cum. rew. $\pm$ std} & \multicolumn{3}{|c|}{ mean cum. rew.  no coll $\pm$ std} \\
		\hline
		$m_x$ & $m_1$ & $m_2$  & $\delta$ & sc & PCSS  & FastCCSS & CCSS-IS & PCSS &  FastCCSS  & CCSS-IS & PCSS & FastCCSS   & CCSS-IS  \\
		\hline  
		$100$ & $10$ & $10$  & $0.9$ & $0$ & \multirow{2}{*}{${\bf 5}/50$}  & ${\bf 7 }/50$   & ${\bf 7}/50$ & \multirow{2}{*}{$ -156.43\pm 20.26$ }   & $ -155.30 \pm 16.94  $ & $ -154.90 \pm 15.85 $  & \multirow{2}{*}{$-157.68 \pm  20.86$} & $ -155.74 \pm 17.94$& $   -155.71 \pm 16.56$  \\
		\cline{1-5} \cline{7-8} \cline{10-11}\cline{13-14}
		$100$ & $10$ & $10$  & $0.9$ & $1$ &   & ${\bf 7}/50$   & ${\bf 12}/50$ &     & $ -153.70\pm 16.29$ &$-151.77 \pm 17.33$  & $  $ &$ -152.22\pm 14.16$ & $-152.32\pm 16.49$ \\
		\hline 
		$100$ & $10$ & $10$ & $0.8$ & $0$ & \multirow{2}{*}{${\bf 7}/50$}  & ${\bf 5 }/50$   & ${\bf 9}/50$ & \multirow{2}{*}{$ -149.68\pm 15.03$ }   & $ -153.18\pm 15.13$ & $ -151.44\pm 17.56$  & \multirow{2}{*}{$ -150.77\pm 14.20$} & $  -153.60\pm 15.70$& $  -151.97 \pm  17.36  $  \\
		\cline{1-5} \cline{7-8} \cline{10-11}\cline{13-14}
		$100$ & $10$ & $10$  & $0.8$ & $1$ &   & ${\bf 11}/50$   & ${\bf  13}/50$ &     &$ -149.32 \pm 17.64$ & $ 149.84\pm  17.28$ &   &$  -152.11\pm 17.77 $ & $ -150.03 \pm 17.36 $ \\
		\hline 
		$100$ & $10$ & $10$ & $0.7$ & $0$ & \multirow{2}{*}{${\bf 12}/50$}  & ${\bf  13 }/50$   & ${\bf 11}/50$ & \multirow{2}{*}{$  -148.24 \pm 14.60$ }   & $ -148.85\pm 17.69$ & $ -152.40\pm 14.97 $  & \multirow{2}{*}{$  -148.01\pm 15.70$} & $ -145.57 \pm 14.07$& $ -153.67\pm   15.21$  \\
		\cline{1-5} \cline{7-8} \cline{10-11}\cline{13-14}
		$100$ & $10$ & $10$ &  $0.7$ & $1$ &   & ${\bf 24 }/50$   & ${\bf 20 }/50$ &     &$ -147.86\pm 15.52$ & $-140.33 \pm 22.07$ & $ $ &$ -148.06\pm  14.04$ & $-147.20\pm 24.59$ \\
		\hline 
	\end{tabular}
}
	\label{tbl:AblationStudyL2StaticGoal} 
\end{table}

\begin{table}[t]
	\caption{ $50$ Trials of $21$ planning sessions and executions of optimal action of PCSS Alg.~\ref{alg:constrainedSS} versus CCSS-IS Alg.~\ref{alg:constrainedSSbaselineImportance} and FastCCSS Alg.~\ref{alg:constrainedSSbaseline}. Same seed in three  algorithms. This problem is the { \bf navigation to static goal} problem in  our { \bf first map} $L=2$. In this table we study speedup \eqref{eq:speedup} and the saved actions fraction \eqref{eq:actions}}
	\centering
	\resizebox{\textwidth}{!}{
	\begin{tabular}{|c|c|c|c|c|c|c|c|c|c|c|c|c|c|}
		\hline
		\multicolumn{5}{|c|}{ Parameters }   &  \multicolumn{3}{|c|}{cum. plan. time [sec]} & speedup Alg.~\ref{alg:constrainedSS} rel to \ref{alg:constrainedSSbaselineImportance}  &speedup Alg.~\ref{alg:constrainedSSbaseline} rel to \ref{alg:constrainedSSbaselineImportance}  &\multicolumn{3}{|c|}{Total expanded actions} & actions frac. Alg.~\ref{alg:constrainedSS} rel to \ref{alg:constrainedSSbaselineImportance}\\
		\hline
		$m_x$ & $m_1$ & $m_2$  & $\delta$ & sc & PCSS \ref{alg:constrainedSS} & FastCCSS \ref{alg:constrainedSSbaseline}& CCSS-IS \ref{alg:constrainedSSbaselineImportance}& & & PCSS \ref{alg:constrainedSS} & FastCCSS \ref{alg:constrainedSSbaseline}& CCSS-IS \ref{alg:constrainedSSbaselineImportance}&   \\
		\hline  
		$100$ & $10$ & $10$ &   $0.9$ & $0$ & \multirow{2}{*}{$359.89$}  & $444.87$   &  $1898.91$ & ${\bf 0.81}$&$ {\bf 0.77}$&\multirow{2}{*}{$602389$}&$645943$ &$649736$& $0.07$ \\
		\cline{1-5} \cline{7-8} \cline{9-10}\cline{12-14}
		$100$ & $10$ & $10$  & $0.9$ & $1$ &  & $518.86$& $1976.01$ &${\bf 0.82}$&${\bf 0.74}$& &$676711$&$671214$&$0.10$\\
		\hline 
		$100$ & $10$ & $10$  & $0.8$ & $0$ & \multirow{2}{*}{$393.12$}  & $490.63$   &  $ 1821.17$ & ${\bf 0.78}$&$ {\bf 0.73}$&\multirow{2}{*}{$618153 $}&$668893$ &$669365$& $0.08$\\
		\cline{1-5} \cline{7-8} \cline{9-10}\cline{12-14}
		$100$ & $10$ & $10$  & $0.8$ & $1$ &  & $519.08$&$2037.61$ &${\bf 0.81}$&${\bf 0.75 }$& &$695439$&$695758$&$0.11$\\
		\hline 
		$100$ & $10$ & $10$  & $0.7$ & $0$ & \multirow{2}{*}{$428.35$}  & $500.62$   &  $1833.90$ & ${\bf 0.77}$&$ {\bf 0.73}$&\multirow{2}{*}{$ 622865$}&$686610$ & $677433$&$0.08$\\
		\cline{1-5} \cline{7-8} \cline{9-10}\cline{12-14}
		$100$ & $10$ & $10$ & $0.7$ & $1$ &  & $532.14$ & $1914.24$ &${\bf 0.78}$&${\bf 0.72 }$& &$705501$ & $711430$&$0.12$\\
		\hline 
	\end{tabular}
}
	\label{tbl:AblationStudyL2SpeedupActionsStaticGoal} 
\end{table}

\begin{table}[t]
	\caption{ $50$ Trials of $21$ planning sessions and executions of optimal action of PCSS versus FastCCSS in myopic setting. Same seed in both algorithms. Scaling is false in the FastCCSS. This problem is the {\bf target tracking}  problem in  our { \bf second map}.}
	\centering
	\resizebox{\textwidth}{!}{
	\begin{tabular}{|c|c|c|c|c|c|c|c|c|c|c|c|c|c|c|}
		\hline
		\multicolumn{3}{|c|}{ Parameters } & \multicolumn{2}{|c|}{ num collisions } &  \multicolumn{2}{|c|}{ mean cum. reward (V) $\pm$ std} & \multicolumn{2}{|c|}{ mean cum. reward (V) no coll $\pm$ std} &  \multicolumn{2}{|c|}{cum. plan. time [sec]} & speedup &\multicolumn{2}{|c|}{Total expanded actions} & actions frac.\\
		\hline 
		 $m_x$ & $m_1$ & $\delta$ & PCSS  & FastCCSS & PCSS &  FastCCSS  & PCSS & FastCCSS   & PCSS & FastCCSS &  &PCSS & FastCCSS  &\\
		\hline 
		$150$ & $100$ & $0.9$ &  ${\bf 3}/50$  & ${\bf 23}/50$   & $   -108.94\pm 18.31$ & $ -90.17\pm 17.44$    &$  -108.54\pm 18.40$ & $ -89.82\pm 18.03$ &  $ 201.45$& $260.66$ &${\bf 0.22}$& $6360$ & $6944$& $0.08$\\
		\hline 
		 $150$ & $100$ & $0.85$ &  ${\bf 3}/50$  & ${\bf 28}/50$   & $-113.19\pm 17.76 $ & $ -85.29 \pm 15.99$    &$ -112.96\pm 18.13$ & $-83.49 \pm 17.84$ & $203.68$ & $268.47$ &${ \bf 0.24 }$&$6378$& $7010$&$0.09$\\
		\hline 
		 $150$ & $100$ & $0.8$ &  ${\bf 5}/50$  & ${\bf 37}/50$   & $ -107.78 \pm 18.74$ & $ -84.24\pm  19.85$     & $ -108.10\pm 19.08$&$ -93.11\pm 25.36$ & $202.88$& $262.41$ &${ \bf 0.22}$&$6432$& $7149$&$0.11$\\
		\hline 
		$150$ & $100$ & $0.75$ &  ${\bf 5}/50$  & ${\bf 41}/50$   & $ -109.28 \pm 20.37$ & $ -86.45 \pm 21.21$    &$ -109.53\pm 20.23$ & $-87.22\pm 18.08$ & $205.52$ & $269.91$ &${\bf 0.23}$&$6374$& $7143$&$0.11$\\
		\hline
		 $150$ & $100$ & $0.7$ &  ${\bf 5}/50$  & ${\bf 43}/50$   & $ -105.28 \pm 17.99$ & $-79.19 \pm17.40 $    &$ -106.59\pm 17.17$ & $ -85.05\pm 20.72$ & $214.00$ & $266.42$  &${\bf 0.19}$&$6546$& $7349$&$0.11$\\
		\hline 
	\end{tabular}
}
	\label{tbl:AblationStudyMyopicTargetTracking} 
\end{table}

%
We study three presented algorithms. Both chance constrained approaches have scaling parameter. We first investigate the number of collisions and the actual value function obtained by $50$ executions of $21$ planning sessions and inference steps. In both problems, we face the number of collisions of Alg.~\ref{alg:constrainedSS} similar to unscaled versions of Alg.~\ref{alg:constrainedSSbaseline} and Alg.~\ref{alg:constrainedSSbaselineImportance}, see Table~\ref{tbl:AblationStudyL2StaticGoal} and  Table~\ref{tbl:AblationStudyL2TargetTracking}.   We then examine the running time and expanded actions of three algorithms in both problems 
Table~\ref{tbl:AblationStudyL2SpeedupActionsStaticGoal}
and Table~\ref{tbl:AblationStudyL2SpeedupActionsTargetTracking}.  As we see, a larger speedup in Alg.~\ref{alg:constrainedSS} relative to Alg.~\ref{alg:constrainedSSbaselineImportance}, as well as Alg.~\ref{alg:constrainedSSbaseline} relative to Alg.~\ref{alg:constrainedSSbaselineImportance}. We do not observe a substantial decrease in expected cumulative reward in   Alg.~\ref{alg:constrainedSSbaseline} with any checked $\delta$. The number of expanded actions is always smaller with our pruning in Alg.~\ref{alg:constrainedSS} (Table~\ref{tbl:AblationStudyL2SpeedupActionsStaticGoal}
and Table~\ref{tbl:AblationStudyL2SpeedupActionsTargetTracking}). 

\begin{table}[t]
	\caption{ $50$ Trials of $21$ planning sessions and executions of optimal action of PCSS versus CCSS-IS Alg.~\ref{alg:constrainedSSbaselineImportance} and FastCCSS Alg.~\ref{alg:constrainedSSbaseline}. Same seed in three  algorithms. This problem is the {\bf target tracking} in  our { \bf second map} $L=2$. Here we study the number of collisions and the reward value.}
	\centering
	\resizebox{\textwidth}{!}{
	\begin{tabular}{|c|c|c|c|c|c|c|c|c|c|c|c|c|c|}
		\hline
		\multicolumn{5}{|c|}{ Parameters }  &\multicolumn{3}{|c|}{ num collisions } &  \multicolumn{3}{|c|}{ mean cum. rew. $\pm$ std} & \multicolumn{3}{|c|}{ mean cum. rew.  no coll $\pm$ std} \\
		\hline
		$m_x$ & $m_1$ & $m_2$  & $\delta$ & sc & PCSS  & FastCCSS & CCSS-IS & PCSS &  FastCCSS  & CCSS-IS & PCSS & FastCCSS   & CCSS-IS  \\
		\hline  
		$100$ & $10$ & $10$  & $0.9$ & $0$ & \multirow{2}{*}{${\bf 14}/50$}  & ${\bf  6}/50$   & ${\bf 13}/50$ & \multirow{2}{*}{$ -98.99\pm  18.37$ }   & $ -100.91\pm  16.87$ & $ -100.14\pm 17.88$  & \multirow{2}{*}{$ -97.59\pm  16.85$} & $   -99.86\pm 16.49 $& $ -101.48 \pm 17.27$  \\
		\cline{1-5} \cline{7-8} \cline{10-11}\cline{13-14}
		$100$ & $10$ & $10$ &  $0.9$ & $1$ &   & ${\bf 17}/50$   & ${\bf 13}/50$ &     &$ -99.88 \pm  19.25 $ & $  -98.03\pm 17.61$ & $  $ &$  -103.65 \pm 17.34 $ & $-99.67\pm 18.14 $ \\
		\hline 
		$100$ & $10$ & $10$ &  $0.8$ & $0$ & \multirow{2}{*}{${\bf 14}/50$}  & ${\bf 10 }/50$   & ${\bf 13}/50$ & \multirow{2}{*}{$  -95.86 \pm  19.4 $ }   & $-96.19 \pm 17.03$ & $  -100.22\pm 18.75$  & \multirow{2}{*}{$  -96.88 \pm    19.14 $} & $ -95.64 \pm 17.48$& $-101.81\pm 19.08$  \\
		\cline{1-5} \cline{7-8} \cline{10-11}\cline{13-14}
		$100$ & $10$  & $10$ & $0.8$ & $1$ &   & ${\bf 25}/50$   & ${\bf 21 }/50$ &     &$  -91.39 \pm 17.27$ & $ -95.26 \pm  15.91$ &   &$ -91.96\pm 18.12$ & $  -97.36\pm  16.46$ \\
		\hline 
		$100$ & $10$ & $10$ &  $0.7$ & $0$ & \multirow{2}{*}{${\bf 20}/50$}  & ${\bf   23}/50$   & ${\bf 22}/50$ & \multirow{2}{*}{$ -92.93\pm 17.40$ }   & $-96.13 \pm 19.55 $ & $ -95.00\pm 17.31 $  & \multirow{2}{*}{$ -91.46 \pm 18.17$} & $    -98.77\pm 19.22 $& $ -96.20\pm 16.26 $  \\
		\cline{1-5} \cline{7-8} \cline{10-11}\cline{13-14}
		$100$ & $10$ & $10$  & $0.7$ & $1$ &   & ${\bf 32}/50$   & ${\bf  36}/50$ &     &$ -87.00\pm16.48 $ & $ -90.31 \pm  16.87$ & $ $ &$ -84.65\pm 15.72 $ & $-90.49\pm 16.27 $  \\
		\hline 
	\end{tabular}
}
	\label{tbl:AblationStudyL2TargetTracking} 
\end{table}

\begin{table}[t]
	\caption{ $50$ Trials of $21$ planning sessions and executions of optimal action of PCSS Alg.~\ref{alg:constrainedSS} versus CCSS-IS Alg.~\ref{alg:constrainedSSbaselineImportance} and FastCCSS Alg.~\ref{alg:constrainedSSbaseline}. Same seed in three  algorithms. This problem is the {\bf target tracking} problem in  our { \bf second map} $L=2$. In this table we study speedup \eqref{eq:speedup} and the saved actions fraction \eqref{eq:actions}}
	\centering
	\resizebox{\textwidth}{!}{
	\begin{tabular}{|c|c|c|c|c|c|c|c|c|c|c|c|c|c|c|}
		\hline
		\multicolumn{5}{|c|}{ Parameters }   &  \multicolumn{3}{|c|}{cum. plan. time [sec]} & speedup Alg.~\ref{alg:constrainedSS} rel to \ref{alg:constrainedSSbaselineImportance}  &speedup Alg.~\ref{alg:constrainedSSbaseline} rel to \ref{alg:constrainedSSbaselineImportance}  &\multicolumn{3}{|c|}{Total expanded actions} & actions frac. Alg.~\ref{alg:constrainedSS} rel to \ref{alg:constrainedSSbaselineImportance}\\
		\hline
		$m_x$ & $m_1$ & $m_2$  & $\delta$ & sc & PCSS \ref{alg:constrainedSS} & FastCCSS \ref{alg:constrainedSSbaseline}& CCSS-IS \ref{alg:constrainedSSbaselineImportance}& & & PCSS \ref{alg:constrainedSS} & FastCCSS \ref{alg:constrainedSSbaseline}& CCSS-IS \ref{alg:constrainedSSbaselineImportance}&   \\
		\hline  
		$100$ & $10$ & $10$  & $0.9$ & $0$ & \multirow{2}{*}{$920.62$}  & $ 1110.45 $   &  $3450.07$ & ${\bf 0.73}$&$ {\bf 0.68 }$&\multirow{2}{*}{$534242$}&$588780$ &$590291$& $0.09$\\
		\cline{1-5} \cline{7-10} \cline{12-14}
		$100$ & $10$ & $10$  & $0.9$ & $1$ &  & $ 1183.08$& $3591.56$&${\bf 0.74}$&${\bf 0.67}$& &$624208$&$615302$&$0.13$\\
		\hline 
		$100$ & $10$ & $10$ &  $0.8$ & $0$ & \multirow{2}{*}{$ 944.38$}  & $1079.37$   &  $3544.20$ & ${\bf 0.73}$&$ {\bf 0.70 }$&\multirow{2}{*}{$539697$}&$612064$ &$613031$& $0.12$\\
		\cline{1-5} \cline{7-10} \cline{12-14}
		$100$ & $10$ & $10$  & $0.8$ & $1$ &  &$ 1153.06 $& $4287.72$ &${\bf 0.78}$&${\bf  0.73}$& &$643762$&$646185$&$0.16$\\
		\hline 
		$100$ & $10$ & $10$  & $0.7$ & $0$ & \multirow{2}{*}{$906.62$}    &  $1361.23$ & $4090.22$ & ${\bf 0.78}$&$ {\bf 0.67}$&\multirow{2}{*}{$563361$}&$627096$ &$627979$& $0.10$\\
		\cline{1-5} \cline{7-10} \cline{12-14}
		$100$ & $10$ & $10$  & $0.7$ & $1$ &  & $ 1456.03$ & $4310.20$ &${\bf 0.79 }$&${\bf  0.66}$& &$659623$ & $656564$&$0.14$\\
		\hline 
	\end{tabular}
}
	\label{tbl:AblationStudyL2SpeedupActionsTargetTracking} 
\end{table}

\section{Conclusions} \label{sec:Conclusions}
We proposed a novel formulation of belief-dependent probabilistically constrained continuous POMDP. Our formulation allows us, adaptively with respect to observation laces, to accept or reject the candidate policy satisfying or violating the probabilistic constraint. We also uplifted chance-constrained POMDP to continuous domains in terms of states and observations and general belief-dependent rewards. Our simulations corroborate the superiority of our efficient algorithms in terms of celerity. In all simulations we obtained typical speedup of $60\%$. We intend to continue investigating the proposed formulation towards larger horizons using FSSS, as explained in the paper. 
	
\begin{appendices}
\section{Proofs} \label{sec:Proofs}
\subsection{Proof of Theorem~\ref{thm:pruning} Necessary and sufficient condition for feasibility of probabilistic constraint} \label{proof:pruning}
Before we start let us state that by definition, using $c(b^i_{k:k+L};\phi,\delta)=\bigg(\prod_{\ell=k}^{k+L} \mathbf{1}_{\left\{\phi(b^i_{\ell}) \geq \delta \right\}}\bigg)$ holds
\begin{align}
	1 \geq  \frac{1}{m} \sum_{i=1}^{m} c(b^i_{k:k+L};\phi, \delta)= \frac{1}{m} \sum_{i=1}^{m}\bigg(\prod_{\ell=k}^{k+L} \mathbf{1}_{\left\{\phi(b^i_{\ell}) \geq \delta\right\}}\bigg)
\end{align}
Suppose that 
\begin{align}
	1 \geq \frac{1}{m} \sum_{i=1}^{m} c(b^i_{k:k+L};\phi, \delta) \geq 1,
\end{align}
so 
\begin{align}
 \sum_{i=1}^{m}\bigg(\prod_{\ell=k}^{k+L} \mathbf{1}_{\left\{\phi(b^i_{\ell}) \geq \delta\right\}}\bigg) =m \label{eq:keyobservation}
\end{align}

Suppose in contradiction that $\exists i,\ell $ such that $\mathbf{1}_{\left\{\phi(b^i_{\ell}) \geq \delta\right\}} = 0$. We have that 
\begin{align}
	\sum_{i=1}^{m}\bigg(\prod_{\ell=k}^{k+L} \mathbf{1}_{\left\{\phi(b^i_{\ell}) \geq \delta\right\}}\bigg) < m
\end{align}
This proves the first statement. For the second statement we prove the reciprocal implication. Assume that   $\forall i,\ell $ holds $\mathbf{1}_{\left\{\phi(b^i_{\ell}) \geq \delta\right\}} = 1$, we arrived at the fulfilling equation \eqref{eq:keyobservation}. 
\qed
\subsection{Proof of Lemma~\ref{lem:Represent} (representation of our outer constraint).}  \label{proof:Represent}
 To verify the inner constraint $c(b_{k:k+L};\phi, \delta)$ we need to know lace of the beliefs, operator $\phi$ and $\delta$. To rephrase  that   
 \begin{align}
 	\prob\left(c(b_{k:k+L};\phi, \delta) | b_k, \pi, a_k, b_{k+1:k+L}\right) = c(b_{k:k+L}; \phi, \delta).
 \end{align} 
In addition let us state the fact that $p(b_{k+1:k+L}|b_k, \pi, a_k, z_{k+1:k+L})$ is Dirac's delta function. All in all, we can write  
\begin{align}
	&\prob\left(c(b_{k:k+L};\phi,\delta)|b_k, \pi, a_k  \right) =  \int_{\substack{b_{k+1:k+L} \\ z_{k+1:k+L}}} \prob\left(c(b_{k:k+L};\phi,\delta) | b_k, \pi, a_k, b_{k+1:k+L}\right) \cdot \nonumber\\
	& \probd(b_{k+1:k+L}|b_k, \pi, a_k, z_{k+1:k+L})p(z_{k+1:k+L}| b_k, \pi,a_k)\mathrm{d}b_{k+1:k+L}\mathrm{d}z_{k+1:k+L}=\\
	&\mathbb{E}_{z_{k+1:k+L}}\left[c(b_{k:k+L};\phi,\delta)\bigg|b_k, \pi, a_k\right].
\end{align}
\qed
\subsection{Proof of Lemma~\ref{lem:TrajProbab} (probability of the trajectory)}  \label{proof:TrajProbab}
\begin{align}
	&\probd(x_{k:k+L} |b_k, \pi_{k+1:k+L-1}, a_k)= \!\!\!\!\!\!\!\!\!\!\int\limits_{z_{k+1:k+L-1}}\!\!\!\!\!\! {\color{teal}\probd(x_{k:k+L} , z_{k+1:k+L -1 } |  b_k, \pi)}\mathrm{d}z_{k+1:k+L-1}= \\
	& \!\!\!\!\!\!\int\limits_{z_{k+1:k+L-1}}\!\!\!\!\!\! \probd(x_{k+L} | z_{k: k+L -1 }, x_{k:k+L-1},  b_k, \pi) \probd(z_{k:k+L -1 },x_{k:k+L-1} |   b_k, \pi)\mathrm{d}z_{k+1:k+L-1} =\\
	&\!\!\!\!\!\!\int\limits_{z_{k+1:k+L-1}}\!\!\!\!\!\! \probd_T(x_{k+L} | x_{k+L-1},  a_{k+L-1}) \probd(z_{k+L -1} | x_{k:k+L-1}, z_{k+1:k+L-2} ,b_k, \pi) \nonumber \\
	&\probd(x_{k:k+L-1}, z_{k+1:k+L-2} |b_k, \pi)\mathrm{d}z_{k+1:k+L-1}=\\
	&\!\!\!\!\!\!\int\limits_{z_{k+1:k+L-1}}\!\!\!\!\!\! \probd_T(x_{k+L} | x_{k+L-1},  a_{k+L-1}) \probd_Z(z_{k+L -1} | x_{k+L-1}) \nonumber \\
	&{\color{teal}\probd(x_{k:k+L-1}, z_{k+1:k+L-2} |b_k, \pi)}\mathrm{d}z_{k+1:k+L-1}.
\end{align}
We observe the recurrence relation. Overall
\begin{align}
	&\probd(\tau |b_k, \pi_{k+1:k+L-1}, a_k) = \probd_T(x_{k+1}|x_{k}, a_k )b_k(x_k)\nonumber\\
	& \int_{z_{k+1: k+L-1}} \prod_{i=k+1}^{k+L-1} \probd_T(x_{i+1} | x_{i}, \pi(b_i(b_{i-1}, a_{i -1}, z_{i}))) \probd_Z(z_{i}|x_i)\mathrm{d}z_{k+1:k+L-1}
\end{align}
\qed
\subsection{Proof of Lemma~\ref{lem:AvSafePosteriors} (average over the safe posteriors)} \label{proof:AvSafePosteriors}
\begin{align}
	&{\color{teal}\underbrace{\prob\Bigg(\bigwedge_{i=k}^{k+L}\mathbf{1}\left\{ x_i\!\in\!  \mathcal{X}^{\text{safe}}_i\right\}| b_k, \pi\Bigg)}_{(a)}} = \prob\Bigg(\mathbf{1}\left\{ x_k\!\in\!  \mathcal{X}^{\text{safe}}_k\right\} | b_k\Bigg) {\color{purple}\underbrace{\prob\Bigg(\bigwedge_{i=k+1}^{k+L}\mathbf{1}\left\{ x_i\!\in\!  \mathcal{X}^{\text{safe}}_i\right\}| \mathbf{1}\left\{ x_k\!\in\!  \mathcal{X}^{\text{safe}}_k\right\}, b_k, \pi\Bigg)}_{(b)}}
\end{align}
Let us focus on the expression we marked by $(b)$ 
\begin{align}
	&\prob\Bigg(\bigwedge_{i=k+1}^{k+L}\mathbf{1}\left\{ x_i\!\in\!  \mathcal{X}^{\text{safe}}_i\right\}| \mathbf{1}\left\{ x_k\!\in\!  \mathcal{X}^{\text{safe}}_k\right\} b_k, \pi\Bigg) = \nonumber\\
	&\int_{b_{k+1}}\prob\Bigg(\bigwedge_{i=k+1}^{k+L}\mathbf{1}\left\{ x_i\!\in\!  \mathcal{X}^{\text{safe}}_i\right\}| b_{k+1}, \mathbf{1}\left\{ x_k\!\in\!  \mathcal{X}^{\text{safe}}_k\right\}, b_k, \pi\Bigg)\prob\Bigg(b_{k+1} |  \mathbf{1}\left\{ x_k\!\in\!  \mathcal{X}^{\text{safe}}_k\right\} b_k, \pi\Bigg)\mathrm{d}b_{k+1}=\\
	&\int_{b_{k+1}} \prob\Bigg(b_{k+1} |  \mathbf{1}\left\{ x_k\!\in\!  \mathcal{X}^{\text{safe}}_k\right\}, b_k, \pi\Bigg) \prob\Bigg(\bigwedge_{i=k+1}^{k+L}\mathbf{1}\left\{ x_i\!\in\!  \mathcal{X}^{\text{safe}}_i\right\}| b_{k+1}, \pi\Bigg)\mathrm{d}b_{k+1}
\end{align}

Merging the two expressions we obtain 
\begin{align}
	&\prob\Bigg(\bigwedge_{i=k}^{k+L}\mathbf{1}\left\{ x_i\!\in\!  \mathcal{X}^{\text{safe}}_i\right\}| b_k, \pi\Bigg) = \prob\Bigg(\mathbf{1}\left\{ x_k\!\in\!  \mathcal{X}^{\text{safe}}_k\right\} | b_k\Bigg)\cdot \nonumber \\
	&\int_{b_{k+1}}\prob(b_{k+1} |  \mathbf{1}\left\{ x_k\!\in\!  \mathcal{X}^{\text{safe}}_k\right\}, b_k, \pi)\underbrace{{\color{teal} \prob\Bigg(\bigwedge_{i=k+1}^{k+L}\mathbf{1}\left\{ x_i\!\in\!  \mathcal{X}^{\text{safe}}_i\right\}| b_{k+1},\pi\Bigg)}}_{(c)}\mathrm{d}b_{k+1}
\end{align}
We observe that expression $(a)$ is very similar to $(c)$, namely
\begin{align}
	&\prob\Bigg(\bigwedge_{i=k+1}^{k+L}\mathbf{1}\left\{ x_i\!\in\!  \mathcal{X}^{\text{safe}}_i\right\}| b_{k+1}, \pi\Bigg) = \prob\Bigg(\mathbf{1}\left\{ x_{k+1}\!\in\!  \mathcal{X}^{\text{safe}}_{k+1}\right\} | b_{k+1}\Bigg)\cdot \nonumber \\
	&\int_{b_{k+2}}\prob(b_{k+2} |  \mathbf{1}\left\{ x_{k+1}\!\in\!  \mathcal{X}^{\text{safe}}_{k+1}\right\}, b_{k+1}, \pi)\underbrace{{\color{teal}\prob \Bigg(\bigwedge_{i=k+2}^{k+L}\mathbf{1}\left\{ x_i\!\in\!  \mathcal{X}^{\text{safe}}_i\right\}| b_{k+2}, \pi\Bigg)}}_{(d)}\mathrm{d}b_{k+2}
\end{align}
Merging the two we got 
\begin{align}
	&\prob\Bigg(\bigwedge_{i=k}^{k+L}\mathbf{1}\left\{ x_i\!\in\!  \mathcal{X}^{\text{safe}}_i\right\}| b_k, \pi\Bigg) = \prob(\mathbf{1}\left\{ x_k\!\in\!  \mathcal{X}^{\text{safe}}_k\right\} | b_k) \cdot\nonumber \\
	&\int_{b_{k+1}}\prob(b_{k+1} |  \mathbf{1}\left\{ x_k\!\in\!  \mathcal{X}^{\text{safe}}_k\right\}, b_k, \pi)\prob(\mathbf{1}\left\{ x_{k+1}\!\in\!  \mathcal{X}^{\text{safe}}_{k+1}\right\} | b_{k+1})\cdot \nonumber\\
	&\int_{b_{k+2}}\prob(b_{k+2} |  \mathbf{1}\left\{ x_{k+1}\!\in\!  \mathcal{X}^{\text{safe}}_{k+1}\right\}, b_{k+1}, \pi) \prob\Bigg(\bigwedge_{i=k+2}^{k+L}\mathbf{1}\left\{ x_i\!\in\!  \mathcal{X}^{\text{safe}}_i\right\}| b_{k+2}, \pi\Bigg)\mathrm{d}b_{k+2}\mathrm{d}b_{k+1}.
\end{align}
We behold the recurrence relation.

Now we show that marginalization can be done with respect to the observations. Let us assume that $i$ is the last index ($i=k+L$)
\begin{align}
	&\int\limits_{b_{i}} \prob(\mathbf{1}\left\{ x_{i}\!\in\!  \mathcal{X}^{\mathrm{safe}}_{i}\right\} | b_{i})\prob(b_{i} |  \mathbf{1}\left\{ x_{i-1}\!\in\!  \mathcal{X}^{\text{safe}}_{i-1}\right\}, b_{i-1}, \pi)\mathrm{d}b_i=\\
	&\int\limits_{b_{i}} \int_{z_i\in \mathcal{Z} } \prob(\mathbf{1}\left\{ x_{i}\!\in\!  \mathcal{X}^{\mathrm{safe}}_{i}\right\} | b_{i})\delta(b_i - \psi(b_{i-1},\mathbf{1}\left\{ x_{i-1}\!\in\!  \mathcal{X}^{\text{safe}}_{i-1}\right\},a_{i-1},z_i))\cdot \nonumber\\
	& p(z_i|a_{i-1},b_{i-1},\mathbf{1}\left\{ x_{i-1}\!\in\!  \mathcal{X}^{\text{safe}}_{i-1}\right\})\mathrm{d}z_i\mathrm{d}b_i=\\
	&\int_{z_i\in \mathcal{Z} }\int\limits_{b_{i}}  \prob(\mathbf{1}\left\{ x_{i}\!\in\!  \mathcal{X}^{\mathrm{safe}}_{i}\right\} | b_{i})\delta(b_i - \psi(b_{i-1},\mathbf{1}\left\{ x_{i-1}\!\in\!  \mathcal{X}^{\text{safe}}_{i-1}\right\},a_{i-1},z_i))\cdot \nonumber\\
	& p(z_i|a_{i-1},b_{i-1},\mathbf{1}\left\{ x_{i-1}\!\in\!  \mathcal{X}^{\text{safe}}_{i-1}\right\})\mathrm{d}b_i\mathrm{d}z_i=\\
	&\int_{z_i\in \mathcal{Z} } \prob(\mathbf{1}\left\{ x_{i}\!\in\!  \mathcal{X}^{\mathrm{safe}}_{i}\right\} | \psi(b_{i-1},\mathbf{1}\left\{ x_{i-1}\!\in\!  \mathcal{X}^{\text{safe}}_{i-1}\right\},a_{i-1},z_i)) \cdot \nonumber\\
	& p(z_i|a_{i-1},b_{i-1},\mathbf{1}\left\{ x_{i-1}\!\in\!  \mathcal{X}^{\text{safe}}_{i-1}\right\})\mathrm{d}z_i=\\
	& \mathbb{E}_{z_i}\Bigg[ \prob(\mathbf{1}\left\{ x_{i}\!\in\!  \mathcal{X}^{\mathrm{safe}}_{i}\right\} | \psi(b_{i-1},\mathbf{1}\left\{ x_{i-1}\!\in\!  \mathcal{X}^{\text{safe}}_{i-1}\right\},a_{i-1},z_i))\Bigg|a_{i-1},b_{i-1},\mathbf{1}\left\{ x_{i-1}\!\in\!  \mathcal{X}^{\text{safe}}_{i-1}\right\} \Bigg]
\end{align}
We plug this result into expression for $i-1$ and do the same trick to $b_{i-1}$
\qed
\subsection{Proof of Lemma~\ref{lem:absCont} (Absolute continuity of observation likelihoods)}  \label{proof:absCont}
Assume in contradiction that absolute continuity does not hold. That is, there exists observation $z_{k+1} = \bar{z}_{i+1} = \zeta$ such that  $\probd(z_{i+1} =\zeta | b_i, a_i) = 0 $ and $\probd(\bar{z}_{i+1}=\zeta | b_i, \mathbf{1}\{x_i \in \mathcal{X}_i^{\mathrm{safe}}\}, a_i) > 0 $. 
\begin{align}
\probd(z' = \zeta|a,b) = \int_{x' \in \mathcal{X}'} \probd_Z(z'=\zeta|x') p(x'|b, a)\mathrm{d}x' = 0 
\end{align}
The above integral can be zero only if the integrand is zero for all $x'$  
\begin{align}
\probd_Z(z'=\zeta|x') p(x'|b, a)  = 0 
\end{align}
If $\probd_Z(z'=\zeta|x') =0$ we have a contradiction since 
\begin{align}
\probd(z'=\zeta|a,b,\mathbf{1}\left\{ x\!\in\!  \mathcal{X}^{\text{safe}}\right\}) = \int_{x' \in \mathcal{X}'} \probd_Z(z'=\zeta |x') \probd(x'|b, a,\mathbf{1}\left\{ x\!\in\!  \mathcal{X}^{\text{safe}}\right\})\mathrm{d}x' = 0
\end{align}
On the other hand, $p(x'|b, a) > 0$ for some $x'$ since it is a propagated belief. 
\qed
\subsection{Proof of Lemma~\ref{lem:ml} (Maximum likelihood observation)}  \label{proof:ml}
It is sufficient to prove that we will obtain same $x^{\mathrm{ML}}_{\ell+1}$ in two cases. We recall that  $x^{\mathrm{ML}}_{\ell+1}= g(x^{\mathrm{ML}}_{\ell},a_{\ell})$ when the motion model has Gaussian noise $ \probd_T(\cdot | x_{\ell}, a_{\ell})=\mathcal{N}(g(x_{\ell},a_{\ell}), \Sigma_w)$ attains maximum at $g(x_{\ell},a_{\ell})$. 
\begin{align}
	x^{\mathrm{ML}}_{\ell} = \argmax_{x_{\ell}} b_{\ell}(x_{\ell}) = \argmax_{x_{\ell}}\frac{\mathbf{1}\left\{ x\!\in\!  \mathcal{X}^{\text{safe}}\right\} b_{\ell}(x_{\ell})}{\int_{\xi \in \mathcal{X}} \mathbf{1}\left\{ \xi\!\in\!  \mathcal{X}^{\text{safe}}\right\}b(\xi)\mathrm{d}\xi}  = \frac{\argmax_{x_{\ell}} b_{\ell}(x_{\ell})}{\int_{\xi \in \mathcal{X}} \mathbf{1}\left\{ \xi\!\in\!  \mathcal{X}^{\text{safe}}\right\}b(\xi)\mathrm{d}\xi} 
\end{align}
Where in the last passage we used the assumption. Now we can apply motion and observation models to obtain $z^{\mathrm{ML}}_{\ell+1}$.
\qed

\subsection{Proof of Lemma~\ref{lem:objComp} (Objectives comparison)}  \label{proof:ObjComp}
\begin{align}
	&\mathbb{E}[r^x(x)|\psi(b_{\ell},  a_{\ell},z^{\mathrm{ML}}_{\ell+1})\big] = \int_{x_{\ell+1}} r^x(x_{\ell+1})\probd(x_{\ell+1}|b_{\ell},a_{\ell}, z^{\mathrm{ML}}_{\ell+1})\mathrm{d} x_{\ell+1}=\\
	&\int_{x_{\ell+1}} r^x(x_{\ell+1})\probd(\mathbf{1}\left\{ x_{\ell} \in  \mathcal{X}^{\text{safe}}\right\}, x_{\ell+1}|b_{\ell},a_{\ell}, z^{\mathrm{ML}}_{\ell+1})\mathrm{d} x_{\ell+1} + \nonumber\\
	&\int_{x_{\ell+1}} r^x(x_{\ell+1})\probd(\mathbf{1}\left\{ x_{\ell} \notin  \mathcal{X}^{\text{safe}}\right\}, x_{\ell+1}|b_{\ell},a_{\ell}, z^{\mathrm{ML}}_{\ell+1})\mathrm{d} x_{\ell+1} 
\end{align}

\begin{align}
	&\int_{x_{\ell+1}} r^x(x_{\ell+1})\probd(\mathbf{1}\left\{ x_{\ell} \in  \mathcal{X}^{\text{safe}}\right\}, x_{\ell+1}|b_{\ell},a_{\ell}, z^{\mathrm{ML}}_{\ell+1})\mathrm{d} x_{\ell+1} =\\
	&\probd(\mathbf{1}\left\{ x_{\ell} \in  \mathcal{X}^{\text{safe}}\right\}|b_{\ell}, a_{\ell}, z^{\mathrm{ML}}_{\ell+1})\int_{x_{\ell+1}} r^x(x_{\ell+1})\probd(x_{\ell+1}|b_{\ell},a_{\ell}, z^{\mathrm{ML}}_{\ell+1},\mathbf{1}\left\{ x_{\ell} \in  \mathcal{X}^{\text{safe}}\right\})\mathrm{d} x_{\ell+1}=\\
	&\probd(\mathbf{1}\left\{ x_{\ell} \in  \mathcal{X}^{\text{safe}}\right\}|b_{\ell}, a_{\ell}, z^{\mathrm{ML}}_{\ell+1})\mathbb{E}[r^x(x)|\psi(b^{\mathrm{safe}}_{\ell},  a_{\ell},z^{\mathrm{ML}}_{\ell+1})\big]=\\
	&\frac{\probd\Bigg(z^{\mathrm{ML}}_{\ell+1}\Bigg|b_{\ell}, a_{\ell}, \mathbf{1}\left\{ x_{\ell} \in  \mathcal{X}^{\text{safe}}\right\} \Bigg)\prob\big(\mathbf{1}\left\{ x_{\ell} \in  \mathcal{X}^{\text{safe}}\right\} \big)|b_{\ell}, a_{\ell}\big)}{\probd(z^{\mathrm{ML}}_{\ell+1}|b_{\ell}, a_{\ell} )}\mathbb{E}[r^x(x)|\psi(b^{\mathrm{safe}}_{\ell},  a_{\ell},z^{\mathrm{ML}}_{\ell+1})\big]
\end{align}

\begin{align}
	&\probd(\mathbf{1}\left\{ x_{\ell} \notin  \mathcal{X}^{\text{safe}}\right\}|b_{\ell}, a_{\ell}, z^{\mathrm{ML}}_{\ell+1}))\mathbb{E}[r^x(x)|\psi(b_{\ell}, \mathbf{1}\left\{ x_{\ell} \notin  \mathcal{X}^{\text{safe}}\right\}, a_{\ell},z^{\mathrm{ML}}_{\ell+1})\big]=\\
	&\frac{\probd\Bigg(z^{\mathrm{ML}}_{\ell+1}\Bigg|b_{\ell}, a_{\ell}, \mathbf{1}\left\{ x_{\ell} \notin  \mathcal{X}^{\text{safe}}\right\} \Bigg)\prob\big(\mathbf{1}\left\{ x_{\ell} \notin  \mathcal{X}^{\text{safe}}\right\} \big)|b_{\ell}, a_{\ell}\big)}{\probd(z^{\mathrm{ML}}_{\ell+1}|b_{\ell}, a_{\ell} )}\mathbb{E}[r^x(x)|\psi(b_{\ell}, \mathbf{1}\left\{ x_{\ell} \notin  \mathcal{X}^{\text{safe}}\right\}, a_{\ell},z^{\mathrm{ML}}_{\ell+1})\big]
\end{align}
We arrived at the desired result.
\qed
\section{Calculating the Posterior Conditioned on the Safe Prior (Section \ref{sec:SubtreeRes})}\label{sec:PostFromSafePrior}
The safe event influence Belief-MDP motion model in the following way 
\begin{align}
	&\probd(b'|b, a,\mathbf{1}\left\{ x\!\in\!  \mathcal{X}^{\text{safe}}\right\})=\int_{z'\in \mathcal{Z} }\probd(b'|b,a,z', \mathbf{1}\left\{ x\!\in\!  \mathcal{X}^{\text{safe}}\right\}) \probd(z'|a,b,\mathbf{1}\left\{ x\!\in\!  \mathcal{X}^{\text{safe}}\right\}) \mathrm{d}z' =\nonumber\\
	&\int_{z'\in \mathcal{Z} }\delta(b' - \psi(b^{\text{safe}},a,z')) \probd(z'|a,b,\mathbf{1}\left\{ x\!\in\!  \mathcal{X}^{\text{safe}}\right\})\mathrm{d}z'
\end{align}
We first calculate the propagated belief conditioned on the safe prior.
\begin{align}
	\probd(x'|b, a,\mathbf{1}\left\{ x \!\in\!  \mathcal{X}^{\text{safe}}\right\})= \frac{\int_{x \in \mathcal{X}}\mathbf{1}\left\{ x\!\in\!  \mathcal{X}^{\text{safe}}\right\}\probd_T(x'|x,a)b(x)\mathrm{d}x}{\int_{\xi \in \mathcal{X}} \mathbf{1}\left\{ \xi\!\in\!  \mathcal{X}^{\text{safe}}\right\}b(\xi)\mathrm{d}\xi}
\end{align}
$b$ and event safe, meaning that belief supposed to be zero at non safe places.
Finally,
\begin{align}
	\probd(z'|a,b,\mathbf{1}\left\{ x\!\in\!  \mathcal{X}^{\text{safe}}\right\}) = \int_{x' \in \mathcal{X}'} \probd_Z(z'|x') p(x'|b, a,\mathbf{1}\left\{ x\!\in\!  \mathcal{X}^{\text{safe}}\right\})\mathrm{d}x'.
\end{align}
We can also look at the above from slightly different angle. We define $b^{\text{safe}}$
\begin{align}
	b^{\text{safe}}(x)= \frac{\mathbf{1}\left\{ x\!\in\!  \mathcal{X}^{\text{safe}}\right\}b(x)}{\int_{\xi \in \mathcal{X}} \mathbf{1}\left\{ \xi\!\in\!  \mathcal{X}^{\text{safe}}\right\}b(\xi)\mathrm{d}\xi},
\end{align}
such that 
\begin{align}
		\probd(x'|b, a,\mathbf{1}\left\{ x \!\in\!  \mathcal{X}^{\text{safe}}\right\})= \int_{x \in \mathcal{X}}\probd_T(x'|x,a)b^{\mathrm{safe}}(x)\mathrm{d}x.
\end{align}
We can use the safe belief defined above in the belief update as follows 
\begin{align}
	&\probd (x'|b, a, z', \mathbf{1}\{x \in \mathcal{X}^{\mathrm{safe}}\}) = \frac{\probd (z'|b, a, x', \mathbf{1}\{x \in \mathcal{X}^{\mathrm{safe}}\})\probd (x'|b, a, \mathbf{1}\{x \in \mathcal{X}^{\mathrm{safe}}\}) }{	\probd (z'|b, a, \mathbf{1}\{x \in \mathcal{X}^{\mathrm{safe}}\})} = \\
	&\frac{\probd_{Z} (z'|x')\probd (x'|b, a, \mathbf{1}\{x \in \mathcal{X}^{\mathrm{safe}}\}) }{\underset{\xi'}{\int}	\probd_Z (z'|\xi') \probd (\xi'|b, a, \mathbf{1}\{x \in \mathcal{X}^{\mathrm{safe}}\})\mathrm{d}\xi'}
\end{align}		
Now the $\psi$ is conventional belief update operator receiving as input $\psi(b^{\text{safe}}, a, z')$.
\section{Derivation of the Importance Weights (Section \ref{sec:Importance})} \label{sec:ImportanceWeights}
To calculate the likelihoods of the observations we shall do the following. Suppose that the belief is represented by samples. 
\begin{align}
	b_{k}(x_{k}) \approx \sum_{i=1}^{N} w_k^i \delta(x_{k} - x^i_{k}) \label{eq:BeliefSampleApprox},  
\end{align}
Let us introduce another notation $\delta(x_{k} - x^i_{k})= \delta^{x_k^i}(x_{k}) $   
so 
\begin{align}
	\probd(x_{k+1}|b_k, a_k,\mathbf{1}_{\left\{ x_k  \in   \mathcal{X}^{\text{safe}}_k\right\}})= &\frac{\int_{x_k \in \mathcal{X}}\mathbf{1}_{\left\{ x_k \in   \mathcal{X}^{\text{safe}} \right\}}\probd_T(x_{k+1}|x_k,a_k)b_k(x_k)\mathrm{d}x_k}{\int_{\xi_k \in \mathcal{X}} \mathbf{1}_{\left\{ \xi_k  \in   \mathcal{X}^{\text{safe}}\right\}}b_k(\xi_k)\mathrm{d}\xi_k} \approx \\
	&\frac{\int_{x_k \in \mathcal{X}}\mathbf{1}_{\left\{ x_k \in   \mathcal{X}^{\text{safe}} \right\}}\probd_T(x_{k+1}|x_k,a_k) \Bigg(\sum_{i=1}^{N} w_k^i \delta^{x_k^i}(x_{k})\Bigg)\mathrm{d}x_k}{\int_{\xi_k \in \mathcal{X}} \mathbf{1}_{\left\{ \xi_k \in   \mathcal{X}^{\text{safe}}\right\}}\Bigg(\sum_{i=1}^{N} w_k^i  \delta^{x_k^i}(x_{k})\Bigg)\mathrm{d}\xi_k} = \\
	&\frac{\sum_{i=1}^{N} w_k^i\mathbf{1}_{\left\{ x^i_k \in   \mathcal{X}^{\text{safe}} \right\}}\probd_T(x_{k+1}|x^i_k,a_k)}{\sum_{i=1}^{N} w_k^i \mathbf{1}_{\left\{ x^i_k \in   \mathcal{X}^{\text{safe}} \right\}}} \approx \\
	&\frac{\sum_{i=1}^{N} w_k^i\mathbf{1}_{\left\{ x^i_k \in   \mathcal{X}^{\text{safe}} \right\}}\delta^{x^i_{k+1}}(x_{k+1})}{\sum_{i=1}^{N} w_k^i \mathbf{1}_{\left\{ x^i_k \in   \mathcal{X}^{\text{safe}} \right\}}}.
\end{align}  
We got that 
\begin{align}
	\probd(\bar{z}_{k+1} =  z^j_{k+1}| b_k, \mathbf{1}_{\{x_k \in \mathcal{X}_i^{\mathrm{safe}}\}}, a_k) \approx  \frac{\sum_{i=1}^{N} w_k^i\mathbf{1}_{\left\{ x^i_k \in   \mathcal{X}^{\text{safe}} \right\}}\probd_Z(z^j_{k+1}|x^i_{k+1})}{\sum_{i=1}^{N} w_k^i \mathbf{1}_{\left\{ x^i_k \in   \mathcal{X}^{\text{safe}} \right\}}}.
\end{align}
In case of the denominator we arrive to the same expression, only without the indicator. 
\begin{align}
	\probd(z_{k+1} =  z^j_{k+1}| b_k, a_k) \approx  \frac{\sum_{i=1}^{N} w_k^i \probd_Z(z^j_{k+1}|x^i_{k+1})}{\sum_{i=1}^{N} w_k^i }.
\end{align}
In reality, however, it is possible that after we discard all the samples of the belief which are not safe we are left with a very small set of samples or an empty set. To alleviate this issue we resample the safe particles to a constant number of samples $N$.
\section{Mean Distance to Goal Accountability for Uncertainty} \label{sec:DistanceToGoal}
\begin{figure}[t] 
	\centering
	\begin{minipage}[t]{0.2\textwidth}
		\centering 
		\includegraphics[width=\textwidth]{./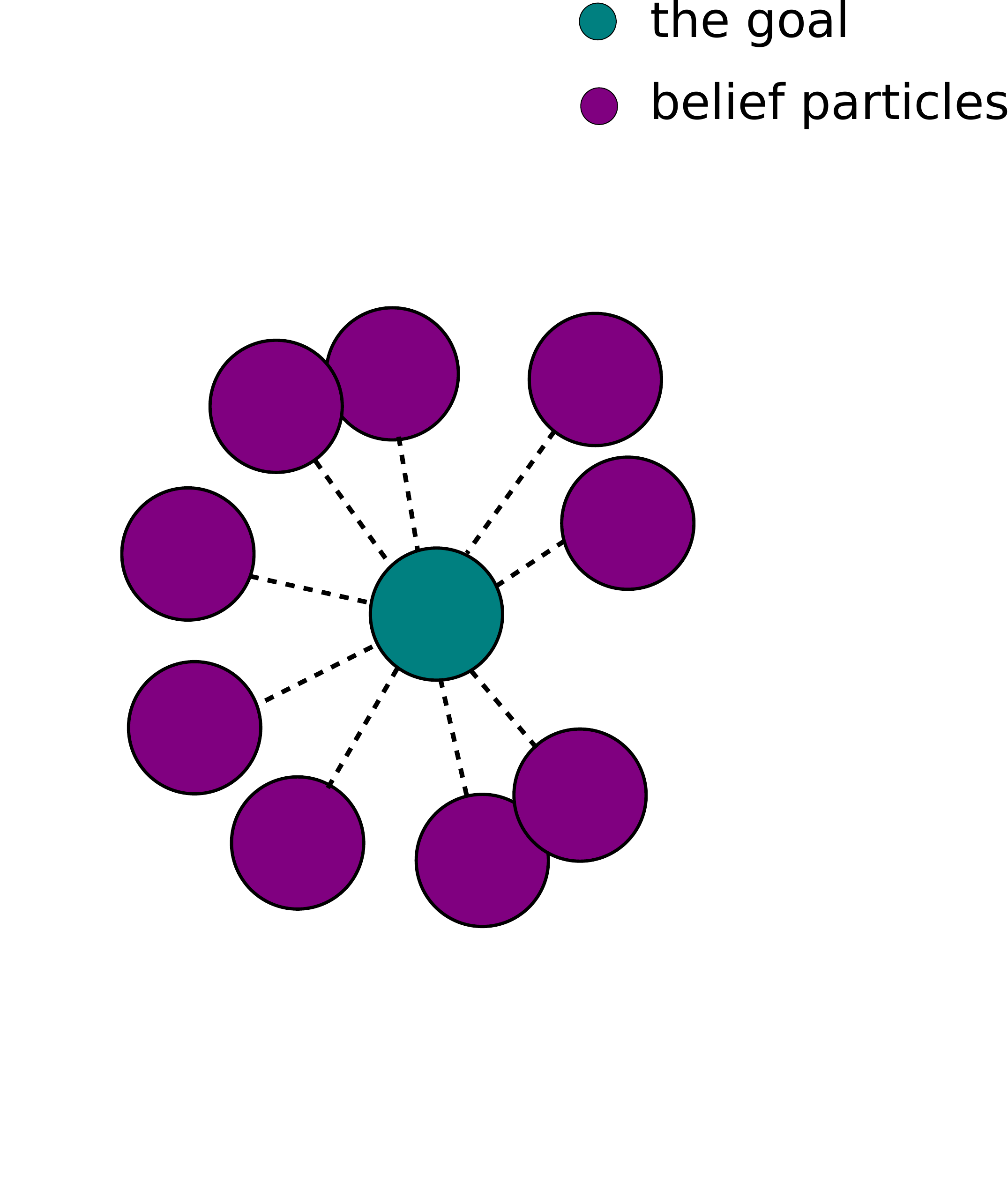}
		\subcaption{}
	\end{minipage}%
	\hfill
	\begin{minipage}[t]{0.2\textwidth}
		\centering 
		\includegraphics[width=\textwidth]{./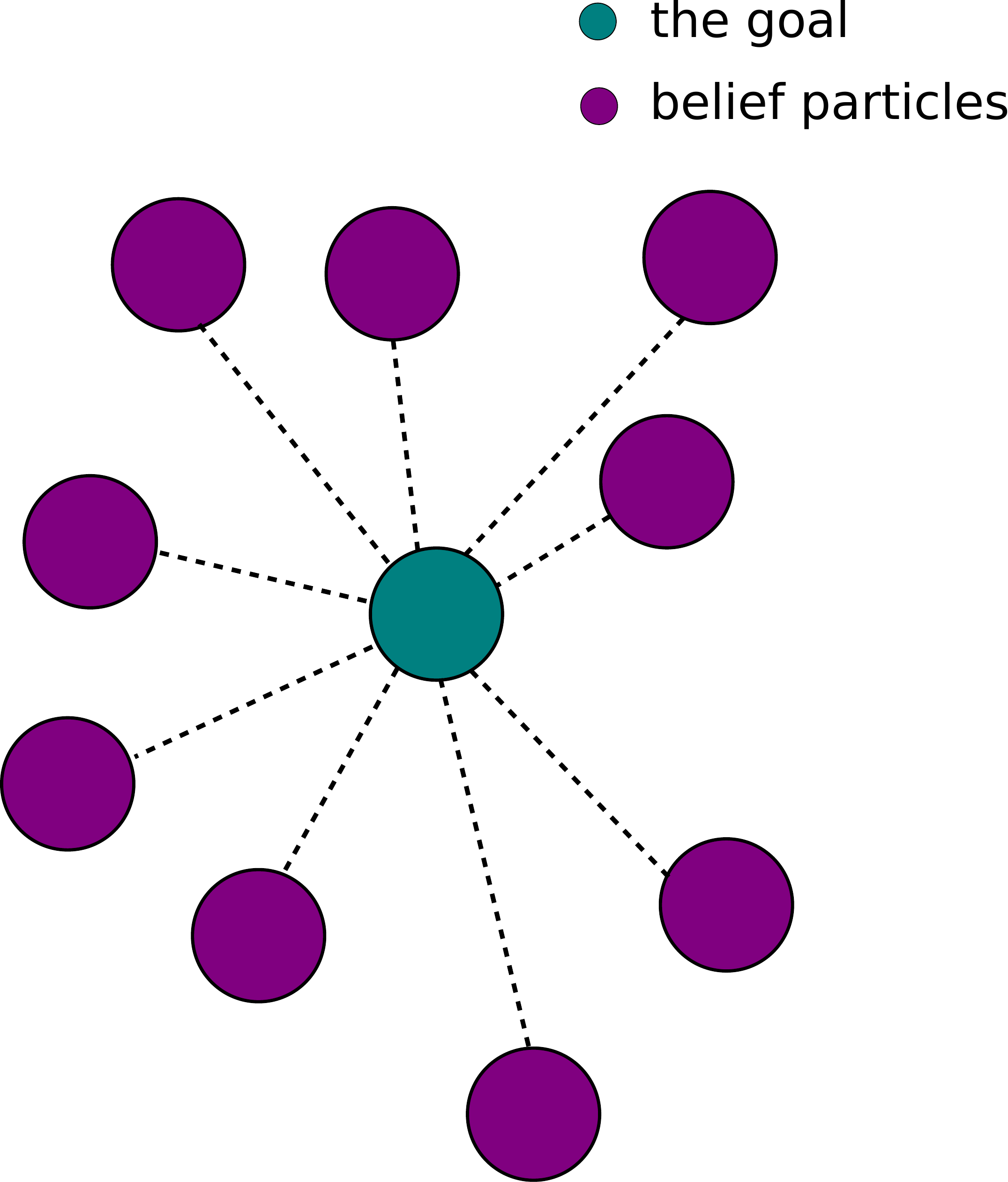}
		\subcaption{}
	\end{minipage}
	\hfill
	\begin{minipage}[t]{0.2\textwidth}
		\centering 
		\includegraphics[width=\textwidth]{./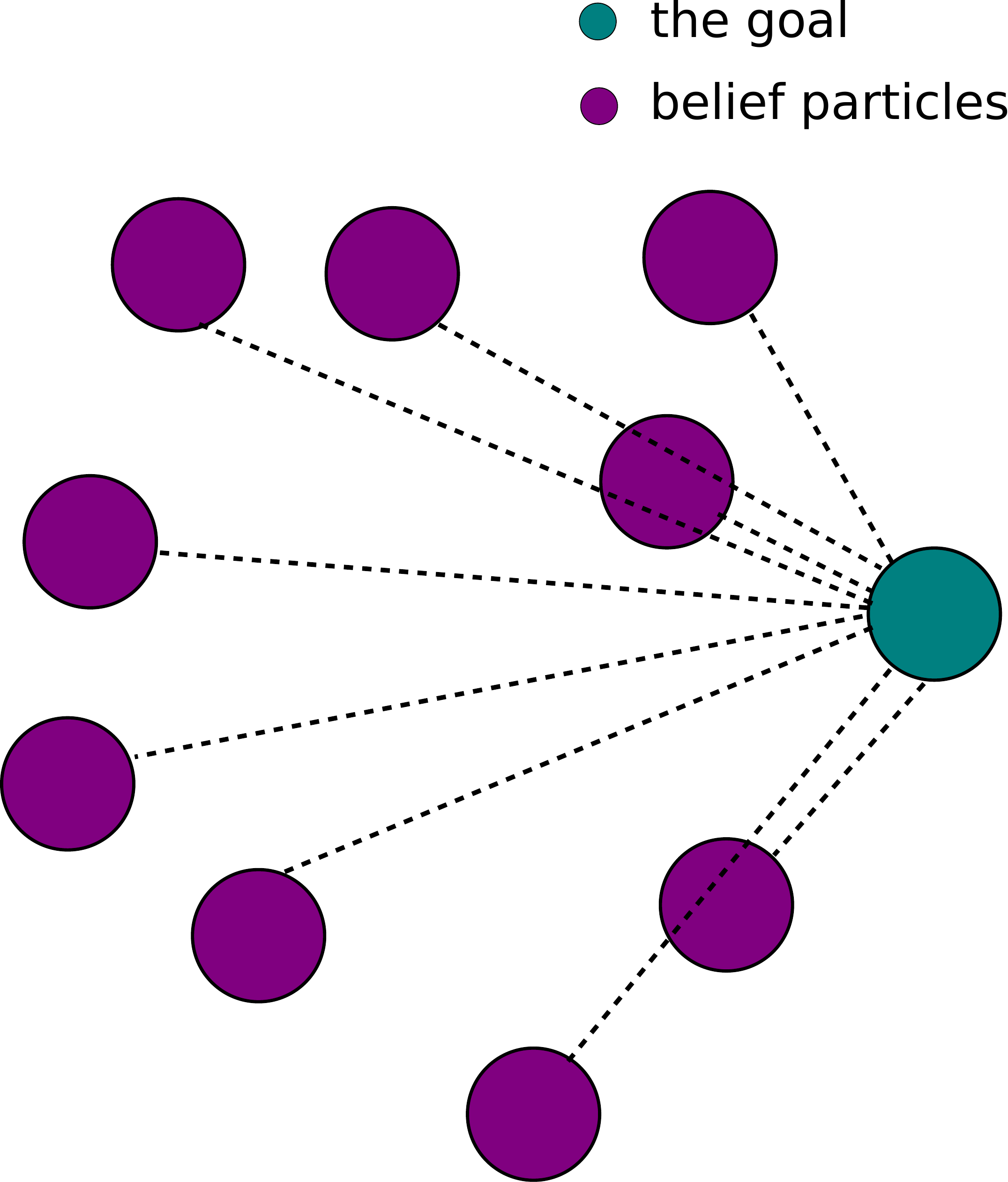}
		\subcaption{}
	\end{minipage}
		\hfill
	\begin{minipage}[t]{0.2\textwidth}
		\centering 
		\includegraphics[width=\textwidth]{./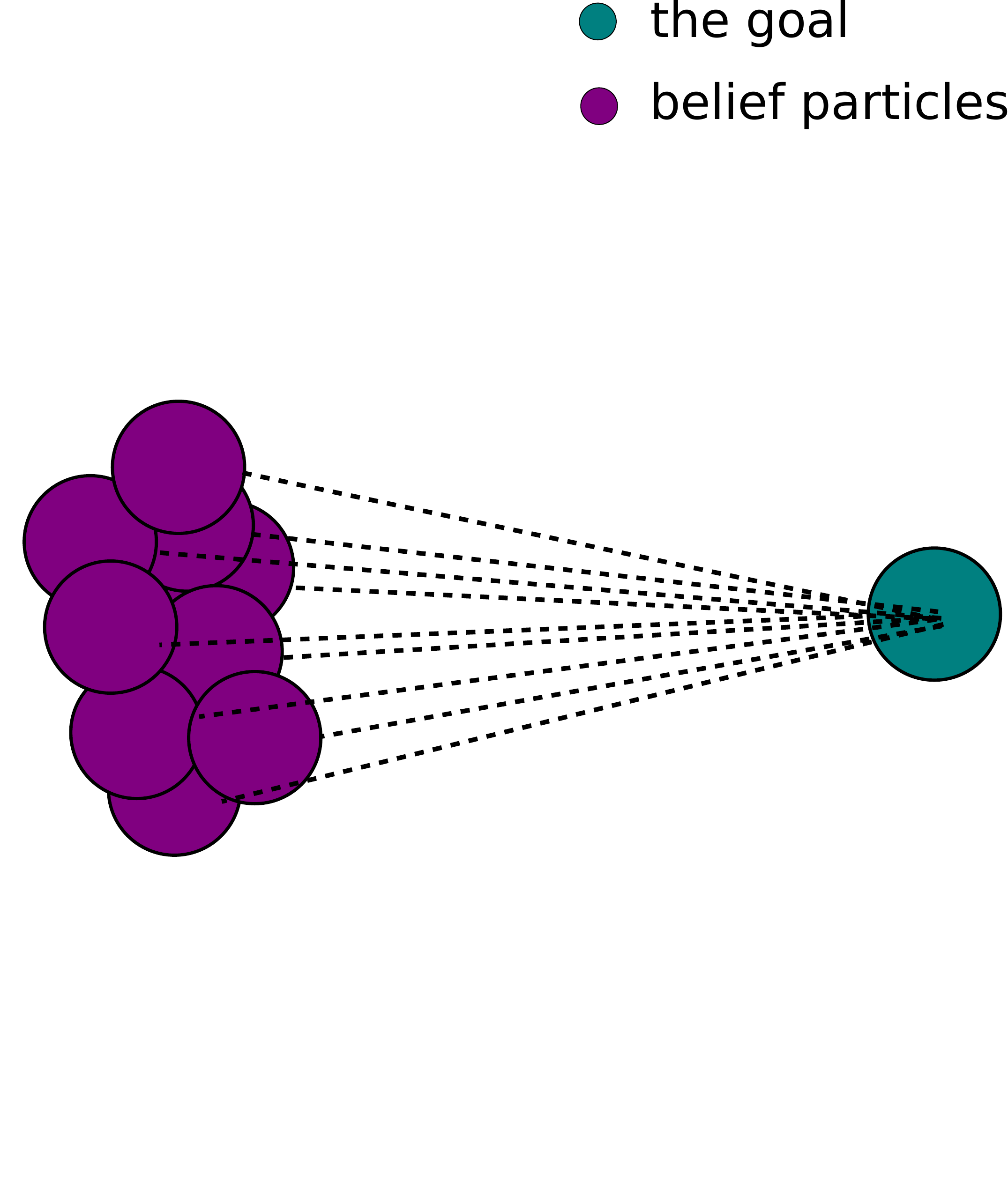}
		\subcaption{}
	\end{minipage}  
	\caption{Geometrical visualization of the natural belief uncertainty measure imprinted in the mean distance to the goal. \textbf{(a)} Less spread results in lowering all the distances, thereby the mean.  \textbf{(b)} The reciprocal situation. \textbf{(c)} Another situation, here, to decrease the mean distance to the goal, one has to reduce the spread and the distance between the expected value of the belief and the goal. \textbf{(d)} The spread is decreased, but the distance between the expected value of the belief and the goal is large (Appendix \ref{sec:DistanceToGoal}).} \label{fig:DistanceToGoal} 
\end{figure} 
In this section, we discuss in depth why the mean distance to goal intrinsically accounts for belief uncertainty. We show a geometrical visualization in Fig.~\ref{fig:DistanceToGoal}. Since the distance is no negative, the less spread of belief implies lower distances and vise versa.   Further, let us show that algebraically.

\begin{thm}
	Let $y$ be an arbitrary distributed random vector with $\mu_y$  and $\Sigma_y$  being the expected value and covariance matrix of $y$, respectively; and let $\Lambda$ be arbitrary matrix.  The following relation is correct
	\begin{align}
		\mathbb{E}\left[y^T \Lambda y\right] = \operatorname {tr}\left[\Lambda \Sigma_y \right]+\mu_y ^{T}\Lambda \mu_y, 
	\end{align} 
	where by $\operatorname {tr}$ we denote the trace operator. 
\end{thm}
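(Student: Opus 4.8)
The plan is to reduce the quadratic form to a trace expression and then exploit linearity of expectation together with the cyclic invariance of the trace. The starting observation is that $y^T \Lambda y$ is a scalar, hence equal to its own trace. Applying the cyclic property $\operatorname{tr}(ABC) = \operatorname{tr}(CAB)$, I would rewrite
\begin{align}
	y^T \Lambda y = \operatorname{tr}\left[y^T \Lambda y\right] = \operatorname{tr}\left[\Lambda y y^T\right]. \nonumber
\end{align}
This is the key maneuver: it converts the bilinear dependence on $y$ into a linear dependence on the outer product $y y^T$, which is exactly the object whose expectation we can control.

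Next I would take the expectation of both sides and push it inside the trace, which is legitimate since the trace is a finite linear combination of the matrix entries and expectation is linear. This yields $\mathbb{E}[y^T \Lambda y] = \operatorname{tr}\left[\Lambda\, \mathbb{E}[y y^T]\right]$. The remaining ingredient is the standard second-moment identity $\mathbb{E}[y y^T] = \Sigma_y + \mu_y \mu_y^T$, which follows directly from the definition $\Sigma_y = \mathbb{E}[(y - \mu_y)(y - \mu_y)^T]$ by expanding the product and using $\mathbb{E}[y] = \mu_y$.

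Substituting this identity and splitting the trace over the sum gives
\begin{align}
	\mathbb{E}\left[y^T \Lambda y\right] = \operatorname{tr}\left[\Lambda \Sigma_y\right] + \operatorname{tr}\left[\Lambda \mu_y \mu_y^T\right]. \nonumber
\end{align}
To finish, I would apply the cyclic property once more to the second term: since $\mu_y^T \Lambda \mu_y$ is a scalar, $\operatorname{tr}\left[\Lambda \mu_y \mu_y^T\right] = \operatorname{tr}\left[\mu_y^T \Lambda \mu_y\right] = \mu_y^T \Lambda \mu_y$, which delivers the claimed expression.

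I do not anticipate a genuine obstacle here, as the result is a classical moment identity; the only point requiring a word of care is the interchange of expectation and trace, which is justified by linearity and the finite dimensionality of $y$. No symmetry or definiteness assumption on $\Lambda$ is needed, and the distribution of $y$ enters only through its first two moments, so the statement holds for the arbitrarily distributed $y$ as claimed.
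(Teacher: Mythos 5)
Your proof is correct and follows essentially the same route as the paper's: writing the scalar $y^T \Lambda y$ as its own trace, applying the cyclic property to get $\operatorname{tr}\left[\Lambda y y^T\right]$, exchanging expectation and trace by linearity, invoking $\mathbb{E}\left[y y^T\right] = \Sigma_y + \mu_y \mu_y^T$, and cycling once more to recover $\mu_y^T \Lambda \mu_y$. The only difference is cosmetic: you explicitly justify the expectation--trace interchange and note that no symmetry of $\Lambda$ is needed, which the paper leaves implicit.
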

\begin{proof}
Since the quadratic form is a scalar quantity,  $y^{T}\Lambda y =\operatorname {tr} (y ^{T}\Lambda y )$. Next, by the cyclic property of the trace operator,
\begin{align} 
	\mathbb{E}[\operatorname {tr} (y ^{T}\Lambda y )]=\mathbb{E} [\operatorname {tr} (\Lambda y y^{T})].
\end{align}
Since the trace operator is a linear combination of the components of the matrix, it therefore follows from the linearity of the expectation operator that
\begin{align}
	\mathbb{E} \left[\operatorname {tr} (\Lambda y y^{T})\right]=\operatorname {tr} (\Lambda \mathbb{E} (yy^{T})).
\end{align}
A standard property of variances then tells us that this is
\begin{align}
	\operatorname {tr} (\Lambda (\Sigma_x +\mu_x \mu_x^{T})).
\end{align}
Applying the cyclic property of the trace operator again, we get
\begin{align}
	\operatorname {tr} (\Lambda \Sigma_y )+\operatorname {tr} (\Lambda \mu_y \mu_y^{T})=\operatorname {tr} (\Lambda \Sigma_y )+\operatorname {tr} (\mu_y^{T}\Lambda \mu_y )=\operatorname {tr} (\Lambda \Sigma_y )+\mu_y^{T}\Lambda \mu_y .
\end{align}	
\end{proof}
\noindent Now, we set $y=x-x^g$, where $x \sim b$ with the mean $\mu_x$ and covariance matrix $\Sigma_x$;  $x^g$ is the deterministic goal location. Recall that covariance matrix is invariant to the deterministic translational shifts of a random vector, so $\Sigma_{x-x^g}=\Sigma_x$. Moreover, by setting $\Lambda = I$ we obtain

\begin{align}
	\mathbb{E}\left[y^T \Lambda y\right] = \mathbb{E}\left[(x-x^g)^T I (x-x^g)\right] = \mathbb{E}\left[\|x-x^g\|^2_2\right] =   \operatorname {tr}\left[\Sigma_x \right]+\|\mu_x-x^g\|^2_2. 
\end{align} 
We arrived at the desired result. As we observe in Fig.~\ref{fig:DistanceToGoal}, the trace of the covariance matrix controls the spread of the belief in the first summand; the second summand is the distance between the expected value of the belief.	

\section{Necessary Condition for Feasibility of Chance Constraint (Lemma \ref{lem:NecessaryChance})} \label{sec:ExecutionRiskBound}
In this section, we develop necessary condition for feasibility of chance constraint from \cite{Santana16aaai}.  Through importance sampling  we extend the condition presented in \cite{Santana16aaai} to continuous spaces in terms of states and the observations. Since all the beliefs in this section are obtained from safe belief with an action and observation, to remove clutter let us  relinquish the bar notation. 
\begin{align}
	\prob\Bigg(\mathbf{1}_{\left\{\tau \in  \times_{i=0}^L \mathcal{X}_{k+i}^{\text{safe}}\right\}}| b_k, \pi\Bigg) = \prob\Bigg(\bigwedge_{i=k+1}^{k+L}\mathbf{1}\left\{ x_i\!\in\!  \mathcal{X}^{\text{safe}}_i\right\}| \mathbf{1}\left\{ x_k\!\in\!  \mathcal{X}^{\text{safe}}_k\right\}, b_k, \pi\Bigg) \cdot \prob\Bigg(\mathbf{1}\left\{ x_k\!\in\!  \mathcal{X}^{\text{safe}}_k\right\} | b_k \Bigg)
\end{align}

\begin{align}
	& \prob\Bigg(\bigwedge_{i=k+1}^{k+L}\mathbf{1}\left\{ x_i\!\in\!  \mathcal{X}^{\text{safe}}_i\right\}| \mathbf{1}\left\{ x_k\!\in\!  \mathcal{X}^{\text{safe}}_k\right\}, b_k, \pi\Bigg) = \nonumber\\
	& \int_{b_{k+1}}\prob\Bigg(\bigwedge_{i=k+1}^{k+L}\mathbf{1}\left\{ x_i\!\in\!  \mathcal{X}^{\text{safe}}_i\right\}| b_{k+1}, \mathbf{1}\left\{ x_k\!\in\!  \mathcal{X}^{\text{safe}}_k\right\}, b_k, \pi\Bigg)\prob\Bigg(b_{k+1} |  \mathbf{1}\left\{ x_k\!\in\!  \mathcal{X}^{\text{safe}}_k\right\} b_k, \pi\Bigg)\mathrm{d}b_{k+1}=\\
	&\int_{b_{k+1}} \prob\Bigg(b_{k+1} |  \mathbf{1}\left\{ x_k\!\in\!  \mathcal{X}^{\text{safe}}_k\right\}, b_k, \pi\Bigg) \prob\Bigg(\bigwedge_{i=k+1}^{k+L}\mathbf{1}\left\{ x_i\!\in\!  \mathcal{X}^{\text{safe}}_i\right\}| b_{k+1}, \pi\Bigg)\mathrm{d}b_{k+1} = \\
	&\int_{b_{k+1}} \prob\Bigg(b_{k+1} |  \mathbf{1}\left\{ x_k\!\in\!  \mathcal{X}^{\text{safe}}_k\right\}, b_k, \pi\Bigg)\prob\Bigg(\mathbf{1}_{\left\{\tau \in  \times_{i=1}^L \mathcal{X}_{k+i}^{\text{safe}}\right\}}| b_{k+1}, \pi\Bigg) \mathrm{d}b_{k+1}
\end{align}
Following the notations in \cite{Santana16aaai} 
\begin{align}
	&\mathrm{er}(b_{k};\pi) \bydef 1 -\prob(\mathbf{1}_{\left\{\tau \in  \times_{i=0}^L \mathcal{X}_{k+i}^{\text{safe}}\right\}}| b_{k}, \pi) \leq \Delta \bydef 1-\delta, 
\end{align}
Similar 
\begin{align}
	\prob\Bigg(\mathbf{1}\left\{ x_k\!\in\!  \mathcal{X}^{\text{safe}}_k\right\} | b_k \Bigg) = \int_{x_k} b_k(x_k) \mathbf{1}\left\{ x_k\!\in\!  \mathcal{X}^{\text{safe}}_k\right\}\mathrm{d} x_k  = 1-r_b(b_k)
\end{align}
We have that 
\begin{align}
	&\mathrm{er}(b_{k};\pi) = 1 - (1- r_b(b_k))\cdot \int_{b_{k+1}} \prob\Bigg(b_{k+1} |  \mathbf{1}\left\{ x_k\!\in\!  \mathcal{X}^{\text{safe}}_k\right\}, b_k, \pi\Bigg)\prob\Bigg(\mathbf{1}_{\left\{\tau \in  \times_{i=1}^L \mathcal{X}_{k+i}^{\text{safe}}\right\}}| b_{k+1}, \pi\Bigg) \mathrm{d}b_{k+1} = \\
	&= 1 - (1- r_b(b_k))\cdot \int_{z_{k+1}} \prob\Bigg(z_{k+1} |  \mathbf{1}\left\{ x_k\!\in\!  \mathcal{X}^{\text{safe}}_k\right\}, b_k, \pi\Bigg)(1- \mathrm{er}(b_{k+1},\pi)) \mathrm{d}z_{k+1} = \\
	& 1 - (1- r_b(b_k))\cdot \Bigg(1-\int_{z_{k+1}} \prob\Bigg(z_{k+1} |  \mathbf{1}\left\{ x_k\!\in\!  \mathcal{X}^{\text{safe}}_k\right\}, b_k, \pi\Bigg) \mathrm{er}(b_{k+1},\pi) \mathrm{d}z_{k+1}\Bigg) = \\
	&r_b(b_k) + (1 -r_{b}(b_k) )\int_{z_{k+1}} \prob\Bigg(z_{k+1} |  \mathbf{1}\left\{ x_k\!\in\!  \mathcal{X}^{\text{safe}}_k\right\}, b_k, \pi\Bigg) \mathrm{er}(b_{k+1},\pi) \mathrm{d}z_{k+1}
\end{align}
Suppose we approximate the above integral by samples from 	$\probd(z_{k+1} =  z^j_{k+1}| b_k, a_k)$. We obtain, using \eqref{eq:ImportanceApprox}
\begin{align}
	&\mathrm{er}(b_{k};\pi) =  \nonumber\\
	&r_b(b_k) + (1 -r_{b}(b_k) )\int_{z_{k+1}}\frac{1}{\sum_{j=1}^{m} w^{\bar{z},j}_{k+1}}\sum_{j=1}^{m} w^{\bar{z},j}_{k+1} \delta(z_{k+1}- z^j_{k+1})  \mathrm{er}(b_{k+1},\pi) \mathrm{d}z_{k+1} = \\
	&r_b(b_k) + (1 -r_{b}(b_k) )\frac{1}{\sum_{j=1}^{m} w^{\bar{z},j}_{k+1}}\sum_{j=1}^{m} w^{\bar{z},j}_{k+1}   \mathrm{er}(b_{k+1}(z^j_{k+1}),\pi) 
\end{align}
Suppose that $0\leq \Delta \leq 1$ and $\mathrm{er}(b_{k};\pi) \leq \Delta$. From now on for clarity suppose the weights are already normalized. We choose some child $j = i$ and arrive at 

\begin{align}
	&r_b(b_k) + (1 -r_{b}(b_k) )\sum_{j=1}^{m} w^{\bar{z},j}_{k+1}   \mathrm{er}(b_{k+1}(z^j_{k+1}),\pi) \leq \Delta,  
\end{align}
that is 
\begin{align}
	&    \mathrm{er}(b_{k+1}(z^i_{k+1}),\pi) \leq \frac{1}{w^{\bar{z},i}_{k+1}}\Bigg(\frac{\Delta - r_b(b_k)}{(1 -r_{b}(b_k) )} - \sum_{\substack{{j=1} \\ {j\neq i}}}^{m} w^{\bar{z},j}_{k+1}   \mathrm{er}(b_{k+1}(z^j_{k+1}),\pi)\Bigg),  
\end{align} 
The existence of the above equation requires  if $r_{b}(b_k) <1$ and $w^{\bar{z},i}_{k+1} \neq 0$ whenever $\probd(z_{k+1} = z^i_{k+1}|b_k, a_k) > 0$.    

Let us show that these conditions are equivalent.  

If $r_b(b_k) = 1$, so $\prob(\mathbf{1}\left\{ x_k\!\in\!  \mathcal{X}^{\text{safe}}_k\right\}|b_k) = 0 $. Namely, at each $x_k$ or $b_k(x_k) =0 $ or $x_k \notin \mathcal{X}^{\text{safe}}_k$.  From 
\begin{align}
	\probd(x'|b, a,\mathbf{1}\left\{ x \!\in\!  \mathcal{X}^{\text{safe}}\right\})= \int_{x \in \mathcal{X}}\mathbf{1}\left\{ x\!\in\!  \mathcal{X}^{\text{safe}}\right\}\probd_T(x'|x,a)b(x)\mathrm{d}x  = 0
\end{align}
we observe that in this case the conditional $\probd(\bar{z}_{i+1} =  z^i_{k+1}| b_k, \mathbf{1}\{x_k \in \mathcal{X}_k^{\mathrm{safe}}\}, a_k) = 0$.  

Now we show the inverse relation. 
Suppose  $w^{\bar{z},i}_{k+1} = 0$, we have that $\probd(\bar{z}_{i+1} =  z^i_{k+1}| b_k, \mathbf{1}\{x_k \in \mathcal{X}_k^{\mathrm{safe}}\}, a_k) = 0$. This implies that  
 \begin{align}
 	\!\!\probd\bigg(\bar{z}_{i+1} =  z^i_{k+1}| b_k, \mathbf{1}\{x_k \in \mathcal{X}_k^{\mathrm{safe}}\}, a_k\bigg)\!\! = \!\!\!\!\!\!\int\limits_{x_{k+1} \in \mathcal{X}}  \!\!\!\!\!\!\probd_Z(\bar{z}_{i+1} =  z^i_{k+1}|x_{k+1}) \probd(x_{k+1}|b_k, a_k,\mathbf{1}\left\{ x_k\!\in\!  \mathcal{X}^{\text{safe}}\right\})\mathrm{d}x_{k+1} = 0 . \!
 \end{align}
We obtained again that   $\probd(x_{k+1}|b_k, a_k,\mathbf{1}\left\{ x_k\!\in\!  \mathcal{X}^{\text{safe}}\right\}) = 0 \quad \forall x_{k+1}$, so we arrived again that or $x_{k} \notin \mathcal{X}^{\text{safe}}_k$ or $b_k(x_k) = 0$.
Meaning that the system is guaranteed to be in a constraint-violating path
at time $k$, yielding $r_b(b_k)=1$. 

If $r_b(b_k) =1$ we prune the action resulting in this belief (See, line $7$ in Alg.~\ref{alg:constrainedSSbaselineImportance} and Alg.~\ref{alg:constrainedSSbaseline} ).
Note that in reality we approximate the belief by possibly weighted samples as in \eqref{eq:BeliefSampleApprox}. However the arguments presented above regarding the existence of the pruning condition are valid. 

To do pruning using technique from \cite{Santana16aaai} we note that $r_b(b_{k+1}) \leq \mathrm{er}(b_{k+1};\pi)$ and got that 
\begin{align}
	&    r_b(b^i_{k+1}) \leq \frac{1}{w^{\bar{z},i}_{k+1}}\Bigg(\frac{\Delta - r_b(b_k)}{(1 -r_{b}(b_k) )} - \sum_{\substack{{j=1} \\ {j\neq i}}}^{m} w^{\bar{z},j}_{k+1} r_b(b^j_{k+1})\Bigg).
\end{align}
If  $\mathrm{er}(b_{k};\pi) \leq \Delta$ the above shall hold for every child of $b_k$.
\section{Influence of Safety Event to the Objective (Section \ref{sec:ChanceApprox})} \label{sec:ObjectivesBMDP}
To rigorously derive the impact of safety event we shall start from Belief-MDP (BMDP). This is because the overall impact starts from BMDP motion model. 
	\begin{equation}
	\begin{gathered}
	\underset{\bar{b}_{\ell+1}}{\mathbb{E}}\Big[ \rho(\bar{b}_{\ell+1})  \Big|a_{\ell},b_{\ell}, \mathbf{1}\{x_{\ell} \in \mathcal{X}^{\mathrm{safe}}\} \Big] = \underset{\bar{z}_{\ell+1}}{\int} \rho(\psi(b^{\mathrm{safe}}_{\ell}, a_{\ell},  \bar{z}_{\ell+1} ))
	{\color{inkscapePurple}{\probd(\bar{z}_{\ell+1}| b_{\ell}, \mathbf{1}\{x_{\ell} \in \mathcal{X}^{\mathrm{safe}}\}, a_{\ell})}} \mathrm{d} \bar{z}_{\ell+1}
	\end{gathered}
	\end{equation}
	\begin{equation}
	\begin{gathered}
	\underset{b_{\ell+1}}{\mathbb{E}}\Big[ \rho(b_{\ell+1})  \Big|a_{\ell},b_{\ell} \Big] =
	\underset{z_{\ell+1}}{\int}\rho(\psi(b_{\ell}, a_{\ell},  z_{\ell+1} )) {\color{inkscapePurple}{\probd(z_{\ell+1}| b_{\ell}, a_{\ell})}} \mathrm{d} z_{\ell+1}
	\end{gathered}
	\end{equation}

Further we prove the above relations. 

\begin{equation}
	\begin{gathered}
		\underset{\bar{b}_{\ell+1}}{\mathbb{E}}\Big[ \rho(\bar{b}_{\ell+1})  \Big|a_{\ell},b_{\ell}, \mathbf{1}\{x_{\ell} \in \mathcal{X}^{\mathrm{safe}}\} \Big] = \int_{\bar{b}_{\ell+1}}\rho(\bar{b}_{\ell+1}) \probd(\bar{b}_{\ell+1}| b_{\ell},\mathbf{1}\{x_{\ell} \in \mathcal{X}^{\mathrm{safe}}\} , a_{\ell}) \mathrm{d} \bar{b}_{\ell+1}=\\
		\underset{\bar{b}_{\ell+1}}{\int}\rho(\bar{b}_{\ell+1}) \Bigg( \underset{\bar{z}_{\ell+1}}{\int}\probd(\bar{b}_{\ell+1}, \bar{z}_{\ell+1}| b_{\ell}, a_{\ell},\mathbf{1}\{x_{\ell} \in \mathcal{X}^{\mathrm{safe}}\} ) \mathrm{d} z_{\ell+1}\Bigg) \mathrm{d}b_{\ell+1}=\\
		\underset{\bar{z}_{\ell+1}}{\int}\Bigg(\underset{\bar{b}_{\ell+1}}{\int}\rho(\bar{b}_{\ell+1}) \probd(\bar{b}_{\ell+1}| b_{\ell}, \mathbf{1}\{x_{\ell} \in \mathcal{X}^{\mathrm{safe}}\}, a_{\ell},  \bar{z}_{\ell+1} )\mathrm{d} \bar{b}_{\ell+1}\Bigg)  {\color{inkscapePurple}{\probd(\bar{z}_{\ell+1}| b_{\ell}, \mathbf{1}\{x_{\ell} \in \mathcal{X}^{\mathrm{safe}}\}, a_{\ell})}} \mathrm{d} \bar{z}_{\ell+1}
	\end{gathered}
\end{equation}
with 
\begin{equation}
	\begin{gathered}
		\underset{\bar{b}_{\ell+1}}{\int}\rho(\bar{b}_{\ell+1}) \probd(\bar{b}_{\ell+1}| b_{\ell}, \mathbf{1}\{x_{\ell} \in \mathcal{X}^{\mathrm{safe}}\}, a_{\ell},  \bar{z}_{\ell+1} )\mathrm{d} \bar{b}_{\ell+1} = \\
		\int_{\bar{b}_{\ell+1}}  \rho(\bar{b}_{\ell+1}) \delta(\bar{b}_{\ell+1} - \psi(b^{\text{safe}}_{\ell}, a_{\ell},  \bar{z}_{\ell+1})) \mathrm{d} \bar{b}_{\ell+1} = \rho(\psi(b^{\mathrm{safe}}_{\ell}, a_{\ell},  \bar{z}_{\ell+1} ))
	\end{gathered}
\end{equation}

versus 
\begin{equation}
	\begin{gathered}
		\underset{b_{\ell+1}}{\mathbb{E}}\Big[ \rho(b_{\ell+1})  \Big|a_{\ell},b_{\ell} \Big] =\underset{b_{\ell+1}}{\int}\rho(b_{\ell+1}) \probd(b_{\ell+1}| b_{\ell}, a_{\ell}) \mathrm{d} b_{\ell+1}= \underset{b_{\ell+1}}{\int}\rho(b_{\ell+1})\Bigg(\underset{z_{\ell+1}}{\int} \probd(b_{\ell+1}, z_{\ell+1}| b_{\ell}, a_{\ell} ) \mathrm{d} z_{\ell+1}\Bigg)\mathrm{d} b_{\ell+1}=\\
		\underset{z_{\ell+1}}{\int}\Bigg(\underset{b_{\ell+1}}{\int}\rho(b_{\ell+1}) \probd(b_{\ell+1}| b_{\ell}, a_{\ell},  z_{\ell+1} )\mathrm{d} b_{\ell+1}\Bigg)  {\color{inkscapePurple}{\probd(z_{\ell+1}| b_{\ell}, a_{\ell})}} \mathrm{d} z_{\ell+1}
	\end{gathered}
\end{equation}
with 
\begin{equation}
	\begin{gathered}
		\underset{b_{\ell+1}}{\int}\rho(b_{\ell+1}) \probd(b_{\ell+1}| b_{\ell}, a_{\ell},  z_{\ell+1} )\mathrm{d} b_{\ell+1} =\underset{b_{\ell+1}}{\int}\rho(b_{\ell+1}) \delta(b_{\ell+1} - \psi(b_{\ell}, a_{\ell},  z_{\ell+1} )\mathrm{d} b_{\ell+1} = \rho(\psi(b_{\ell}, a_{\ell},  z_{\ell+1} )).
	\end{gathered}
\end{equation}
\qed

\end{appendices}	
\bibliographystyle{plain}	

\begin{thebibliography}{10}

\bibitem{Araya10nips}
Mauricio Araya, Olivier Buffet, Vincent Thomas, and Fran{\c{c}}cois Charpillet.
\newblock A pomdp extension with belief-dependent rewards.
\newblock In {\em Advances in Neural Information Processing Systems (NIPS)},
  pages 64--72, 2010.

\bibitem{Barenboim22ijcai}
M.~Barenboim and V.~Indelman.
\newblock Adaptive information belief space planning.
\newblock In {\em the 31st International Joint Conference on Artificial
  Intelligence and the 25th European Conference on Artificial Intelligence
  (IJCAI-ECAI)}, July 2022.

\bibitem{Bry11icra}
A.~Bry and N.~Roy.
\newblock Rapidly-exploring random belief trees for motion planning under
  uncertainty.
\newblock In {\em IEEE Intl. Conf. on Robotics and Automation (ICRA)}, pages
  723--730, 2011.

\bibitem{Cadena16tro}
Cesar Cadena, Luca Carlone, Henry Carrillo, Yasir Latif, Davide Scaramuzza,
  Jose Neira, Ian~D Reid, and John~J Leonard.
\newblock Simultaneous localization and mapping: Present, future, and the
  robust-perception age.
\newblock {\em {IEEE} Trans. Robotics}, 32(6):1309 -- 1332, 2016.

\bibitem{Chow15nips}
Yinlam Chow, Aviv Tamar, Shie Mannor, and Marco Pavone.
\newblock Risk-sensitive and robust decision-making: a cvar optimization
  approach.
\newblock {\em Advances in Neural Information Processing Systems},
  28:1522--1530, 2015.

\bibitem{Dressel17icaps}
Louis Dressel and Mykel~J. Kochenderfer.
\newblock Efficient decision-theoretic target localization.
\newblock In Laura Barbulescu, Jeremy Frank, Mausam, and Stephen~F. Smith,
  editors, {\em Proceedings of the Twenty-Seventh International Conference on
  Automated Planning and Scheduling, {ICAPS} 2017, Pittsburgh, Pennsylvania,
  USA, June 18-23, 2017}, pages 70--78. {AAAI} Press, 2017.

\bibitem{Fehr18nips}
Mathieu Fehr, Olivier Buffet, Vincent Thomas, and Jilles Dibangoye.
\newblock rho-pomdps have lipschitz-continuous epsilon-optimal value functions.
\newblock In S.~Bengio, H.~Wallach, H.~Larochelle, K.~Grauman, N.~Cesa-Bianchi,
  and R.~Garnett, editors, {\em Advances in Neural Information Processing
  Systems 31}, pages 6933--6943. Curran Associates, Inc., 2018.

\bibitem{Frey20arxiv}
Kristoffer~M Frey, Ted~J Steiner, and Jonathan~P How.
\newblock Collision probabilities for continuous-time systems without sampling
  [with appendices].
\newblock {\em arXiv preprint arXiv:2006.01109}, 2020.

\bibitem{Hakobyan20icra}
Astghik Hakobyan and Insoon Yang.
\newblock Wasserstein distributionally robust motion planning and control with
  safety constraints using conditional value-at-risk.
\newblock In {\em 2020 IEEE International Conference on Robotics and Automation
  (ICRA)}, pages 490--496. IEEE, 2020.

\bibitem{Han22arxiv}
Weiqiao Han, Ashkan Jasour, and Brian Williams.
\newblock Non-gaussian risk bounded trajectory optimization for stochastic
  nonlinear systems in uncertain environments.
\newblock {\em arXiv preprint arXiv:2203.03038}, 2022.

\bibitem{Indelman15ijrr}
V.~Indelman, L.~Carlone, and F.~Dellaert.
\newblock Planning in the continuous domain: a generalized belief space
  approach for autonomous navigation in unknown environments.
\newblock {\em Intl. J. of Robotics Research}, 34(7):849--882, 2015.

\bibitem{Kearns02jml}
Michael Kearns, Yishay Mansour, and Andrew~Y Ng.
\newblock A sparse sampling algorithm for near-optimal planning in large markov
  decision processes.
\newblock {\em Machine learning}, 49(2):193--208, 2002.

\bibitem{Kochenderfer22book}
M.~Kochenderfer, T.~Wheeler, and K.~Wray.
\newblock {\em Algorithms for Decision Making}.
\newblock MIT Press, 2022.

\bibitem{Lee18nips}
Jongmin Lee, Geon-Hyeong Kim, Pascal Poupart, and Kee-Eung Kim.
\newblock Monte-carlo tree search for constrained pomdps.
\newblock {\em Advances in Neural Information Processing Systems}, 31, 2018.

\bibitem{Papadimitriou87math}
C.~Papadimitriou and J.~Tsitsiklis.
\newblock The complexity of {Markov} decision processes.
\newblock {\em Mathematics of operations research}, 12(3):441--450, 1987.

\bibitem{Placed22arxiv}
Julio~A Placed and Jos{\'e}~A Castellanos.
\newblock Enough is enough: Towards autonomous uncertainty-driven stopping
  criteria.
\newblock {\em arXiv preprint arXiv:2204.10631}, 2022.

\bibitem{Santana16aaai}
Pedro Santana, Sylvie Thi{\'e}baux, and Brian Williams.
\newblock Rao*: An algorithm for chance-constrained pomdp's.
\newblock In {\em Proceedings of the AAAI Conference on Artificial
  Intelligence}, volume~30, 2016.

\bibitem{Silver10nips}
David Silver and Joel Veness.
\newblock Monte-carlo planning in large pomdps.
\newblock In {\em Advances in Neural Information Processing Systems (NIPS)},
  pages 2164--2172, 2010.

\bibitem{Sunberg18icaps}
Zachary Sunberg and Mykel Kochenderfer.
\newblock Online algorithms for pomdps with continuous state, action, and
  observation spaces.
\newblock In {\em Proceedings of the International Conference on Automated
  Planning and Scheduling}, volume~28, 2018.

\bibitem{Sunberg17arxiv}
Zachary Sunberg and Mykel~J. Kochenderfer.
\newblock {POMCPOW:} an online algorithm for pomdps with continuous state,
  action, and observation spaces.
\newblock {\em CoRR}, abs/1709.06196, 2017.

\bibitem{Sztyglic21arxiv_b}
Ori Sztyglic, Andrey Zhitnikov, and Vadim Indelman.
\newblock Simplified belief-dependent reward mcts planning with guaranteed tree
  consistency.
\newblock {\em arXiv preprint arXiv:2105.14239}, 2021.

\bibitem{Thrun05book}
S.~Thrun, W.~Burgard, and D.~Fox.
\newblock {\em Probabilistic Robotics}.
\newblock The MIT press, Cambridge, MA, 2005.

\bibitem{Ye17jair}
Nan Ye, Adhiraj Somani, David Hsu, and Wee~Sun Lee.
\newblock Despot: Online pomdp planning with regularization.
\newblock {\em JAIR}, 58:231--266, 2017.

\bibitem{Zhitnikov22ai}
A.~Zhitnikov and V.~Indelman.
\newblock Simplified risk aware decision making with belief dependent rewards
  in partially observable domains.
\newblock {\em Artificial Intelligence, Special Issue on ``Risk-Aware
  Autonomous Systems: Theory and Practice"}, 2022.

\bibitem{Zhitnikov23arxiv}
Andrey Zhitnikov and Vadim Indelman.
\newblock Simplified continuous high dimensional belief space planning with
  adaptive probabilistic belief-dependent constraints.
\newblock {\em arXiv preprint arXiv:2302.06697}, 2023.

\end{thebibliography}

%
\end{document}